\documentclass{llncs}
\usepackage{llncsdoc}
\usepackage{enumitem}
\usepackage{graphicx}
\graphicspath{{images/} }
% Remove the image when I use dvi
\makeatletter
\def\temp{dvips.def}
\ifx\Gin@driver\temp
\def\Ginclude@graphics#1{\def\temp{#1}---image \expandafter\strip@prefix\meaning\temp---}
\fi
\makeatother

\usepackage{epsf}
\usepackage{epsfig}

\usepackage{amsmath}
\usepackage{amssymb}
\usepackage{color}

\newcommand{\ignore}[1]{}

\newcommand{\poly}{{\rm poly}}
\newcommand{\OPT}{{\rm OPT}}
\newcommand{\OPTNA}{{\rm OPT}_{{\rm NAD}}}
\newcommand{\OPTA}{{\rm OPT}_{{\rm AD}}}
\newcommand{\OPTHS}{{\rm OPT}^{{\rm HS}}}
\newcommand{\OPTCS}{{\rm OPT}^{{\rm CS}}}
\newcommand{\NAD}{{\rm NAD}}

\newcommand{\BHS}{{\rm BHS}}
\newcommand{\LV}{{\rm LV}}
\newcommand{\MC}{{\rm MC}}
\newcommand{\ETD}{{\rm ETD}}
\newcommand{\TD}{{\rm TD}}
\newcommand{\MQ}{{\rm MQ}}
\newcommand{\VCdim}{{\rm VCdim}}
\newcommand{\TP}{{\rm TP}}
\newcommand{\RAD}{{\rm RAD}}
\newcommand{\DT}{{\rm DT}}
\newcommand{\HS}{{\rm HS}}
\newcommand{\A}{{\rm AD}}
\newcommand{\NA}{{\rm NAD}}
\newcommand{\ET}{{\rm ET}}

\newcommand{\IT}{{\rm IT}}
\newcommand{\QF}{{\rm QF}}
\newcommand{\CDNF}{{\rm CDNF}}
\newcommand{\MCDNF}{{\rm MCDNF}}
\newcommand{\AD}{{\rm AD}}
\newcommand{\E}{{\bf E}}
\newcommand{\Var}{{\bf Var}}
\newcommand{\sign}{{\rm sign}}

\newcommand{\sline}{\mbox{-}}

\ignore{
\newtheorem{theorem}{Theorem}

\newtheorem{lemma}[theorem]{Lemma}

}

\begin{document}

\title{Exact Learning from an Honest Teacher \\ That Answers Membership Queries}
\author{Nader H. Bshouty}
\institute{Technion, Haifa, Israel\\
bshouty@cs.technion.ac.il}
\maketitle

\begin{abstract} Given a teacher that holds a function
$f:X\to R$ from some class of functions
$C$. The teacher can receive from the learner an element~$d$ in the domain $X$
(a query) and returns the value of the function in $d$, $f(d)\in R$. The learner goal
is to find $f$ with a minimum number of
queries, optimal time complexity, and optimal resources.

In this survey, we present some of the results known from the
literature, different techniques used, some new problems, and open problems.
\end{abstract}

\tableofcontents

\newpage

\section{Introduction}\label{Int}
Robert Dorfman's paper in 1943 introduced the field of Group Testing.
The motivation arose during the Second World War
when the United States Public Health Service and the Selective service embarked upon a large scale project.
The objective was to weed out all syphilitic men called up for induction.
However, syphilis testing back then was expensive and testing every soldier
individually would have been very cost heavy and inefficient. A basic breakdown of a test is:
Draw sample from a given individual,
perform required tests and
determine the presence or absence of syphilis.
Suppose we have $n$ soldiers. Then this method of testing leads to $n$ tests.
Our goal is to achieve effective testing in a
scenario where it does not make sense to test $100,000$ people to get (say) $10$ positives.
The feasibility of a more effective testing scheme hinges on the following property.
We can combine blood samples and test a combined sample together to
check if at least one soldier has syphilis~\cite{GT}.

Let $S$ be the set of the $n$ soldiers and let $I\subseteq S$ be the set of the sick soldiers.
Suppose we know that the number of sick soldiers, $|I|$, is bounded by some integer $d$.
If~$T$ is the set of soldiers for which their blood samples is combined, then
the test is positive if and only if $I\cap T$ is not empty. Thus, we can regard the set of sick soldiers
$I$ as a Boolean function $f_I:2^S\to \{0,1\}$ and the answer of the test ``Is $I\cap T$ is not empty''
as $f_I(T)=1$ if and only if
$I\cap T\not=\O$. The goal is to identify the function $f_I$ (and therefore the sick soldiers)
from a minimal number of substitutions (tests) and optimal time.
We can also identify the set of soldiers with the set $[n]:=\{1,2,\ldots,n\}$
and regard each test as an assignment $a\in \{0,1\}^n$, where
$a_i=1$ if and only if the $i$th soldier blood is in the test.
Then the set $S=\{0,1\}^n$ is the set of all possible tests. The set of sick soldiers $I\subseteq [n]$
corresponds to a Boolean function $f'_I:S\to \{0,1\}$ where
$f'_I(x_1,\ldots,x_n)=\bigvee_{i\in I}x_i$ and $\vee$ is the Boolean or (disjunction). So this problem is also
equivalent to the problem of identifying, a hidden
Boolean conjunction of up to $d$ variables, with a minimal number of substitutions and optimal time.

Another interesting problem is the problem of learning decision tree with a minimal number of queries.
Let's say one has a restaurant and she wants to learn each customer tastes preference in food.
For every customer, she offers a sample of a meal that was never ordered
by the customer before and then receives some feedback. The customer tastes preference depends on some attributes of the food.
For example, ``sweet'', ``sour'', ``salty'', ``umami'', ``bitter'', ``greasy'', ``hot'' etc. Those are the attributes.
The goal is to learn (find out) the customer tastes preference from a minimal number of samples.
Each sample can be regarded as a set of attributes.
The customer tastes preference is the objective function. This function depends on the attributes,
and the value of the function is the customer feedback. In many cases, the target function
can be described as a decision tree. See the example in Figure~\ref{Tree01}.

\begin{figure}
\centering
\includegraphics[trim = 0 1cm 0 1cm,width=0.9\textwidth]{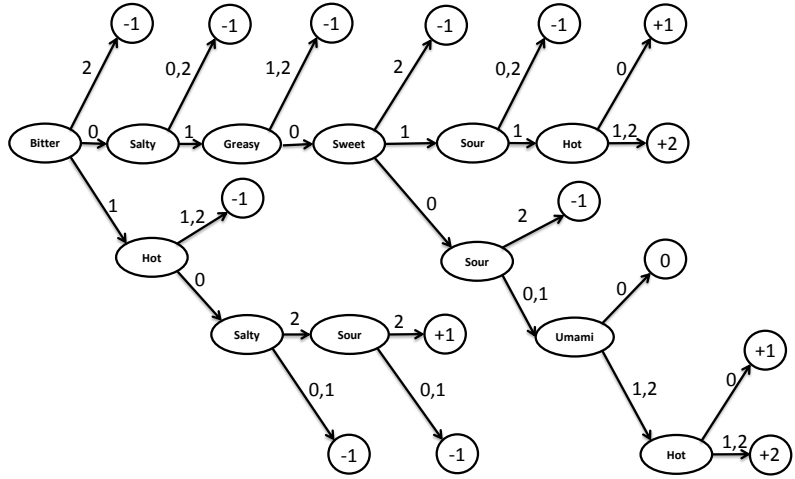}
\caption{A decision tree of the tastes preference of some customer. On the edges, $2,1,0$ is the degree
of flavor, ``very'', ``little'' and ``no'', respectively. In the leaves, $-1$ stands for ``not delicious'', $0$ for ``OK'', $+1$ for ``delicious'
and $+2$ for ``very delicious''.}
\label{Tree01}
\end{figure}

In the following subsection, we give a framework to the above problems
and many other similar problems.

\subsection{The Learning Model}
Let the {\it domain} ({\it instance space}) be the set $X_n\in\{X_j\}_{j\ge 1}$ and the {\it range} be the set $R_n\in\{R_j\}_{j\ge 1}$. Let $C_n$ be a {\it class of
representations of functions} $f:X_n\to R_n$ ({\it target class,
concept class}). Given a {\it teacher} ({\it black box, opponent player, responder})
that holds a ({\it target}) function ({\it concept}) $f$ from the class $C_n$. The {\it learner}
({\it player, questioner}) can ask the teacher {\it membership queries {\rm (for
Boolean functions. i.e. $R_n=\{0,1\}$)} {\rm or} substitution queries {\rm (for non-Boolean
functions)}}, i.e., it can send the teacher an element $d$ of the
domain $X_n$ and the teacher returns $f(d)$. The learner knows
$\{C_j,X_j,R_j\}_{j\ge 1}$. Our (the {\it learner}) ultimate goal is to
write an {\it (exact) learning algorithm} that learns $C=\cup_{j\ge
1}C_j$
with a minimum number of queries and optimal resources. That
is,
\begin{enumerate}
\item {\bf Input:} The learning algorithm receives the input $n$
and has access to an {\it oracle} MQ$_f$ that answers membership/substitution queries for
the target function~$f\in C_n$.

\item {\bf Query complexity:} It asks the teacher a minimum number
of membership/substitution queries.

\item {\bf Exact learning:} It either learns (finds, outputs) $g\in
C_{n}$ such that $g$ is logically equivalent to $f$, $g= f$, ({\it proper learning}) or
learns $h\in H_n\supseteq C_n$
such that $h= f$ ({\it non-proper learning from $H_n$}).

\item  {\bf Resources Complexity:} It runs in linear/polynomial/optimal time
complexity, optimal space complexity, an optimal number of random bits or/and other optimal resources.
\end{enumerate}

\noindent
The following decision problems are also considered in the literature
\begin{enumerate}
\item {\bf Equivalent test}: Given two teachers that have
two functions from $C_n$ each. Test whether the two functions
are equivalent.

\item {\bf Identity test from $H_n$}: Given a teacher that has
a function $f$ from $C_n$.
Given a function $h\in H_n$.
Test whether~$f= h$.

\item
{\bf Zero test}: Given a teacher that has
a function $f$ from $C_n$. Test whether~$f=~0$.
\end{enumerate}

The number of queries ({\it query complexity}) and the resources complexities are expressed as
functions in $n$ and some other parameters that depend on the class being learned.
In the literature, there are many other variations of the above problems, and we will mention some of them in this survey.

This problem has different names in different areas: Conditional
and unconditional Tests~\cite{M83}, Combinatorial Search~\cite{K73}, Interpolation
\cite{BT88}, Combinatorial Group Testing~\cite{DH00},
Exact Learning from Membership Queries \cite{A87}, Inferring \cite{GS92},
Identifying \cite{GKS93}, Test Recognition~\cite{GDPK06}, Active Learning \cite{S10},
Reconstruction~\cite{KS09b} and Guessing
Game \cite{W1}.
The decision problems are also called Testing, Functional
Verification, Teaching, Hitting Set, and when $f$ is polynomial, it is called
Black Box polynomial identity testing (PIT) \cite{S09,SY10}.

There are many other learning models, but, throughout this survey,
when we say exact learning or learning we mean exact learning from
membership queries or substitution queries only.

In this survey, we present some of the results known from the
literature, different techniques used and some open problems.

\subsection{Domain and Range}

Throughout this survey, we will omit the subscript $n$ from
$C_n,X_n$ and $R_n$. In principle, the domain $X$ and the range
$R$ can be any two sets, but since mathematical models can
explain many natural phenomena, most of the sets considered in
the literature are either finite or have some algebraic structure such as rings,
fields, integers and real numbers.

Therefore, the domains and ranges
considered in the literature are: The {\it Boolean set}
that can be either $\{0,1\}$, $\{-1,+1\}$, $\{+,-\}$  or
the binary field $F_2$. The {\it finite discrete set} can be any finite set or
a finite set with some algebraic structure such as the ring $Z_n$ of
integers modulo $n$, or the finite field $F_{q}$ with $q$ elements
($q$ is a power of prime). The {\it infinite discrete set} can be
any countably infinite set such as the set of integers $Z$ or the set of rational numbers $Q$. The
{\it infinite set} (uncountable)
can be any set with some algebraic structure such as the real numbers $\Re$ or the complex
numbers ${\cal C}$. Also, the cartesian product of any finite number
of the above sets is considered in the literature.

\subsection{Classes of Functions}\label{Def}
In this section, we will list the most studied classes in the literature,
in different fields of computer science.

{\bf Boolean Function Classes:} When the range of the function is $R_n=\{0,1\}$
we call the function {\it Boolean function}.
Here we will consider classes $C$ of Boolean functions when the
domain is $X_n=\{0,1\}^n$. For any
class defined below when we say that
$f$ is $C$, we mean that $f\in C$.
Abusing the terminology, every function $f\in C$ is
regarded as a representation of the function (formula)
and as a function, and we will use both interchangeably.

The most studied classes in the literature are:

\begin{enumerate}\setlength\itemsep{.5em}
\item {\bf Variable (Var)}: The class Var is the class of functions $\{x_1,\ldots,x_n\}$,
where for $a\in \{0,1\}^n$, $x_i(a)=a_i$. We also define Lit$=\{x_1,\ldots,x_n\}
\cup \{\bar x_1,\ldots,\bar x_n\}$ the class of {\it literals}.
Here $\bar x$ is the logic negation of $x$.

Learning the class Var is equivalent to playing the R\'{e}nyi-Ulam game, \cite{P02,R61,U76}.

\item {\bf $d$-Monotone Clause ($d$-MClause)} and {\bf MClause}: The
class $d$-MClause is the class of all functions
$f_S:\{0,1\}^n\to \{0,1\}$ where $S\subseteq [n]:=\{1,2,\ldots,n\}$
and $|S|\le d$ such that $f_S(x_1,\ldots,x_n)=1$ if and only if
$x_i=1$ for some $i\in S$. When $S=\emptyset$ then $f_\emptyset= 0$. Such function can also be expressed as
a logic {\it monotone clause} $f_S=x_{i_1}\vee \cdots \vee x_{i_k}$ where $S=\{i_1,\ldots,i_k\}$, $k\le d$ and $\vee$ is the logic ``or'' function (disjunction).
We denote $n$-MClause by {\bf MClause}.

Learning $d$-MClause is equivalent to group testing, \cite{D43,DH00,DH06}. See
many other equivalent problems in \cite{PR11} and reference within.

\item {\bf $d$-Clause} and {\bf Clause}:
The class $d$-Clause is the class of all functions
$f_{S,R}:\{0,1\}^n\to \{0,1\}$ where $S\cap R=\emptyset$, $S\cup R\subseteq [n]$
and $|S\cup R|\le d$ such that $f_{S,R}(x_1,\ldots,x_n)=1$ if and only if
$x_i=1$ for some $i\in S$ or $x_j=0$ for some $j\in R$.
Such function can be expressed as
a logic {\it clause}
$f_{S,R}=x_{i_1}\vee \cdots \vee x_{i_k}\vee \bar x_{j_1}\vee \cdots \vee \bar x_{j_r}$ where $S=\{i_1,\ldots,i_k\}$, $R=\{j_1,\ldots,j_r\}$,
and $r+k\le d$.
We denote $n$-Clause by {\bf Clause}.

\item \label{Doo} {\bf $d$-Monotone Term ($d$-MTerm)},
{\bf $d$-Term ($d$-Term)},
{\bf MTerm} and {\bf Term}:
The same as the above classes, but replace $\vee$ with the logic ``and'' function $\wedge$
(Conjunction).
The functions in MTerm are sometimes called {\it monomials},
and the class MTerm is also denoted by {\bf Monomial}. That is, a monomial is a conjunction of variables, i.e., $x_{j_1}\wedge x_{j_2}\wedge \cdots \wedge x_{j_r}$
where $1\le j_1<j_2<\cdots<j_r\le n$. Here we will sometimes use the
arithmetic $\times$ of the field $F_2$ for $\wedge$ and write
$x_{j_1}\wedge x_{j_2}\wedge \cdots \wedge x_{j_r}$ as $x_{j_1} x_{j_2} \cdots  x_{j_r}$.

For a class $C$, the dual class of $C$ is the class
$$C^D=\left\{\left.\overline{f(\bar x_1,\ldots,\bar x_n)}\ \right| \ f\in C\right\}.$$ Obviously, $(C^D)^D=C$, $d$-Clause$^D$=$d$-Term and $d$-MClause$^D$=$d$-MTerm.

\item {\bf $d$-XOR} and {\bf XOR}:
The same as the $d$-Term class, but replace $\wedge$ with the logic exclusive or function $\oplus$.
Here, we will instead use the arithmetic $+$ of the finite field $F_2=\{0,1\}$. Since $\bar x=x+1$, every function
in XOR is of the form $f=x_{i_1}+\cdots+x_{i_k}+\xi$ where $1\le i_1<i_2<\cdots<i_k\le n$ and $\xi\in\{0,1\}$.

\item {\bf $d$-Junta}: Let $f:\{0,1\}^n\to \{0,1\}$.
A variable $x_i$ is said to be {\it relevant} in $f$ if
there are two assignments $a,b\in \{0,1\}^n$ such that $a_i\not=b_i$, for all $j\not=i$ we have $a_j=b_j$,
and $f(a)\not= f(b)$.
The class $d$-Junta is the class
of all Boolean functions with at most $d$ relevant variable.
This function can be represented by a truth table of size $2^d$
of all the relevant variables.

\item\label{MF}  {\bf $d$-MJunta}: For two assignments $a,b\in \{0,1\}^n$ we write $a\le b$ if
for every $i$, $a_i\le b_i$. A Boolean function $f:\{0,1\}^n\to\{0,1\}$ is {\it monotone} if
for every two assignments $a,b\in\{0,1\}^n$, if $a\le b$ then $f(a)\le f(b)$.
It is easy to see that Monotone functions are closed under disjunction and conjunction.
That is, if $f$ and $g$ are monotone functions then $f\wedge g$ and $f\vee g$ are monotone functions.

The class $d$-MJunta is the class of all monotone functions in $d$-Junta.
That is, the class of all monotone functions with at most $d$ relevant variables.

\item {\bf Decision Tree (DT)}: One of the important
representations of Boolean functions $f:\{0,1\}^n\to \{0,1\}$ is
decision tree. A {\it decision tree} formula is defined as follows: The
constant functions $0$ and $1$ are decision trees. If $f_0$ and
$f_1$ are decision trees then, for all $i$, $$\mbox{``$f'=$[if $x_i=0$ then $f_0$
else $f_1$]"}$$ is a decision tree (can also be expressed as $f'=x_i
f_1\vee\bar x_i f_0$ or $f'=x_i
f_1+\bar x_i f_0$). Every decision tree $f'$ can be represented as
a tree $T(f')$. If $f'= 1$ or $0$ then $T(f')$ is a node
labeled with $1$ or $0$, respectively. If $f'=$[if $x_i=0$
then $f_0$ else $f_1$], then $T(f')$ has a root labeled with $x_i$
and has two outgoing edges. The first edge is labeled with $0$ and
is pointing to the root of $T(f_0)$ and the second is labeled with
$1$ and is pointing to the root of $T(f_1)$. See Figure~\ref{Tree02}.

The {\it depth}
of the decision tree $f'$ is the depth of the tree $T(f')$.
That is the number of edges of the longest path from the root to a leaf in a tree.
The {\it size} of the
decision tree $f'$ is the number of leaves in $T(f')$, that is,
the number nodes in $T(f')$ that are labeled with $0$ and $1$.

Every Boolean function $f:\{0,1\}^n\to \{0,1\}$ can be represented as a DT.
The representation is not unique. The following are subclasses of DT.

\begin{enumerate}\setlength\itemsep{0em}
\item {\bf Depth $d$ Size $s$ Decision Tree ({DT}$_{d,s}$)}: The
class $\DT_{d,s}$ is the class of all decision trees of depth
at most $d$ and size at most $s$.

\item {\bf Depth $d$ Decision Tree (DT${}_d$)}:  The
class $\DT_{d}$ is the class of
all decision trees of depth at most~$d$. That is, $\DT_d=\DT_{d,2^d}$.

\item {\bf Monotone DT ({MDT}$_{d,s}$, MDT${}_d$)}: functions in the above classes
that are monotone.

\item {\bf Decision List (DL)},\cite{R87}: functions $f\in$DT where every internal node
in $T(f)$ is pointing to at least one leaf.

\item {\bf Depth $d$-Decision List ($d$-DL)}: $d$-DL is a decision list
of depth at most $d$.
\end{enumerate}

Learning decision tree is equivalent to solving problems in databases, decision table programming, concrete complexity theory, switching theory, pattern recognition, and taxonomy,~\cite{M82},
computer vision,~\cite{AMMRS93}.

\begin{figure}
\centering
\includegraphics[trim = 0 1cm 0 1cm,width=0.9\textwidth]{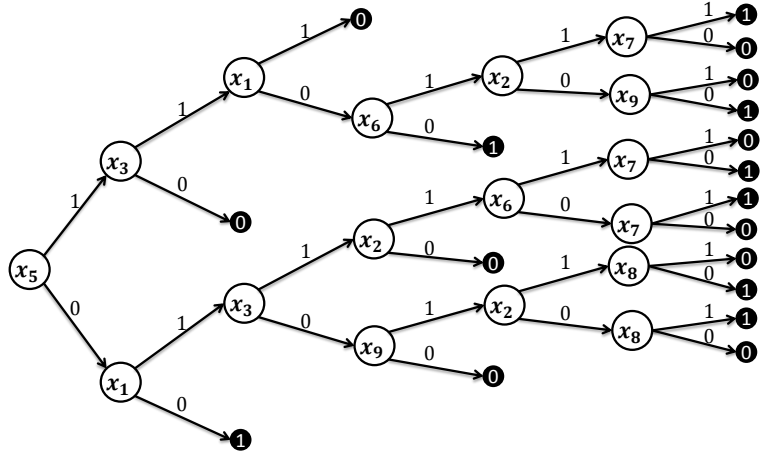}
\caption{A decision tree of depth $6$ and size 18.}
\label{Tree02}
\end{figure}

\item  {\bf Disjunctive Normal Form (DNF)}: A DNF is another
important representation of Boolean function
$f:\{0,1\}^n\to \{0,1\}$. A DNF formula
is a formula of the form $$f=\bigvee_{i=1}^s T_i$$ where each $T_i\in$Term is a term.
The {\it size} of $f$ is $s$.

Every Boolean function $f:\{0,1\}^n\to \{0,1\}$ can be represented as a DNF.
The representation is not unique. It is easy to see that every decision tree of size $s$ can be
represented as DNF of size at most $s$.

The subclasses of DNF considered in the literature are
\begin{enumerate}\setlength\itemsep{0em}
\item {\bf $r$-DNF}: The class of DNFs with terms from $r$-Term.
\item {\bf $s$-term DNF}: The class of DNFs with at most $s$ terms.
\item {\bf $s$-term $r$-DNF}: The class of DNFs with at most $s$ terms each of which is an $r$-Term.
\item {\bf Read-Once $C$:} Here $C$ is one of the above classes.
Read-Once $C$ is
the class of functions $f$ in $C$ where each variable appears at most
once in $f$.
\item {\bf Read-Twice, Read-Thrice, Read-$t$ $C$:} The class of functions
$f$ in $C$ where each variable appears at most
twice (resp. three times and $t$ times) in~$f$.
\end{enumerate}

\item {\bf Monotone DNF (MDNF)}: The class MDNF
 is the class of DNF with monotone terms (i.e., terms in MTerm).
Every monotone function (See the definition in item~\ref{MF})
has a monotone DNF representation. This representation is one of the most popular
canonical structures for representing Boolean functions. If $f = M_1\vee M_2 \vee \cdots \vee M_s$ where
each $M_i$ is a monomial and no two monomials $M_i$ and $M_j$, $i\not=j$ satisfies $M_i\wedge M_j= M_i$,
then we say that $f$ is a {\it reduced monotone DNF}.
Every monotone Boolean function $f$ has a unique representation
as a reduced monotone DNF \cite{A88}. This representation is uniquely
determined by the {\it minterms} of the function. That is, the
assignments $a\in\{0,1\}^n$ where $f(a)=1$ and flipping any entry
that is $1$ in $a$ to $0$ changes the value of the function to
zero. Each minterm $a$ of $f$ corresponds, one-to-one, to a monomial $M=\vee_{a_i=1}x_i$
in the reduced monotone DNF representation of $f$. The following are subclasses of MDNF.
\begin{enumerate}\setlength\itemsep{0em}
\item {\bf $r$-MDNF}: The class of MDNFs with monomials of size at most $r$. That is, terms from $r$-MTerm.
\item {\bf MDNF}: The class MDNF is $n$-MDNF.
\item {\bf $s$-term MDNF}: The class of MDNFs with at most $s$ monomials.
\item {\bf $s$-term $r$-MDNF}: The class MDNFs with at most $s$ monomials
    of size at most $r$.
\item {\bf Read-Once, Read-Twice, Read-Thrice, Read-$t$ $C$}, where $C$ is one of the above classes, is
the class of functions $f$ in $C$ where each variable appears at most
once (resp. twice, three times and $t$ times) in $f$.
\end{enumerate}

Learning Monotone DNF and subclasses of Monotone DNF equivalent to problems
in computational biology that arises in whole-genome shotgun sequencing,~\cite{ABKRS04},
and DNA phisical mapping,~\cite{GK98}.

\item  {\bf Conjunctive Normal Form (CNF)}: The class CNF is the dual class (See the definition in item~\ref{Doo}) of
DNF (where $\wedge$ is replaced with $\vee$ and vice versa). In a similar way
as above we define the classes {\bf $r$-MCNF}, {\bf MCNF}, {\bf $s$-clause CNF},
 {\bf $s$-clause MCNF}, {\bf $s$-clause $r$-CNF} and {\bf $s$-clause $r$-MCNF}.

\item {\bf CDNF}. The class of CDNF is the class of formulas of
the form $(f,g)$ where $f$ is a DNF,  $g$ is a CNF and $f= g$.
The {\it size} of $(f,g)$ as $s+t$ where $f$ is $s$-term DNF and $g$ is $t$-clause CNF.

The following are subclasses of CDNF
\begin{enumerate}\setlength\itemsep{0em}
\item {\bf CDNF$_{s,t}$}. The class of CDNF$_{s,t}$ is the class CDNF, $(f,g)$, where $f$ is a DNF of size at most $s$ and  $g$ is a CNF of size
at most $t$.
\item{\bf $r$-CDNF$_{s,t}$}: The class of $(f,g)\in$CDNF$_{s,t}$ where $f$ is $r$-DNF
of size at most $s$ and $g$ is $r$-CNF of size at most $t$.
\item{\bf $r$-CDNF}: The class $\cup_{s,t}$CDNF$_{s,t}$.
\item {\bf MCDNF}: The class of Monotone CDNF.
\item {\bf MCDNF$_{s,t}$}: The class of Monotone CDNF$_{s,t}$.
\item{\bf $r$-MCDNF}: The class of Monotone $r$-CDNF.
\end{enumerate}

Learning CDNF is equivalent to problems in data-mining, graph theory and reasoning and knowledge representation,~\cite{DMP99}.

\item {\bf Boolean Multivariate Polynomial (BMP)}. The class BMP is
the class of multivariate polynomials over the binary field $F_2$. That is, a function
$f:F_2^n\to F_2$ of the form $$f=M_1+M_2+\cdots+M_s$$ where
each $M_i$ is a monomial. The {\it size} of $f$ is $s$.

Every Boolean function $f:F_2^n\to F_2$ can be represented as a BMP.
The representation is unique.
It is easy to see that every decision tree of size $s$ and depth $t$ can be
represented as BMP of size at most $2^ts$.

\begin{enumerate}\setlength\itemsep{0em}
\item {\bf $r$-BMP}: The class of BMPs with monomials of size at most $r$, i.e., in $r$-MTerm.
This class is also called the class of {\it multivariate polynomial of degree $r$ over $F_2$}.
\item {\bf $s$-monomial BMP}: The class BMPs with at most $s$ monomials.
This class is also called the class of {\it sparse multivariate polynomial over $F_2$}.
\item {\bf $s$-monomial $r$-BMP}: The class of BMPs with at most $s$ monomials
of size at most $r$. This class is also called the class of {\it sparse multivariate polynomial of degree $r$ over $F_2$}.
\end{enumerate}

\item {\bf XOR of Terms (XT)}: The class XT is the class of XOR of terms, $T_1+T_2+\cdots+T_s$
where $T_i\in$ Term.
\begin{enumerate}\setlength\itemsep{0em}
\item {\bf $r$-XT}: The class of XTs with terms of size at most $r$.
\item {\bf $s$-term XT}: The class of XTs with at most $s$ terms.
\item {\bf $s$-term $r$-XT}: The class of XTs with at most $s$ terms of size at most $r$.
\end{enumerate}
Notice that XT with terms from MTerm is BMP. Since every term of size $r$ can be
represented as $2^r$-monomial $r$-BMP, every $s$-term $r$-XT is $(2^rs)$-monomial $r$-BMP.

\item {\bf Deterministic Finite Automaton (DFA)},\cite{MP43}:
A DFA is a $5$-tuple
$A=(Q,\Sigma,\delta,q_0,F)$ that can be also represented
as a directed graph $G=(V,E)$ with labeled edges where $V=Q$ is a finite set of {\it states} (the vertices), and $q_0\in Q$ is the {\it start state}.
$\Sigma$ is a finite set of {\it symbols} called the {\it alphabet}.
$\delta$ is the {\it transition function} $\delta:Q\times \Sigma\to Q$.
The edge $(v,u)\in E$ in $G$ is labeled with $\sigma\in \Sigma$ if and only if $\delta(v,\sigma)=u$. This transition function defines, for every {\it string} $s\in \cup_{i\ge 0}\Sigma^i$, a unique path in the graph $q_0,q_1,\ldots, q_{|s|}$ (here, $|s|$ is the number of symbols in $s$) that starts
from $q_0$ and for every $0\le i\le |s|-1$, $\delta(q_{i-1},s_i)=q_{i}$. We denote the final state in this path $q_{|s|}$ as $\delta(q_0,s)$.
The set $F\subset Q$ is the set of {\it accept states}.

Every DFA $A$ defines
a Boolean function $f:\cup_{i\ge 0}\Sigma^i\to \{0,1\}$ where $f(s)=1$ if and only if
$\delta(q_0,s)\in F$. When $\Sigma=\{0,1\}$ then a {\it DFA for the Boolean function}
$f:\{0,1\}^n\to \{0,1\}$ is a DFA such that: for every $a\in\{0,1\}^n$
we have $f(a)=1$ if and only if $\delta(q_0,a)\in F$.

\item {\bf Boolean Multiplicity Automata Function (BMAF)},\cite{S61}: A Boolean Multiplicity
Automata Function is a function of the
form:
$$f(x_1,\ldots,x_n)=A_1(x_1)A_2(x_2)\cdots A_n(x_n)$$ where each
$A_i(x_i)$ is $s_i\times s_{i+1}$ matrix that its entries are
Boolean univariate polynomials in $x_i$ over $F_2$, i.e., $ax_i+b$ for $a,b\in F_2$, and $s_1=s_{n+1}=1$. The {\it size} of a BMAF is defined as $\max_i s_i$.

See~\cite{BBBKV00} for other ways to represent this class.

\item {\bf Boolean Halfspace} ({\bf Perceptron, Threshold}) {\bf (BHS)}: A Boolean
Halfspace is a function $f:\{0,1\}^n\to \{0,1\}$ of the following
form:
$$f(x_1,\ldots,x_n)=\left\{ \begin{array}{ll}
1 & \mbox{if}\ w_1x_1+w_2x_2+\cdots+w_nx_n\ge u\\
0 & \mbox{otherwise}
\end{array}\right.$$ where $w_1,\ldots,w_n,u$ are real numbers.
The constants $w_1,\ldots,w_n$ are called
the {\it weights} of the Halfspace, and $u$
is called the {\it threshold}. For $W\subseteq \Re$ we define
\begin{enumerate}\setlength\itemsep{0em}
\item {\bf BHS$(W)$}: The class of Boolean Halfspaces with weights
$w_i\in W$.
\item {\bf $d$-BHS$(W)$}: The class of functions in BHS$(W)$ with at most
$d$ relevant variables.
\end{enumerate}

\item {\bf Boolean Circuit (BC) and Boolean Formula (BF)} A
Boolean circuit over the set of variables
$x_1,\ldots,x_n$ is a directed acyclic graph where every node in it
with indegree zero is called an input gate and is labeled by
either a variable $x_i$ or a Boolean constant $\{0,1\}$. Every other gate is either a node with indegree one and is
labeled $\neg$ (unary NOT) or a node with indegree two
and is labeled by either, $\wedge$ (binary AND) or $\vee$ (binary OR). A Boolean formula is a
circuit in which every gate has outdegree one.

The {\it size} of a Boolean circuit is the number of gates in it, and its
{\it depth} is the length of the longest directed path in it.
\begin{enumerate}\setlength\itemsep{0em}
\item {\bf Monotone Boolean Circuit (MBC) and Monotone Boolean Formula (MBF)}
MBC and MBF are Boolean circuit and Boolean formula, respectively, with no $\neg$ gate.
\item {\bf Read Once Formula (ROF)}. The class of Boolean read-once formula.
A Boolean read-once formula is a formula such that
every input variable $x_i$ appears in at most one input gate.
\item {\bf Monotone Read Once Formula (MROF)}. The class of monotone read-once formula.
\item {\bf Read-Once, Read-Twice, Read-Thrice, Read-$t$ $C$}, where $C$ is one of the above classes, is
the class of functions $f$ in $C$ where each variable appears at most
once (resp. twice, three times and $t$ times) in $f$.
\end{enumerate}
\end{enumerate}

See other classes in \cite{A88,A87,AFP92,AP91,B95,B97,BHH95a,BM95,CGHT11,CGL97,D96,DMP99,FGMP96,H95,SS96}.

Here are relations between some of the classes mentioned above.
$$ $$
\begin{figure}
\centering
\includegraphics[trim = 0 5cm 0 5cm,width=0.8\textwidth]{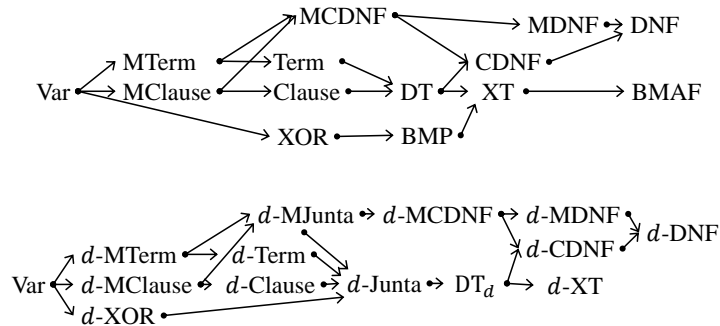}
\caption{Diagram of the classes}
\label{Diagram01}
\end{figure}

\ignore{
{\small
$$
\begin{array}{llllclccclllll}
&&&&&&&{\rm MCDNF}  &\rightarrow &{\rm MDNF} & \rightarrow & {\rm DNF}&&                \\
[-.23cm]
&&&&&&&   &\searrow&&{\nearrow}&&&\\[-.23cm]
&&&{\rm MTerm}&\rightarrow&{\rm Term}&&&  &{\rm CDNF}  & & &&\\
[-.23cm]
&\nearrow&&&&&\searrow&   &{\nearrow}&&&&&\\
[-.20cm]
{\rm Var}&\rightarrow&&{\rm MClause}&\rightarrow&{\rm Clause}&\rightarrow&{\rm DT}&\rightarrow &{\rm XT}&\rightarrow&{\rm BMAF}&&\\
[-.23cm]
&\searrow&&&&&&   &{\nearrow}&&\nearrow &&&\\[-.23cm]
&&&{\rm XOR}&&&& {\rm XOR}  &\rightarrow&{\rm BMP}&&&&\\
\end{array}
$$

$$
\begin{array}{llllclclrlllll}
&&&&&&&  d\mbox{-MJunta}\rightarrow\subset d\mbox{-MCDNF}&\rightarrow &d\mbox{-MDNF} & \rightarrow & d\mbox{-DNF}&&                \\
[-.23cm]
&&&&&&&   &\searrow&&{\nearrow}&&&\\[-.23cm]
&&&d\mbox{-MTerm}&\rightarrow&d\mbox{-Term}&&\ \ \ \ \ \
\downarrow&  &d\mbox{-CDNF}  & & &&\\
[-.23cm]
&\nearrow&&&&&\searrow&   &{\nearrow}&&&&&\\
[-.20cm]
{\rm Var}&\rightarrow&&d\mbox{-MClause}&\rightarrow&d\mbox{-Clause}&
\rightarrow& \subset {d\mbox{-Junta}}\rightarrow\subset{\rm DT}_d\longrightarrow\cdot\cdot&\rightarrow &d\mbox{-XT}&&&&\\
[-.23cm]
&\searrow&&&&&{\nearrow}&   &&&&&&\\[-.23cm]
&&&d\mbox{-XOR}&&d\mbox{-XOR}&& &&&&&&\\
\end{array}
$$}}

For two classes $C_1$ and $C_2$ we write $C_1\subseteq C_2$
(written as $C_1\to C_2$ in the above diagram) if every function in $C_1$ of size $s$ is equivalent to
a function in $C_2$ of size $O(s)$.
%We write $C_1\stackrel{ \times n}{\longrightarrow} C_2$ if the size blows up by a factor of $n$
%when changing the function from the representation in $C_1$ to the representation in $C_2$.
%In the diagram $C_1\rightarrow \subset C_2$ means subset as sets.

As for functions that are not Boolean, the literature
is poor in studying the exact learnability of classes of functions
with finite discrete domain or/and range from membership queries only.
On the other hand, there is a substantial body of literature on learning
and testing arithmetic classes.

\bigskip
We now give some of the arithmetic classes defined in the literature

\bigskip
\noindent
{\bf Arithmetic Classes:} Arithmetic classes represent function $f:X\to R$
where $R$ is an algebraic structure such as field or ring.
For exact learning, the most investigated arithmetic classes in the literature
are

\begin{enumerate}\setlength\itemsep{.5em}

\item {\bf $(r,V)$-Linear Functions ($(r,V)$-LF)}, where $r$
 is an integer, $V\subset \Re$ and $\Re$ is the set of real numbers. An $(r,V)$-LF is a function $f:\{0,1\}^n\to \Re$ of the form $v_1x_{i_1}+\cdots+v_{r'}x_{i_{r'}}$ where $r'\le r$
 and $v_i\in V$ for all $i=1,\ldots,r'$. The class {\bf $r$-LF} is the class $(r,\{0,1\})$-LF
 and {\bf LF} is the class $n$-LF.

Learning $(r,V)$-LF is equivalent to coin weighing problem~\cite{B12b} and signature coding problem~\cite{BG07}.

\item {\bf $(r,V)$-Quadratic Functions ($(r,V)$-QF)}, where $r$
 is an integer and $V\subset \Re$. A $(r,V)$-QF is a function
$f:\{0,1\}^n\to \Re$ of the form $x^TAx$ where $x\in \{0,1\}^n$
and $A$ is a symmetric $n\times n$ matrix with at most $r$ non-zero entries from~$V$.
The class {\bf $r$-QF} is the class $(r,\{0,1\})$-QF.

Learning $(r,V)$-QF is equivalent to problems in molecular biology~\cite{BGK05}.

\item {\bf Multivariate Polynomial (MP):} Let $F$ be a field. A multivariate
polynomial over $F$ is a function $f:F^n\to F$ of the form $$f=\sum_{i\in
I} a_ix_1^{i_1}\cdots x_n^{i_n}$$ where $I\subseteq N^n$,
$N=\{0,1,2,\cdots\}$ and $a_i\in F$. The {\it size} of $f$ is
$|f|:=|I|$. The term $x_1^{i_1}\cdots x_n^{i_n}$ is called {\it
monomial}. The monomial is called {\it $t$-monomial} if $|\{j\ |\
i_j\not=0\}|\le t$. The multivariate polynomial is said to be of
{\it degree} $d$ if $i_1+\cdots+i_n\le d$ for all $i\in I$,
$s$-{\it sparse} if $|I|\le s$ and {\it with $t$-monomials} if all
its monomials are $t$-monomials.

When the field $F$ is finite then every function $f:F^n\to F$ can be
represented as a multivariate polynomial. This fact is not true for
infinite fields.

\item {\bf Multiplicity Automata Function}: A Multiplicity
Automata Function (MAF) over the field $F$ is a function of the
form
$$f(x_1,\ldots,x_n)=A_1(x_1)A_2(x_2)\cdots A_n(x_n)$$ where each
$A_i(x_i)$ is $s_i\times s_{i+1}$ matrix that its entries are
linear functions in $(x_1,\ldots,x_n)$ (i.e., $\sum_ia_ix_i+b$ where $a_i,b\in\Re$) and $s_1=s_{n+1}=1$.
The size of a MAF $f$ is $\max_i s_i$.

This class contains the class MP in a sense that every MP of size
$s$ has a MAF of size $s$.

See \cite{BBBKV00} for other representations of MAF.

\item {\bf Arithmetic Circuit (AC) and Arithmetic Formula (AF)} An
arithmetic circuit over the field $F$ and the set of variables
$x_1,\ldots,x_n$ is a directed acyclic graph where every node in it
with indegree zero is called an input gate and is labeled by
either a variable $x_i$ or a field element. Every other gate is
labeled by either $+$ or $\times$, in the first case, it is a sum
gate and in the second a product gate. An arithmetic formula is a
circuit in which every gate has outdegree one.

The {\it size} of a circuit is the number of gates in it, and its
{\it depth} is the length of the longest directed path in it. The
{\it degree} of a circuit is equal to the degree of the polynomial
output by the circuit.

\item {\bf Arithmetic Read-Once Formula (AROF)}.
An arithmetic read-once formula is a formula such that
every input variable $x_i$ appears in at most one input gate.

\end{enumerate}

Here are relations between some of the classes we've
defined

$$ \begin{array}{rcccl}
&&{\rm AF}&\rightarrow&{\rm AC}\\
[-.23cm] &\nearrow &&\nearrow&\\ [-.28cm]
 {\rm MP}&\rightarrow& {\rm MAF}&&
\end{array}
$$

See other classes in
\cite{BB98,S09,S14,SY10} and references therein.

\subsection{Learning Algorithms and Complexity}

The learning algorithm can be {\it sequential} or {\it parallel},
{\it deterministic} or {\it randomized} and {\it adaptive} (AD), {\it $r$-round}
($r$-RAD)  or {\it
non-adaptive} (NAD).

In the adaptive algorithm, the queries can depend on the answers to the previous ones. In the non-adaptive algorithm they are independent of the previous one and; therefore, one can ask all the queries in one parallel step.
We say that an adaptive algorithm is $r$-{\it round adaptive} ($r$-RAD) if
it runs in $r$ stages where each stage is non-adaptive.
That is, the queries may depend on the answers to the queries in the previous stages
but independent on the answers to the queries of the current stage.

The randomized algorithm can be either Monte Carlo (MC) or Las Vegas (LV).
A {\it Monte Carlo algorithm} is a randomized algorithm whose running time is deterministic, but whose output may be incorrect with probability at most~$\delta$.
A {\it Las Vegas algorithm} is a randomized algorithm that always gives a correct hypothesis. That is, it always produces a hypothesis that is equivalent to the target function. The complexity of a Las Vegas algorithm is measured by the expected running time, the expected number of queries
and the expected number of rounds.

The goal is to ask the minimum number of queries and minimize the
running time and space complexity of the algorithm and/or other
resources such as the number of processors (for parallel algorithms)
or the number of random bits (for randomized algorithms).

\subsection{Polynomially, Efficiently and Optimally Learnable}\label{VE}
In this subsection and the next, we try to unify the different definitions used in the literature
of the efficiency of the
query complexity and time complexity of exact learning algorithms.
We will use the following new terminologies defined below: ``learnable'', ``polynomially learnable''
``efficiently learnable'', ``almost optimally learnable''  and ``optimally learnable''.

Let $C$ be a class of functions. Let $\OPT_A(C)$ be the minimum number
of membership queries that a learner, with unlimited computational power,
needs to learn $C$ with algorithms of type $A$. The algorithm type, $A$,
can be adaptive (AD), non-adaptive (NAD) or $r$-round ($r$-RAD). For
example, we will use $\OPT_\A$ for the adaptive algorithm and $\OPT_\NA$ for the non-adaptive algorithm.
When the algorithm is randomized we also add,
as a subscript, MC for Monte Carlo algorithms and LV for Las Vegas algorithms.

In complexity theory, a polynomial time algorithm is an algorithm that runs in
polynomial time in the input size. In the exact learning model, the time complexity of learning
the class $C$ is, at
least, the query complexity, $\OPT_A(C)$, which can be exponential in
the target function size.  Therefore, {\it polynomial time learning
algorithm}
for
$C$ will be defined as a learning algorithm that asks
$poly(\OPT_A(C),$ $n)$ queries and runs in time $poly(\OPT_A(C),n)$,
where $n$ is the size of the elements in the domain $X$.
Such classes are called {\it polynomially learnable} or just {\it learnable} classes.
This is the definition used in the literature for learnability of classes.

Since the time complexity of any learning algorithm for $C$ is at least $n\cdot \OPT_A(C)$ we may say that
learning algorithms that run in time $poly(\OPT_A(C),$ $n)$ are ``efficient algorithms'' in time.
However, this is not true for the query complexity.
We will argue here,
by the following example, that the above
definition of $poly(\OPT_A(C),n)$ for the query complexity is not the best definition
for query-efficiency of exact learning from membership queries.

Take for example the class $C=d$-MClause. We will show
in Subsection~\ref{GTA} that $\OPTA(C)=\Theta(d\log n)$.
Therefore, one would expect that a query-efficient learning algorithm for this class
asks $poly(d,\log n)$ queries and not $poly(d\log n,n)=poly(n)$ queries as defined above.
The time complexity cannot be less than $n\cdot \OPT_A(C)$, so the definition of $poly(\OPT_A(C),n)$
in the time complexity is passable.

Therefore, we will suggest the following definition for efficient learning.
If the
algorithm for learning $C$ asks $poly(\OPT_A(C))$ queries (rather than
$poly(\OPT_A(C)$ $,n)$) and runs in time $poly(\OPT_A(C),n)$, then we call the
class {\it efficiently learnable}\footnote{We will not use
the term ``polynomially learnable'' for this case to avoid confusion with the
definition in the literature.}.

Another concern with this new definition is that
in many areas, (such as combinatorial group testing and game theory) membership query is
considered to be very costly.
Therefore, one must find polynomial time learning algorithms
that ask a minimum number of queries.
Therefore, we will introduce here two other definitions: If there is
a learning algorithm for $C$ that asks
$\OPT_A(C$ $)^{1+o(1)}$ queries and runs in time $poly(\OPT_A(C),n)$, then
we call the class {\it almost optimally learnable}.
If there is a learning algorithm for $C$ that asks
$O(\OPT_A(C))$ queries and runs in time $poly(\OPT_A(C)$ $,n)$, then
we call the class {\it optimally learnable}.

In many cases, the query complexity $\OPT_A(C)$
is a function of several parameters that are related to
the class $C$. For example, the query complexity $\Theta(d\log n)$ of $d$-MClause
also depends on $d$. We say that
the query complexity of a learning algorithm is {\it optimal} (resp. {\it almost optimal, efficient or polynomial}) {\it in some parameter} if
assuming the other parameters are constant, the query
complexity of the algorithm is optimal (resp. almost optimal, efficient or polynomial).
So a learning algorithm for $d$-MClause that asks $d\cdot poly(\log n)$ queries is efficient, optimal
in $d$ and efficient in $n$.

We say that the class $C$ is {\it query-polynomially} (resp. query-efficiently,
almost query-optimally or query-optimally)
{\it learnable in time} $T$ if the number of queries is as above (for polynomially,
efficiently, almost optimally and optimally, respectively) but the
time complexity is $T$.

We summarize all the above definitions in the following table:

\bigskip
\centerline{
\begin{tabular}{|l|p{3.3cm}|c|}
{\bf Terminology} & {\bf Query Complexity}& {\bf Time Complexity}\\ \hline
{\bf Polynomially Learnable} & &\\
or {\bf Learnable} & $\poly(\OPT_A(C),n)$ & $\poly(\OPT_A(C),n)$\\ \hline
{\bf Efficiently Learnable} & $\poly(\OPT_A(C))$ & $\poly(\OPT_A(C),n)$\\ \hline
{\bf Almost Optimally Learnable} & $\OPT_A(C)^{1+o(1)}$ & $\poly(\OPT_A(C),n)$\\ \hline
{\bf Optimally Learnable} & $O(\OPT_A(C))$ & $\poly(\OPT_A(C),n)$\\ \hline
{\bf Optimally Learnable in} $n$& $O(\OPT_A(C))$ & $\poly(\OPT_A(C),n)$ \\
&{\small {when the other parameters are constant}} & \\ \hline
{\bf Query-Optimally Learnable } & $O(\OPT_A(C))$ & $T(n)$ \\
{\bf in time} $T(n)$ & &\\ \hline
\end{tabular}}

\subsection{Strongly Polynomially, Efficiently and Optimally Learnable}\label{Strong}
Let $C$ be a class of functions. Suppose there is an integer parameter $r$
and classes $C_r$ such that $C=\cup_{r\ge 0} C_r$.
We say that $C$ is
{\it strongly $r$-polynomially learnable} if there is a learning algorithm
for $C$ such that, for every target function $f\in C_r$, the algorithm runs in
time $poly(\OPT(C_r),n)$ and asks at most $poly(\OPT(C_r),n)$ queries.
In the same way as in the above subsection we define {\it strongly $r$-efficiently
learnable}, {\it strongly almost $r$-optimally learnable}, {\it strongly $r$-optimally
learnable} and {\it $r$ learnable in time $T$}.

For example, it is known that $\OPTA(d$-MClause$)=\Theta(d\log n)$.
Obviously, MClause$=\cup_{d\ge 0}d$-MClause. For $f\in$MClause,
let $d(f)$ be the minimum integer such that $f\in d(f)$-MClause.
That is, $d(f)$ is the number of relevant variables of $f$.
The class MClause is
adaptively strongly $d$-optimally learnable if there is an adaptive algorithm
that with a target function $f$ that is MClause, the algorithm runs in time
$poly(n)$ and asks $\Theta(d(f)\log n)$ queries.
That is, the algorithm runs
in time $\Theta(d\log n)$ even when $d$ is not known to the learner.

Recall that, for a function $f:\{0,1\}^n\to \{0,1\}$, we say that $x_i$ is {\it relevant}
in $f$ if there is an assignment $a$ such that $f(a)\not= f(a+e_i)$, where
$\{e_j\}_j$ is the standard bases, and $+$ is
the bitwise exclusive or. That is, if $f$ depends
on the variable $x_i$. We say that $x_i$ is {\it irrelevant} in $f$ if it is not relevant in~$f$.
For a class $C$, let $C_d$ be the set of all functions
in $C$ that have at most $d$ relevant variables.
Then $C=\cup_{d\ge 0} C_d$. A strongly $d$-efficiently learnable class~$C$
will be called {\it strongly attribute-efficiently learnable}.
The same definition applies for attribute-polynomially, almost attribute-optimally, attribute-optimally
and attribute learnable in time $T$.

The definition of ``strongly attribute-efficient'' in~\cite{BHL95}
is equivalent to our definition of ``strongly attribute-optimally learnable in $n$''.

\subsection{Testing Problems}
\label{Testing Problems}

The following problems are also considered in the literature
\begin{enumerate}
\item {\bf Equivalence Testing}: Given two teachers where each one
has a function from~$C$. The learner can ask each one membership queries.
Test whether the two functions
are equivalent. The minimum number of queries is denoted by $\OPT^{\ET}_A$ $(C)$.

For the
non-adaptive algorithms, this is equivalent to constructing a set of assignments $A$
such that for every $f,g\in C$, $f\not= g$ there is
$a\in A$ such that $f(a)\not= g(a)$. Such a set is called an
{\it equivalent test set} or {\it universal identification sequence}~\cite{GKS93}. Obviously,
for deterministic algorithms,
\begin{eqnarray}
\OPTNA^{\ET}(C)=2\cdot \OPT_\NA(C).\label{xyz}
\end{eqnarray} Also, it is easy to show that.
$$\OPTA^{\ET}(C)= 2\cdot \OPT_\A(C).$$
We give a proof sketch of the latter for completeness

\begin{proof} Let ${\cal A}$ be an adaptive algorithm that learns $C$.
We run ${\cal A}$. Each time it asks a membership query $a$, we ask both teachers
that membership query. Since ${\cal A}$ learns $C$, for
some assignment we get different answers. Therefore $\OPTA^{\ET}(C)\le 2\cdot \OPT_\A(C).$

Let ${\cal B}$ be an adaptive algorithm for identity testing the class $C$.
Let $T$ be the teacher with the target function $f\in C$. We run ${\cal B}$ with $T$
and a dummy teacher $T'$ that always gives the same
answer as $T$ as long as there is a function $g\in C\backslash \{f\}$
that is consistent with $f$ on all the answers to the queries.
Finally, no other function is consistent with $f$ and $f$ is uniquely determined.
Therefore $\OPTA^{\ET}(C)\ge 2\cdot \OPT_\A(C).$  \qed
\end{proof}

\item {\bf Identity Testing from $H$} and {\bf Teaching Dimension}: Given a teacher that has
a function $f$ from $C$. Given a function $h\in H\supseteq C$. The learner has $h$
and can ask the teacher membership queries.
Test whether
$f= h$.

A non-adaptive algorithm is equivalent to constructing, for every $h\in H$,
a set of assignments $A_h$
such that for every $f\in C$, $f\not= h$ there is
$a\in A_h$ such that $f(a)\not= h(a)$. Such a set is called an
{\it identity test set} for $h$ with respect to $C$.

Notice that if $h\in C$ then an identity testing set for $h$ uniquely determines~$h$.
If $h\not\in C$ then an identity testing set for $h$ gives a proof
that $h$ does not belong to~$C$. Therefore, we will also call this set
{\it membership test set}.

The maximum, over all the functions $h\in H$, of the minimum size identity test set for $h$
is denoted by $\TD(H,C)$ or $\OPT^{\IT}_\NA(H,C)$. When $H=C$ the set $A_h$ is called a {\it teaching set},
and $\TD(C,C)$ is denoted by $\TD(C)$ or $\OPT^{\IT}_\NA(C)$ and is called the {\it teaching dimension} of the concept $C$,~\cite{GK95,GRS89,SM91}.

We have $\OPT_\A(C)\ge \TD(C)$. To show this fact, suppose ${\cal A}$ is an algorithm
that adaptively learns $C$. Run the algorithm with the target $h$.
The set of all the queries asked is a teaching set for $h$.

For studies in teaching dimension see the
following and reference therein~\cite{GK95,GRS89,H94b,K99,N87,RS94,S01,SDHK91,SM91}.

\item
{\bf Constant Testing}: Given a teacher that has
a function $f$ from $C$.
Test, with membership queries, whether $f$ is not a constant function.
This is equivalent to constructing a set of assignments $A$
such that, for every non-constant function $f\in C$, there is
$a,b\in A$ such that $f(a)\not= f(b)$. Such a set is called a
{\it constant test set for $C$}. $\OPTCS(C)$ denotes the minimum size constant test set
for $C$.

\item
{\bf Zero Testing}: Given a teacher that has
a function $f\not=0$ from $C$. Test whether $f= 0$ with membership
queries. This is
equivalent to constructing a set of assignments $A$
such that, for every non-zero function $f\in C$, there is
$a\in A$ such that $f(a)\not= 0$. Such a set is called a
{\it zero test set} or a {\it hitting set for $C$}. The minimum size hitting set
for~$C$ is denoted by $\OPTHS(C)$. It is easy to show that
$$\TD(H,C)=\max_{h\in H} \OPTHS(C-h)$$
and
\begin{eqnarray}\label{nax}
\OPT_\NAD(C)=\OPT^\HS(C-C)
\end{eqnarray}
where $C-h=\{f-h\ |\ f\in C\}$ and $C-C=\{f-g\ |\ f,g\in C\}$.
For Boolean functions, $f$ and $g$, $f-g=f\oplus g$ where $\oplus$ is the exclusive
or operation.
\end{enumerate}

See other results in the above references and the references therein.

\section{Bounds on OPT for Boolean Functions and Algorithms}\label{Se2}
In this section, we give some bounds for $\OPT_A(C)$ for classes of
Boolean functions~$C$ and some
exponential time algorithms that are query-efficient for any class $C$.

\subsection{OPT for Adaptive Algorithms}\label{OAA}

We first state the following information-theoretic lower bound for
deterministic learning algorithm. Throughout the paper, we write $\log x$ for $\log_2x$.

\begin{lemma}\label{FiL} Let $C$ be a class of Boolean functions. Any
deterministic
learning algorithm for $C$ must ask at least $\log |C|$ membership queries. That is
$$\OPT_\A(C)\ge \log |C|.$$
\end{lemma}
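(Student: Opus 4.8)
The plan is to use a standard information-theoretic (adversary) argument. The key observation is that a deterministic learning algorithm interacting with the teacher through membership queries is nothing more than a decision tree: at each step the algorithm has a fixed next query determined by the history of previous answers, and since the range is $\{0,1\}$, each answer branches the computation two ways. After the algorithm halts it outputs a hypothesis, so each leaf of this decision tree is labeled by a single function from $C$.

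First I would formalize the algorithm as a binary decision tree $T$ whose internal nodes are labeled by queries $a\in\{0,1\}^n$ and whose two outgoing edges correspond to the answers $f(a)=0$ and $f(a)=1$. For a fixed target $f\in C$, the sequence of answers the teacher gives traces out a unique root-to-leaf path, and the algorithm's output is the label at that leaf. Correctness of the algorithm means that for every $f\in C$ the output at the reached leaf equals $f$.

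The heart of the argument is the following counting step: two distinct functions $f,g\in C$ cannot end at the same leaf. If they did, they would produce identical answers on every query along that path (since the path is determined by the answers), so the algorithm would reach the same leaf and output the same hypothesis for both; but then the output is wrong for at least one of $f,g$, contradicting correctness. Hence the map sending each $f\in C$ to the leaf it reaches is injective, so $T$ has at least $|C|$ leaves. A binary tree with at least $|C|$ leaves has depth at least $\lceil\log|C|\rceil$, and the depth is exactly the maximum number of queries the algorithm asks over all targets. Since $\OPT_\A(C)$ is the worst-case query count of the best algorithm, we conclude $\OPT_\A(C)\ge\log|C|$.

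I do not anticipate a genuine obstacle here; the only point requiring care is the reduction of an arbitrary deterministic algorithm to a binary decision tree, which hinges on determinism (the next query is a function of the answer history alone) and on the Boolean range (branching factor exactly two). If the range $R$ had size $k$ the same argument would give a $k$-ary tree and the bound $\log_k|C|$ instead; restricting to Boolean functions is what yields the clean base-$2$ logarithm in the statement.
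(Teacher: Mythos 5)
Your argument is correct and is exactly the standard justification for this bound: the paper states the lemma without proof, treating it as the well-known information-theoretic lower bound, and your decision-tree formalization (distinct targets must reach distinct leaves, so a binary tree of depth $d$ can distinguish at most $2^d$ functions) is the canonical way to fill it in. The only caveat worth keeping in mind is that $|C|$ should be read as the number of logically distinct functions in $C$ rather than of representations, since two representations of the same function may legitimately share a leaf.
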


In fact, the bound is also true for Monte Carlo and Las Vegas algorithms.
See for example~\cite{ABM14}
\begin{lemma} \label{lvlb} Let $C$ be any class of Boolean functions. Any
Monte Carlo (and therefore, Las Vegas) randomized
learning algorithm that learns $C$ with probability
at least $3/4$ must ask at least $\log |C| -1$ membership queries. That is
$$\OPT_{\A,\LV}(C)\ge \OPT_{\A,\MC}(C)\ge \log |C|-1.$$
\end{lemma}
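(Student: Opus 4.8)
The plan is to reduce the randomized lower bound to the deterministic counting bound already used for Lemma~\ref{FiL}, by an averaging (Yao-type) argument, and then to recover the constant $-1$ from the fact that $\log(4/3)<1$. I would treat the Monte Carlo case first, since it carries all the substance; the Las Vegas inequality $\OPT_{\A,\LV}(C)\ge \OPT_{\A,\MC}(C)$ then finishes the chain.

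Let $\mathcal{A}$ be a Monte Carlo algorithm that asks at most $q$ membership queries on every run and that, for each fixed target $f\in C$, outputs a hypothesis equivalent to $f$ with probability at least $3/4$ over its internal coin string $r$. View $\mathcal{A}$ as a distribution over deterministic algorithms $\{\mathcal{A}_r\}_r$, each asking at most $q$ Boolean queries. Now feed $\mathcal{A}$ a target $f$ drawn uniformly at random from $C$. Since success holds with probability at least $3/4$ for every fixed $f$, the joint success probability over the pair $(f,r)$ is at least $3/4$; averaging over $r$ produces a single coin string $r^*$ for which the deterministic algorithm $\mathcal{A}_{r^*}$ correctly learns a uniformly random $f\in C$ with probability at least $3/4$.

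Next I would count the outputs of the fixed deterministic algorithm $\mathcal{A}_{r^*}$, exactly as in the proof of Lemma~\ref{FiL}: an algorithm asking at most $q$ Boolean queries is a binary decision tree of depth at most $q$, hence reaches at most $2^q$ leaves and produces at most $2^q$ distinct outputs. Two distinct functions of $C$ cannot both be equivalent to the same output, so $\mathcal{A}_{r^*}$ correctly learns at most $2^q$ of the $|C|$ targets. Consequently $2^q/|C|\ge 3/4$, that is, $q\ge \log|C|-\log(4/3)$. Because $\log(4/3)<1$, this yields $q\ge \log|C|-1$, which is the Monte Carlo bound $\OPT_{\A,\MC}(C)\ge \log|C|-1$.

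Finally, for Las Vegas I would invoke $\OPT_{\A,\LV}(C)\ge \OPT_{\A,\MC}(C)$: a Las Vegas algorithm always returns a correct hypothesis, hence is in particular a Monte Carlo algorithm with error probability $0\le 1/4$. The step I expect to be the most delicate is precisely this reconciliation, since the two complexity measures differ (worst-case query count for Monte Carlo versus expected query count for Las Vegas). To make the transfer airtight I would restate the deterministic counting bound for the \emph{expected} number of queries, using the Kraft/entropy fact that any prefix-free assignment of depths to the $|C|$ correctly-learned leaves has average depth at least $\log|C|$ under the uniform target distribution; this lower-bounds the expected query count of each $\mathcal{A}_r$ and hence of the randomized algorithm, so the bound passes cleanly from worst-case to expected complexity and therefore from Monte Carlo to Las Vegas.
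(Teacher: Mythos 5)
Your argument is correct and is the standard one: the paper itself gives no proof of this lemma, deferring to~\cite{ABM14}, and the averaging (Yao-type) reduction to the deterministic leaf-counting bound, with $2^q\ge \tfrac34|C|$ yielding $q\ge\log|C|-\log(4/3)\ge\log|C|-1$, is exactly how that bound is established. Your care with the Las Vegas case is also well placed, and your Kraft/entropy argument in fact gives the slightly stronger bound $\OPT_{\A,\LV}(C)\ge\log|C|$ on the expected query count, which subsumes the stated inequality.
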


We now give upper and lower bounds for $\OPT_A(C)$ using the following combinatorial measure
that is defined in~\cite{H95,M83}.
Let $C$ be a class of Boolean functions $f:X\to \{0,1\}$.
Let $h:X\to \{0,1\}$ be {\it any} function.
We say that
a set $T_h\subseteq X$ is a {\it specifying set for $h$} with respect to $C$
if $$|\{f\in C\ |\ (\forall x\in T_h) h(x)=f(x)\}|\le 1.$$ That is, there is
at most one concept in $C$ that is consistent with $h$ on $T_h$. Denote $T(C,h)$
the minimum size of a specifying set for $h$ with respect to $C$.
The {\it extended teaching dimension} of $C$ is
$$\ETD(C)=\max_{h\in 2^X} T(C,h).$$
Let $h:X\to \{0,1\}$ be any function. For every $f\in C$, $f\not=h$ there is an assignment $x_f\in X$ such that $f(x_f)\not=h(x_f)$. Therefore $\{x_f\ |\ f\in C\backslash \{h\}\}$ is a specifying set for $h$ and $T(C,h)\le |C|$. Therefore
\begin{eqnarray}\label{ElC}
\ETD(C)\le |C|.
\end{eqnarray}
Notice that for every Boolean function $h\not\in C$
if $T_h$ is a specifying set for $h$ and $f\in C$ is
a function that is consistent with $h$ on $T_h$ then adding
an assignment $a$ to $T_h$ where $f(a)\not=h(a)$ gives an identity testing set for $h$ with respect to $C$.
Therefore,
$$\ETD(C)\le \TD(2^X,C)\le \ETD(C)+1.$$

In~\cite{H95,M83}, Moshkov proves the following bounds. Here, we will give
another proof that gives, asymptotically, the same upper bound for $\OPT_\A(C)$.
\begin{lemma}~\cite{H95,M83} \label{LBo} Let $C$ be any class of Boolean functions. Then
\begin{eqnarray}\label{seqr}
\OPT_\A(C)\le  \frac{\ETD(C)}{\log \ETD(C)}\log|C|+\ETD(C) \le \frac{2\cdot \ETD(C)}{\log \ETD(C)}\log|C|\end{eqnarray} and
$$\OPT_\A(C)\ge \max(\ETD(C),\log |C|).$$
\end{lemma}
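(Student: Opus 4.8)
The bound $\OPT_\A(C)\ge \log|C|$ is exactly Lemma~\ref{FiL}, so it remains to prove $\OPT_\A(C)\ge \ETD(C)$ and take the maximum. I would argue by an adversary. Fix a function $h^*\in 2^X$ attaining the maximum, so that $T(C,h^*)=\ETD(C)$, and let the adversary answer every membership query $x$ with the value $h^*(x)$. If the learner halts after asking a query set $Q$ with $|Q|<\ETD(C)$, then $Q$ is too small to be a specifying set for $h^*$, so by definition $|\{f\in C:\ f|_Q=h^*|_Q\}|\ge 2$. Both of these functions are consistent with every answer the learner has received, so whatever hypothesis it outputs is wrong for at least one admissible target, which the adversary is free to declare. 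Hence any learner must ask at least $\ETD(C)$ queries, and combining with Lemma~\ref{FiL} gives $\OPT_\A(C)\ge\max(\ETD(C),\log|C|)$.

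\textbf{Upper bound, clean version.} For the upper bound I would run a majority-vote halving algorithm driven by specifying sets. Maintain a version space $V\subseteq C$ of all functions still consistent with the answers seen, starting from $V=C$. While $|V|\ge 2$, form the coordinatewise majority function $h$ of $V$ (breaking ties arbitrarily) and take a minimum specifying set $T_h$ for $h$ with respect to $C$; by definition $|T_h|=T(C,h)\le\ETD(C)$. Query all points of $T_h$. If the target agrees with $h$ on all of $T_h$, then the specifying-set property forces at most one function of $C$ to be consistent, so the target is identified. Otherwise the target disagrees with $h$ at some $x\in T_h$; since $h(x)$ is the majority value, the functions agreeing with the target at $x$ form the minority class, so the new version space has size at most $|V|/2$. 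Each round thus costs at most $\ETD(C)$ queries and at least halves $V$, so after at most $\log|C|$ rounds $V$ collapses, yielding the clean estimate $\OPT_\A(C)\le \ETD(C)\log|C|+\ETD(C)$.

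\textbf{The sharpening and the main obstacle.} The clean argument is off from the claimed bound by exactly the factor $1/\log\ETD(C)$ on the leading term, and closing this gap is the heart of the matter. The loss arises because a whole specifying-set block of $\ETD(C)$ queries is charged for only a single halving, i.e. one bit of progress; to match the stated bound one must instead design each block so that its $\ETD(C)$ queries cut $|V|$ by a factor $2^{\Theta(\log\ETD(C))}=\ETD(C)^{\Theta(1)}$, extracting $\Theta(\log\ETD(C))$ bits per block and hence using only $\log|C|/\log\ETD(C)$ blocks. The hard part is the combinatorial lemma guaranteeing that such an economical block always exists: one must amortize the queries ``spent'' confirming agreement with the majority against the pruning they achieve, rather than waiting for a single minority split. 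This is precisely the refinement carried out in~\cite{H95,M83}, and it is the step I expect to be the main obstacle; the routine remainder is bookkeeping over the blocks.

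\textbf{Final simplification.} The second inequality in \eqref{seqr} is then immediate, since $\frac{\ETD(C)}{\log\ETD(C)}\log|C|+\ETD(C)\le \frac{2\ETD(C)}{\log\ETD(C)}\log|C|$ holds if and only if $\log\ETD(C)\le\log|C|$, which is exactly $\ETD(C)\le|C|$ from \eqref{ElC}.
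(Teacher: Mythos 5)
Your lower bound and your final simplification are fine: the adversary argument for $\OPT_\A(C)\ge\ETD(C)$ is correct (and is in fact more than the paper supplies, since the paper states that inequality without proof), and the reduction of the second inequality in the display to $\ETD(C)\le|C|$ matches the paper exactly. The problem is the upper bound, which is the substance of the lemma: you establish only $\OPT_\A(C)\le\ETD(C)\log|C|+\ETD(C)$ and then explicitly defer the factor-$\log\ETD(C)$ saving to the cited references, so the first inequality of~(\ref{seqr}) is left unproved. Moreover, the missing idea is not, as you suggest, a combinatorial lemma guaranteeing that each specifying-set block can be designed to prune the version space by $\ETD(C)^{\Theta(1)}$ — no such guarantee holds for an arbitrary block.

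The paper closes the gap with a dichotomy governed by a parameter $\epsilon\in(0,1/2)$. At each step, if some point $a$ is $\epsilon$-balanced for the current version space $C_i$ (i.e. $\epsilon|C_i|\le|\{f\in C_i:\ f(a)=0\}|\le(1-\epsilon)|C_i|$), the algorithm asks that single query, shrinking $C_i$ by the factor $(1-\epsilon)$. Only when \emph{no} point is $\epsilon$-balanced does it spend a block of at most $\ETD(C)$ queries on a specifying set for ${\rm Majority}(C_i)$ — and in that regime every point is $\epsilon$-unbalanced, so a disagreement with the majority at any queried point automatically places the target on the minority side and forces $|C_{i+1}|\le\epsilon|C_i|$ rather than $|C_i|/2$, while agreement on the whole block identifies the target outright. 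With $k$ blocks the query count is at most $\frac{\log|C|-k\log(1/\epsilon)}{\log(1/(1-\epsilon))}+k\cdot\ETD(C)$, and choosing $\epsilon\approx\log\ETD(C)/\ETD(C)$ makes the coefficient of $k$ vanish, leaving the leading term $(1+o(1))\frac{\ETD(C)}{\log\ETD(C)}\log|C|$. In short, the amortization you identified as the obstacle comes for free from the case split, not from any refinement of the specifying-set construction.
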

\begin{proof} The second inequality in (\ref{seqr}) follows from (\ref{ElC}).

Consider the following algorithm.
After the $i$th query, the algorithm defines a set $C_i\subseteq C$
of all the functions that are consistent with the membership queries that were asked so far.
Consider any $0<\epsilon<1/2$.
Now the algorithm searches for an assignment $a\in X$
such that $$\epsilon |C_i|\le |\{f\in C_i\ |\ f(a)=0\}|\le (1-\epsilon) |C_i|.$$
If such $a\in X$ exists, then it asks the membership queries $\MQ(a)$. Define $C_{i+1}=\{f\in C_i\ |\ f(a)=\MQ(a)\}$. Obviously,
in that case,
$$|C_{i+1}|\le (1-\epsilon) |C_i|.$$

If no such $a\in X$ exists, then the algorithm finds a specifying set $T_h$ for
$h:={\rm Majority}(C_i)$, where ``Majority'' is the majority function. It then
asks membership queries for all the assignments in $T_h$. If the answers are consistent
with $h$ on $T_h$, then there is a unique concept $c\in C_i$ consistent
with the answers, and the algorithm outputs this concept.
Otherwise, there is $a\in T_h$ such that $\MQ(a)\not= h(a)$. It is easy to see that
in that case
$$|C_{i+1}|\le \epsilon |C_i|.$$

Denote by $k$ the number of times when the algorithm is left without
such an $a \in X$. Then the number rounds when it does find such an $a
\in X$ is $$\frac{\log |C| - k \log(1/\epsilon)}{\log(1/(1-\epsilon))}$$
and so the number of queries is upper bounded by
\begin{eqnarray}
\frac{\log |C| - k \log(1/\epsilon)}{\log(1/(1-\epsilon))} &+& \ETD(C)k\nonumber\\
 & =& \frac{\log |C|)}{\log(1/(1-\epsilon))} + k \left(\ETD(C) -
  \frac{\log(1/\epsilon)}{\log(1/(1-\epsilon))}\right)\label{neww}
\end{eqnarray}
When $\epsilon$ is chosen so that the last term becomes $0$, then $\epsilon$ is roughly $\log \ETD(C)/$ $\ETD(C)$. In case of this $\epsilon$, (\ref{neww}) becomes
\begin{eqnarray*}
(1+o(1))\frac{\ETD(C)}{\log \ETD(C)} \log|C|.\qed
\end{eqnarray*}
\end{proof}

In \cite{H95,M83}, Moshkov gives, for any two integers
$t$ and $\ell$, an example of a class $C_{t,\ell}$ where $$\OPT_\A(C_{t,\ell})=
\Omega\left(\frac{\ETD(C_{t,\ell})}{\log \ETD(C_{t,\ell})}\log |C_{t,\ell}|\right).$$ So the upper bound in the above lemma is
the best possible.

See also the other dimensions, bounds, and techniques in~\cite{A88,A87,A01,AHHP98,H94,MT92}.

\subsection{Constructing Adaptive Algorithms}\label{CAA}

Given a class of Boolean functions $C$ as an input.
Can one construct an algorithm that learns $C$ with $\OPTA(C)$ queries?
Obviously, with unlimited computational power, this can be done
so the question is:
How close to $\OPTA(C)$ can one get
when polynomial time (or any other time) is allowed for the construction?

An exponential time algorithm follows
from the following
\begin{eqnarray}\label{algforopt}
\OPTA(C)=\min_{x\in X} \max(\OPTA(C_{x,0}),\OPTA(C_{x,1}))
\end{eqnarray}
where $C_{x,\xi}=\{f\in C\ |\ f(x)=\xi\}$.
See also~\cite{AGMMPS93,G72}. Can one do it in $poly(|C|,|X|)$ time?
Hyafil and Rivest, \cite{HR76}, show that the problem of finding $\OPTA$ is NP-Complete.
Laber and Nogueira, \cite{LN04}, show that this problem does not admit an $o(\log |C|)$-approximation unless P=NP.
The reduction of Laber and Nogueira, \cite{LN04}, of set cover to this problem with
the inapproximability result of Dinur and Steurer~\cite{DS14} for set cover
implies that it cannot be approximated to $(1-o(1))\cdot \ln |C|$ in polynomial time
unless P=NP.

The query complexity of the algorithm in Lemma~\ref{LBo} is within a factor of
$$\min\left(1+\frac{\log |C|}{\log \ETD(C)}\ \ \ , \ \ \ \frac{\ETD(C)}{\log {|C|}}+\frac{\ETD(C)}{\log {\ETD(C)}}\right)$$ from OPT. However, unfortunately, the problem of finding a minimum size specifying set for $h$ is
NP-Hard, \cite{ABCS92,GK95,SM91}.

In \cite{AMMRS93}, Arkin et al. gives the following algorithm:
Let $C_i$ be the set of functions that are consistent with
the answers to the first $i$ membership queries.
The $i+1$th membership query of the algorithm is
an assignment $a$ that partitions the set $C_i$ as evenly as possible, that is, an assignment $a$ that maximizes
$\min(|\{f\in C_i|f(a)=0\}|,|\{f\in C_i|f(a)=1\}|)$.
Arkin et al. show in \cite{AMMRS93} that
the query complexity of this algorithm is within a factor of $c\ln|C|$ from $\OPT$
for some $c>1$.
Moshkov in~\cite{M04} gives the exact ratio of $\ln |C|$.
This algorithm runs in time $poly(|C|,|X|)$. In particular, we have the following result.
Here, we will give a very simple proof
\begin{lemma} \cite{AMMRS93}. There is a learning algorithm that runs in time $poly(|C|,|X|)$
and learns $C$ with at most $$(\ln |C|)\OPT_\A(C)\le 0.693\cdot \OPTA(C)^2$$ queries.

In particular, any class $C$ is
adaptively query-efficiently learnable in time poly$(|C|,|X|)$.
\end{lemma}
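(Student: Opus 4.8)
The plan is to analyze the greedy ``most balanced split'' algorithm described above and establish two things: that it halts after at most $(\ln|C|)\OPT_\A(C)$ queries, and that this quantity is bounded by $0.693\cdot\OPTA(C)^2$. The second bound is immediate from Lemma~\ref{FiL}: since $\OPTA(C)\ge \log|C|=\ln|C|/\ln 2$, we get $\ln|C|\le (\ln 2)\OPTA(C)\le 0.693\cdot\OPTA(C)$, and multiplying by $\OPTA(C)$ yields $(\ln|C|)\OPTA(C)\le 0.693\cdot\OPTA(C)^2$. The running time is clearly $\poly(|C|,|X|)$, since each step scans all $a\in X$ and evaluates the split of the current consistent set $C_i$ in time $O(|X|\cdot|C|)$, and (as the query bound will show) there are only $O(\log^2|C|)$ steps. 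So the entire content lies in the query bound.

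The heart of the argument is a single combinatorial lemma: \emph{if a set $S=C_i\subseteq C$ of size $m\ge 2$ can be distinguished by an adaptive decision tree of depth $D$, then some query $a$ splits $S$ so that both sides have size at least $(m-1)/D$.} I would prove this by fixing an optimal distinguishing tree $T^*$ for $C$ of depth $D=\OPTA(C)$, which also distinguishes $C_i\subseteq C$, and restricting it to $C_i$ so that each node $v$ carries the subset of $C_i$ reaching it. Following the root-to-leaf ``majority path'' $v_0,v_1,\ldots,v_L$ (always descending to the child with the larger subset), the subset sizes $m_0=m,m_1,\ldots,m_L$ with $m_L\le 1$ drop by amounts $s_j=m_j-m_{j+1}\ge 0$ that sum to $m_0-m_L\ge m-1$ over at most $D$ edges, so some edge has $s_j\ge (m-1)/D$. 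At that node the query $a_j$ splits its subset into parts of sizes $s_j$ (the minority) and $m_{j+1}\ge m_j/2\ge s_j$ (the majority we followed); hence already within this subset, and a fortiori within the larger set $C_i$, the query $a_j$ places at least $s_j\ge (m-1)/D$ functions on each side. Since the algorithm picks the query \emph{maximizing} the smaller side, its chosen query is at least this balanced.

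With the lemma in hand, I would set up the recursion on $m_i=|C_i|$. Writing $a$ for the greedy query, both sides have size at least $(m_i-1)/D$, so whichever side the true answer lands in satisfies $m_{i+1}\le m_i-(m_i-1)/D$, that is $m_{i+1}-1\le (m_i-1)(1-1/D)$. Iterating from $m_0=|C|$ gives $m_k-1\le(|C|-1)(1-1/D)^k$; since $-\ln(1-1/D)\ge 1/D$, after roughly $D\ln|C|$ steps the right-hand side drops below $1$, forcing $m_k=1$ and hence identification of $f$, which, being consistent with every answer, is the unique survivor. This yields the bound of $(\ln|C|)\OPTA(C)$ queries. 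The lemma also guarantees that each greedy query genuinely splits $C_i$ (both sides are nonempty whenever $m_i\ge 2$), so the consistent set strictly shrinks and the process terminates.

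I expect the balanced-split lemma to be the main obstacle, because one must relate the behavior of a single query on the \emph{full} consistent set $C_i$ to its behavior on a \emph{subset} buried inside the optimal tree. The key observation that makes this go through is that the majority child is always at least as large as the minority child, so a query that peels off $s_j$ elements from a node's subset automatically leaves at least $s_j$ functions on each side of its split of all of $C_i$. The remaining pieces—the multiplicative recursion and the conversion to the $0.693\cdot\OPTA(C)^2$ form via Lemma~\ref{FiL}—are routine.
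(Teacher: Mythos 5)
Your proof is correct and follows essentially the same route as the paper: the balanced-split lemma you prove via the majority path in the optimal tree is exactly the paper's key observation that $\OPT_\A(C)\ge |C|/\max_a\min(|C_{a,0}|,|C_{a,1}|)$ (justified there in one line by noting each query eliminates at most $\max_a\min(|C_{a,0}|,|C_{a,1}|)$ functions), and the multiplicative recursion on $|C_i|$ together with the appeal to Lemma~\ref{FiL} for the $0.693\cdot\OPTA(C)^2$ form is identical. Your version merely unpacks that counting step explicitly (and tracks $m_i-1$ instead of $m_i$, a harmless refinement).
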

\begin{proof} First, we have
$$\OPTA(C)\ge \frac{|C|}{\max_a \min(|C_{a,0}|,|C_{a,1}|)}$$
where $C_{x,\xi}=\{f\in C\ |\ f(x)=\xi\}$.
This follows from the fact that every membership query can eliminate at most
$\max_a \min(|C_{a,0}|,|C_{a,1}|)$ functions from $C$. Therefore,
there is $b\in X$ such that $$\max_a \min(|C_{a,0}|,|C_{a,1}|)=\min(|C_{b,0}|,|C_{b,1}|)\ge \frac{|C|}{\OPTA(C)}.$$
Thus, using $b$ as the first query, eliminates at least $|C|/\OPTA(C)$ functions from $C$.
After the $\ell$th query the number of functions that remain (consistent with
the answers to the membership queries) is at most
$$|C|\left(1-\frac{1}{\OPTA(C)}\right)^\ell.$$ So
the number of queries of the algorithm is at most $\ell=(\ln |C|) \OPTA(C)$.
By Lemma~\ref{FiL}, $\ell\le 0.693\cdot\OPT_{\AD}(C)^2$.\qed
\end{proof}

Bshouty et al., \cite{BCGKT96}, show that using the NP-oracle
all Boolean classes are efficiently learnable in randomized expected polynomial time. See other
results in \cite{BCGKT96}.

\subsection{OPT for Non-Adaptive Algorithms}
In this subsection, we give some bounds for non-adaptive learning classes Boolean functions.
The following result follows from (\ref{xyz}) and (\ref{nax})
\begin{lemma} \label{xor} We have
\begin{enumerate}
\item $$\OPTNA(C)\ge \OPTA(C).$$

\item $\OPTNA(C)$
is equal to half the minimum size equivalent test set for $C$.

\item The set $A$ is an equivalent test set for $C$ if and only if
$A$ is a hitting set for $C\oplus C:=\{h\oplus g\ |\ h,g\in C\}$.
That is,
$$\OPTNA(C)=\OPTHS(C\oplus C).$$
\end{enumerate}
\end{lemma}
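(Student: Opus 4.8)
The plan is to prove the three items of Lemma~\ref{xor} in turn, relying on the already-established identities~(\ref{xyz}) and~(\ref{nax}), together with the characterizations of the testing problems given in Subsection~\ref{Testing Problems}.

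For item~(1), I would argue directly from the definitions of the optimality measures. Recall that $\OPTNA(C)$ is the minimum number of queries needed by a non-adaptive learner, and $\OPTA(C)$ the corresponding quantity for adaptive learners. Every non-adaptive algorithm is in particular an adaptive algorithm that happens to ignore the intermediate answers, so any non-adaptive learning scheme for $C$ is a valid adaptive one; hence the adaptive optimum can only be smaller. This gives $\OPTA(C)\le\OPTNA(C)$ immediately, with no computation required.

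For item~(2), I would unwind the definition of the equivalence-testing problem from Subsection~\ref{Testing Problems}. There it is shown that a non-adaptive equivalence test is the same thing as an equivalent test set, namely a set $A$ of assignments such that any two distinct $f,g\in C$ are separated by some $a\in A$. By the deterministic identity~(\ref{xyz}), $\OPTNA^{\ET}(C)=2\cdot\OPTNA(C)$, and since $\OPTNA^{\ET}(C)$ is exactly the minimum size of such an equivalent test set, dividing by two shows that $\OPTNA(C)$ equals half the minimum size equivalent test set for $C$.

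For item~(3), the key observation is that an assignment $a$ separates $f$ and $g$, i.e.\ $f(a)\not=g(a)$, precisely when $(f\oplus g)(a)=1$, that is, when $a$ is a witness that the function $f\oplus g$ is nonzero. Hence a set $A$ separates all distinct pairs in $C$ if and only if, for every nonzero function of the form $h\oplus g$ with $h,g\in C$, some $a\in A$ evaluates it to $1$; by the definition of a hitting set this is exactly the statement that $A$ is a hitting set for $C\oplus C$. Taking minimum sizes on both sides, and using item~(2) to identify the minimum equivalent test set size with $2\cdot\OPTNA(C)$ (matched on the other side by the factor-two relation between hitting sets and equivalence test sets encoded in~(\ref{nax})), yields $\OPTNA(C)=\OPTHS(C\oplus C)$. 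The only subtle point—the one I would be most careful about—is the bookkeeping of the factor of two: I must make sure the ``half'' in item~(2) and the normalization in the definition of $\OPTHS$ are consistent, so that the factors cancel correctly and the clean identity in item~(3) emerges without a stray constant.
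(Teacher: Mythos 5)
The paper offers no proof of this lemma beyond the single sentence that it ``follows from (\ref{xyz}) and (\ref{nax})'', so your proposal is really an attempt to supply the missing details. Items (1) and (2) are fine: (1) is the standard observation that a non-adaptive algorithm is an adaptive one that ignores the answers, and (2) is just (\ref{xyz}) read with the convention that the ``size'' of the equivalence test counts the queries to \emph{both} teachers, i.e.\ it is $\OPTNA^{\ET}(C)=2|A|$ rather than $|A|$.

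The genuine problem is in your item (3), and it is exactly the factor of two you flag but do not resolve. If you route through item (2), you get (minimum equivalence-test size) $=2\cdot\OPTNA(C)$, while the separation/hitting correspondence $f(a)\not=g(a)\iff(f\oplus g)(a)=1$ relates $\OPTHS(C\oplus C)$ to the number of \emph{assignments} $|A|$, not to $2|A|$; chaining these as you propose yields $\OPTHS(C\oplus C)=2\cdot\OPTNA(C)$, which contradicts the stated identity. The factors do not cancel --- the two ``sizes'' are measured in different units. The correct route does not pass through item (2) at all: argue directly that a non-adaptive query set $A$ learns $C$ if and only if it separates every pair of distinct functions in $C$ (if two functions agree on $A$ they are indistinguishable; if all pairs are separated, the answer vector determines the target), and that $A$ separates every pair if and only if $A$ hits every nonzero $f\oplus g$ with $f,g\in C$. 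Taking minima gives $\OPTNA(C)=\OPTHS(C\oplus C)$ with no stray constant; this is precisely identity (\ref{nax}), which is the ingredient the paper actually invokes for this item.
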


We now prove
\begin{lemma} \cite{GK95,K72}. There is a learning algorithm that runs in time $poly(|X|,$ $|C|)$
and finds a hitting set for $C$ of size at most
\begin{eqnarray}\label{seq}
(\ln|C|)\OPTHS(C).
\end{eqnarray}

In particular,
\begin{enumerate}
\item There is a non-adaptive learning algorithm that runs in time $poly(|X|,|C|)$
and learns $C$ using at most
\begin{eqnarray}\label{Seq}
(2\ln|C|)\OPTNA(C)\le 1.386\cdot \OPTNA(C)^2
\end{eqnarray} queries.

\item Any class $C$ is
non-adaptively query-efficiently learnable in time poly$(|C|,|X|)$.
\end{enumerate}
\end{lemma}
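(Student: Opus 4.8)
The plan is to observe that constructing a hitting set for $C$ is precisely an instance of \textsc{Set Cover}, and then to run the greedy algorithm, whose classical logarithmic guarantee yields exactly the bound (\ref{seq}). First I would set up the reduction. Let $U=\{f\in C\ |\ f\neq 0\}$ be the universe of non-zero concepts, and for each assignment $a\in X$ let $S_a=\{f\in U\ |\ f(a)\neq 0\}$ be the concepts that $a$ \emph{hits}. By the definition of a hitting set, $A\subseteq X$ hits $C$ if and only if $\bigcup_{a\in A}S_a=U$; thus a minimum hitting set is a minimum set cover of $U$ by the sets $\{S_a\}_{a\in X}$, and its size is $\OPTHS(C)$.

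Next I would analyse the greedy algorithm, which at each step adds the assignment covering the largest number of still-uncovered concepts. Write $k=\OPTHS(C)$ and let $U_i$ be the set uncovered after $i$ steps, with $U_0=U$. Since the optimal cover of size $k$ also covers $U_{i-1}\subseteq U$, an averaging argument shows some assignment hits at least $|U_{i-1}|/k$ of the remaining concepts, so the greedy choice does at least as well and $|U_i|\le |U_{i-1}|(1-1/k)$. Iterating and using the \emph{strict} inequality $1-1/k<e^{-1/k}$ gives
\[
|U_\ell|\ \le\ |U|\left(1-\frac1k\right)^\ell\ <\ |C|\,e^{-\ell/k}.
\]
At $\ell=(\ln|C|)\OPTHS(C)$ the right-hand side is strictly below $1$, and since $|U_\ell|$ is an integer it must be $0$; hence greedy halts with a hitting set of size at most $(\ln|C|)\OPTHS(C)$. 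Each step evaluates every $f\in U$ on every candidate $a\in X$, costing $O(|X||C|)$ time, and there are at most $|U|\le|C|$ steps, so the algorithm runs in $\poly(|X|,|C|)$ time. This strict inequality is exactly what produces the clean factor $\ln|C|$ rather than the usual $H_{|C|}\le 1+\ln|C|$, so I expect that to be the only delicate point.

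For the ``in particular'' claims I would invoke Lemma~\ref{xor}(3), which identifies equivalent test sets for $C$ with hitting sets for $C\oplus C$ and gives $\OPTNA(C)=\OPTHS(C\oplus C)$. Running the algorithm above on the class $C\oplus C$ (still over the domain $X$, of size $|C\oplus C|\le |C|^2$) produces, in time $\poly(|X|,|C\oplus C|)=\poly(|X|,|C|)$, a hitting set of size at most $(\ln|C\oplus C|)\OPTHS(C\oplus C)\le (2\ln|C|)\OPTNA(C)$. This set is an equivalent test set, i.e.\ it distinguishes every pair of distinct concepts; querying all of its points and outputting the unique $g\in C$ consistent with the answers is a non-adaptive learner using at most $(2\ln|C|)\OPTNA(C)$ queries, giving (\ref{Seq}).

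Finally I would bound this quantity by $\OPTNA(C)^2$. Since $2\ln|C|=(2\ln 2)\log|C|=1.386\log|C|$ and, by Lemma~\ref{xor}(1) together with Lemma~\ref{FiL}, $\OPTNA(C)\ge \OPTA(C)\ge \log|C|$, we obtain $(2\ln|C|)\OPTNA(C)\le 1.386\,\OPTNA(C)^2$. As this is $\poly(\OPTNA(C))$ queries and the running time is $\poly(|C|,|X|)$, the class $C$ is non-adaptively query-efficiently learnable in time $\poly(|C|,|X|)$, proving the second claim.
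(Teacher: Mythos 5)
Your proposal is correct and follows essentially the same route as the paper: reduce the hitting-set problem to set cover, invoke the greedy algorithm's $\ln|C|$ approximation guarantee (which the paper cites from Chv\'atal while you prove it inline via the standard averaging argument), and then derive the non-adaptive learning bound by applying the construction to $C\oplus C$ together with Lemma~\ref{xor} and Lemma~\ref{FiL}. No gaps.
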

\begin{proof} Define for every $x\in X$ the set $C_x$ of all the functions
$f\in C$ such that $f(x)=1$. Now the
hitting set problem is equivalent to the set cover problem, i.e.,
find the minimal number of elements $S\subset X$ such that $\cup_{x\in S}C_x=C$.
It is known that the greedy algorithm that at each stage, chooses the set that
contains the largest number of uncovered elements, achieves an approximation ratio of $\ln |C|$,~\cite{C79}.

Now (\ref{Seq}) follows from (\ref{seq}), Lemma~\ref{FiL} and {\it 3} in Lemma~\ref{xor}.\qed
\end{proof}
The above reduction shows that
the problem of finding a small hitting set is equivalent to finding a small
set cover, and therefore, the minimum hitting set problem
cannot be approximated in polynomial time to within of
factor of less than $(1-o(1))\ln |C|$,~\cite{AMS06,DS14,F98,RS97}.

A Hitting set for $C\oplus C$ is also a hitting set for $C$ except probably
one function. This follows from the fact that if $A$ is
a hitting set for $C\oplus C$ then for each two distinct functions $f,g\in C$
there is $a\in A$ such that $f(a)\not=g(a)$ and
therefore $a$ hits one of them. This implies that there is
no learning algorithm that runs in time $poly(|X|,$ $|C|)$
and non-adaptively learns $C$ with less than
$(\ln|C|)\OPT_\NA(C)$ unless P$=$NP.

\subsection{OPT for Classes of Small VC-dimension}
We have seen in the previous subsection that the query complexity of non-adaptive learning
the class $C$ is equal to the minimum size hitting
set for $C\oplus C$ and finding a small hitting set is equivalent
to finding a small set cover.
We now give another way to construct a small hitting set
for classes with small Vapnik-Chervonenkis ($\VCdim$) dimension.
We first define the $\VCdim$ of a class~$C$.

For a class $C$ and a set $S\subseteq X$, we say that $S$ is {\it shattered by} $C$ if for any $T\subseteq S$ there is a function $f\in C$ such that for all $a\in T$
we have $f(a)=0$ and for all $a\in S\backslash T$ we have $f(a)=1$.
The $\VCdim$ of a class $C$, $\VCdim(C)$, is the maximum integer $d$ such that there is a set $S$ of size $d$ that is shattered by~$C$.

Another way to construct a hitting set for $C$ is by choosing a distribution $P$ on the domain $X$ and then repeatedly chooses elements $x\in X$ according to the distribution $P$ until we get a hitting set. This approach can also be used to prove an upper bound for the hitting set size.
\begin{lemma}
Suppose one can define a distribution $P$ over $X$ such that for every $f\in C$, $\Pr_P[f(x)\not=0]\ge \epsilon_P$. Then a set of
$$ T_P(C):=O\left(\min\left(\frac{1}{\epsilon_P}\log |C|\ , \ \frac{\VCdim(C)}{\epsilon_P} \log \frac{1}{\epsilon_P} \right)\right)$$ elements chosen according to the distribution $P$ is a hitting set
 for $C$ with probability at least $1/2$.

In particular,
$$\OPTHS(C)\le \TP(C):=\min_P T_P(C).$$
\end{lemma}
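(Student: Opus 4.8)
The plan is to establish the two bounds inside the minimum by two independent sampling arguments and then simply run whichever produces the smaller sample. For the first term I would draw $m$ points $x_1,\dots,x_m$ independently from $P$ and bound the failure probability by a union bound. Fix a non-zero $f\in C$; since $\Pr_P[f(x)=0]\le 1-\epsilon_P$, the chance that none of the $m$ samples hits $f$ is at most $(1-\epsilon_P)^m\le e^{-\epsilon_P m}$. Summing over the at most $|C|$ non-zero functions, the probability that the sample fails to be a hitting set is at most $|C|e^{-\epsilon_P m}$, which drops below $1/2$ as soon as $m=O\!\left((1/\epsilon_P)\log|C|\right)$. This yields the $\frac{1}{\epsilon_P}\log|C|$ branch.

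For the second term I would pass to the range space of supports. Associate to each $f\in C$ the set $S_f=\{x\in X\ |\ f(x)=1\}$; because $f$ is Boolean, $f(x)\ne 0$ is the same as $x\in S_f$, so a set $A$ is a hitting set for $C$ exactly when $A\cap S_f\ne\emptyset$ for every non-zero $f$, i.e.\ when $A$ meets every member of the range space $\mathcal R=\{S_f\ |\ f\in C\}$. The key identification is that the paper's shattering condition for $\VCdim(C)$, namely that for each $T\subseteq S$ some $f$ is $0$ on $T$ and $1$ on $S\setminus T$, says precisely that $\{S_f\cap S\ |\ f\in C\}=2^S$; hence the VC dimension of $\mathcal R$ equals $\VCdim(C)$. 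By hypothesis every $S_f$ (for non-zero $f$) has $P$-measure at least $\epsilon_P$, so any $\epsilon_P$-net of $\mathcal R$ is automatically a hitting set. The $\epsilon$-net theorem of Haussler and Welzl then guarantees that a sample of size $O\!\left(\frac{\VCdim(C)}{\epsilon_P}\log\frac{1}{\epsilon_P}\right)$ drawn from $P$ is, with probability at least $1/2$, an $\epsilon_P$-net, giving the second branch.

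Taking the smaller of the two sample sizes produces, with probability at least $1/2$, a hitting set of size $T_P(C)=O\!\left(\min(\cdots)\right)$, which is the first assertion. The ``in particular'' clause is then immediate by the probabilistic method: since a random sample of size $T_P(C)$ is a hitting set with probability at least $1/2>0$, at least one such set exists, so $\OPTHS(C)\le T_P(C)$ for every admissible $P$; minimizing over $P$ gives $\OPTHS(C)\le\TP(C)$. The one point requiring care---the main obstacle---is the VC-dimension bookkeeping: one must verify that the support range space $\mathcal R$ has VC dimension exactly $\VCdim(C)$ and that ``hitting set'' and ``$\epsilon_P$-net'' coincide here, since only then does the textbook $\epsilon$-net bound apply verbatim. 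The union-bound branch, by contrast, is entirely routine.
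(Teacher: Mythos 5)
Your proposal is correct and is essentially the argument the paper intends: the paper's own proof is a one-line citation to the Haussler--Welzl $\epsilon$-net theorem (and Chapter~13 of Alon--Spencer), and your two branches --- the union bound for the $\frac{1}{\epsilon_P}\log|C|$ term and the $\epsilon$-net theorem applied to the support range space for the VC-dimension term --- are exactly the standard details behind that citation, including the correct identification of $\VCdim(C)$ with the VC dimension of $\{S_f\}$ and of ``hitting set'' with ``$\epsilon_P$-net''.
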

\begin{proof} The result follows from the $\epsilon$-net theorem~\cite{HW87}. See also Chapter~13
in~\cite{AS08}.
\qed
\end{proof}

In particular, this also gives the following upper bound for $\OPTNA$
\begin{lemma}
Suppose one can define a distribution $P$ over $X$ such that for every $f,g\in C$ we have $\Pr_P[f(x)\not=g(x)]\ge \epsilon$. Then a set of
$T_P(C\oplus C)$ elements chosen according to the distribution $P$ is
 an equivalent test set for $C$ with probability at least $1/2$.

In particular,
$$\OPTNA(C)\le \TP(C\oplus C).$$
\end{lemma}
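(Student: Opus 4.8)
The plan is to reduce directly to the hitting-set lemma for $C\oplus C$ proved immediately above, using part 3 of Lemma~\ref{xor}, which characterizes an equivalent test set for $C$ as exactly a hitting set for $C\oplus C$. So the whole argument is a change of class from $C$ to $C\oplus C$ together with a translation of the probabilistic hypothesis.

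First I would translate the hypothesis. A function $F\in C\oplus C$ has the form $F=f\oplus g$ with $f,g\in C$, and $F$ is nonzero precisely when $f\neq g$; in that case $F(x)=1$ if and only if $f(x)\neq g(x)$. Hence the assumption $\Pr_P[f(x)\neq g(x)]\ge\epsilon$ for all (distinct) $f,g\in C$ says exactly that $\Pr_P[F(x)\neq 0]\ge\epsilon$ for every nonzero $F\in C\oplus C$. This is precisely the hypothesis of the preceding lemma with the class there taken to be $C\oplus C$ and with $\epsilon_P=\epsilon$.

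Next I would invoke that lemma verbatim for the class $C\oplus C$: a sample of $T_P(C\oplus C)$ elements drawn according to $P$ is a hitting set for $C\oplus C$ with probability at least $1/2$, and $\OPTHS(C\oplus C)\le\TP(C\oplus C)$. By part 3 of Lemma~\ref{xor} the sampled set is then an equivalent test set for $C$ with probability at least $1/2$, which is the first assertion. For the stated bound I would combine the equality $\OPTNA(C)=\OPTHS(C\oplus C)$ (again from Lemma~\ref{xor}) with $\OPTHS(C\oplus C)\le\TP(C\oplus C)$ to obtain $\OPTNA(C)\le\TP(C\oplus C)$.

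There is essentially no obstacle here, since the statement is a corollary of the two preceding results. The only point requiring care is the bookkeeping that the per-pair separation probability over $C$ is the very same quantity as the per-function nonzero probability over $C\oplus C$, so that the parameter $\epsilon$ feeding the underlying $\epsilon$-net bound is correctly identified; once that identification is made explicit, everything else is a direct citation of Lemma~\ref{xor} and the hitting-set lemma applied to $C\oplus C$.
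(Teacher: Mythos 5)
Your proof is correct and matches the route the paper intends: the paper states this lemma without proof as an immediate consequence of the preceding $\epsilon$-net hitting-set lemma applied to $C\oplus C$, combined with part~3 of Lemma~\ref{xor}, which is exactly your argument. The one point you rightly make explicit --- that the pairwise separation probability over $C$ equals the nonzero probability over $C\oplus C$, so the hypothesis of the preceding lemma holds with $\epsilon_P=\epsilon$ --- is the only content of the derivation, and you have it.
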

Br\"{o}nnimann and Goodrich, \cite{BG95}, show that there is an algorithm that runs in time $poly(|X|,|C|)$
and finds a hitting set for $C$ of size at most $$O(\VCdim(C)\log(\OPTHS(C))\cdot \OPTHS(C).$$
See also~\cite{ERS05}.

\section{Reductions}\label{Reduction}
In this section, we give some reductions that change
an existing algorithm to an algorithm with a better query complexity
and an algorithm in another learning model to an algorithm that learns from membership queries.
We will show in the sequel how to apply those reductions.

\subsection{Reductions for Adaptive Algorithms}\label{AtoA}
In this subsection, we show some reductions from one exact adaptive learning algorithm to another
one. Those reductions change the query complexity to be optimal in some of the parameters of the class.

For a class $C$ of functions $f:\{0,1\}^n\to \{0,1\}$
we say that $C$ is {\it projection closed} if for any $f\in C$,
$1\le i\le n$ and $\xi\in\{0,1\}$ we have $f|_{x_i\gets \xi}\in C$.
That is, projecting any variable $x_i$ to any value $\xi\in\{0,1\}$ keeps the function in the class $C$.
We say that the class $C$ is {\it embedding closed} if for any $f\in C$
and any $1\le k,i\le n$ we have $f|_{x_i\gets x_k}\in C$. We note here
that almost all the classes considered in the literature are projection and embedding closed.

Here we use the parameter $r$ for the number of relevant
variables. For class $C$, the class $C_r$ contains all the functions
in $C$ with at most $r$ relevant variables. All the classes below are
projection closed. It is easy to show that for projection closed class $C$, $C_r\supseteq C_1$ and $C_1$ contains
at least $n$ literals. Therefore, $\OPT_\AD(C_r)\ge \log n$.

In~\cite{BHL95}, Blum et al. show
\begin{lemma}\label{BHL0}\cite{BHL95} Let $C$ be a class that is projection closed.
If $C$ is adaptively learnable in time $T(n)$ with
$Q(n)$ queries
and there is a constant testing set for $C_r$ of size $P(n,r)$
that can be constructed in time $t(n)$ then
$C_r$ is adaptively learnable in time $poly(n,T(r),Q(r),t(n),P(n,r))$
with  $$O(r\cdot Q(r)P(n,r)+r\cdot \log n)$$ queries.

In particular, if $P(n,r)=O(\log n)$ (when the other parameters
of the class $C$ are constants), then $C_r$ is optimally learnable in $n$.
\end{lemma}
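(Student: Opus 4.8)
The plan is to learn the relevant variables first and then learn the
function restricted to those variables using the assumed algorithm $\mathcal{A}$
for $C$. Since $C_r$ consists of functions with at most $r$ relevant variables,
and $C$ is projection closed (so restricting irrelevant coordinates keeps us in
the class), the key idea is that once we know which $\le r$ variables matter, we
can essentially run $\mathcal{A}$ in ``$r$-dimensional'' space, paying $Q(r)$ and
$T(r)$ rather than $Q(n)$ and $T(n)$. The $r\cdot\log n$ term in the query bound
should come from identifying the relevant variables (roughly $\log n$ queries per
variable via binary search), and the $r\cdot Q(r)P(n,r)$ term from simulating
$\mathcal{A}$ where each of its $Q(r)$ queries must be ``lifted'' from the
$r$-variable world back to $\{0,1\}^n$.

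First I would use the constant testing set to locate relevant variables. Here is
the mechanism I expect: maintain a partial assignment fixing all currently-known
irrelevant variables, and run $\mathcal{A}$ on the induced $r$-variable problem.
When $\mathcal{A}$ asks a query on the $r$ ``active'' coordinates, I must decide
which $n$-dimensional assignment to actually query. The constant testing set
$P(n,r)$ for $C_r$ is what lets me detect, for a candidate restriction, whether a
further coordinate is relevant: a constant test set has the property that any
non-constant function in $C_r$ differs on two of its points, so by querying the
function restricted to a guessed support on all $P(n,r)$ constant-test points, I
can tell whether the current hypothesis about the support is complete, or whether
some unaccounted variable is still influencing the output. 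Each time
$\mathcal{A}$ issues one of its $Q(r)$ queries, I spend up to $P(n,r)$ membership
queries to correctly resolve the lifted query and to check the support, giving
$O(r\cdot Q(r)P(n,r))$ queries across the run; the additive $r\cdot\log n$ covers
binary-searching for the identity (the actual index in $[n]$) of each of the
$\le r$ relevant variables once its presence is detected. The time bound
$poly(n,T(r),Q(r),t(n),P(n,r))$ then follows by bookkeeping: we simulate
$\mathcal{A}$ (cost $T(r)$), construct the testing set (cost $t(n)$), and perform
polynomially many lift/search operations of size polynomial in $n$ and these
parameters.

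For the ``in particular'' clause, once $P(n,r)=O(\log n)$ with the other class
parameters held constant, the query bound becomes
$O(r\cdot Q(r)\log n + r\log n)$. Treating $r$ (and the other parameters) as
constant, $Q(r)$ is a constant, so the total is $O(\log n)$. Since $C$ is
projection closed and $C_1$ already contains at least $n$ literals, the earlier
observation $\OPT_\AD(C_r)\ge \log n$ shows this $O(\log n)$ is optimal in $n$,
which is exactly the definition of optimally learnable in $n$ from
Subsection~\ref{VE}. Hence $C_r$ is optimally learnable in $n$.

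The hard part, and the place I expect to spend the most care, is the support-
identification step: showing that a constant testing set for $C_r$ genuinely
suffices to certify that no relevant variable has been missed, and that the
interleaving with $\mathcal{A}$'s queries does not blow the count past
$O(r\cdot Q(r)P(n,r))$. The subtlety is that detecting a \emph{missing} relevant
variable is not the same as detecting non-constancy; one must argue that after
projecting onto a guessed support, any genuine dependence on an outside variable
manifests as the projected function being inconsistent with every function in
$C_r$ on that support — or as a detectable discrepancy on the constant test set —
so that the projection-closure of $C$ and the defining property of $P(n,r)$
combine to force the guessed support to grow only $r$ times total. Getting this
accounting exactly right, so that the binary searches and the per-query overhead
multiply out to the stated bound rather than to something like
$r\cdot Q(r)(P(n,r)+\log n)$ with a worse leading factor, is the main obstacle.
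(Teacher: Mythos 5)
The survey states this lemma without proof (it only cites \cite{BHL95}), so there is no in-paper argument to compare against; judging your proposal on its own terms, the architecture is the right one and does account for the bound: maintain a set $V$ of discovered relevant variables, rerun the small-variable learner ${\cal A}$ over $V$ (at most $r+1$ runs), answer each of its $Q(r)$ virtual queries $a\in\{0,1\}^V$ with the $P(n,r)$ real queries obtained by combining $a$ on $V$ with each constant-test point on $\bar V$, and on any disagreement binary-search the differing coordinates for a new relevant variable at cost $O(\log n)$. The ``in particular'' clause is also handled correctly via $\OPT_\A(C_r)\ge\log n$.

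The genuine gap is exactly the step you defer as ``the hard part,'' and it is not mere bookkeeping: you never establish that the output is correct when every per-query constancy test passes. If all tests pass, the simulation is faithful to $g:=f|_{\bar V\gets c}$ (any test point $c$), so ${\cal A}$ outputs $h\equiv g$; but a priori $f$ could still depend on a variable outside $V$ only at points $a^*$ that ${\cal A}$ never queries, in which case $h\neq f$ and the algorithm fails silently. The missing argument runs as follows. Suppose $x_j\notin V$ is relevant in $f$, witnessed by $\alpha,\beta$ differing only in coordinate $j$ with $f(\alpha)\neq f(\beta)$. Then $g_1:=f|_{\bar V\gets\alpha_{\bar V}}$ and $g_2:=f|_{\bar V\gets\beta_{\bar V}}$ are two \emph{distinct} functions of $V$, both in $C$ by projection-closure. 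A correct deterministic exact learner over $V$ cannot halt until its query--answer pairs single out the target within the class, so it must query some point $a$ at which $g$ differs from $g_1$ or from $g_2$; at that $a$ the projection $f|_{V\gets a}$ takes two distinct values, is a member of $C_r$ by projection-closure, and is therefore non-constant, so the defining property of the constant test set produces two test points on which it disagrees --- and those two points are among the $P(n,r)$ real queries you actually ask for the virtual query $a$. Hence a missed relevant variable is necessarily detected at some queried point, which is the contradiction that makes the whole scheme sound. Without this argument (note that it uses the learner's own correctness requirement, not just the test set), your proof does not establish the lemma; with it, the rest of your proposal is correct.
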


The results in~\cite{BH98} show that if the class is also
embedding closed then $P(n,r)=O(r^6\cdot Q(r^2)\log n)$.
Therefore, with Lemma~\ref{BHL0}, we get the query complexity
$$O((r^7\cdot Q(r)\cdot Q(r^2)+r) \log n).$$

When the class is also embedding closed, Bshouty and Hellerstein,\cite{BH98}, show

\begin{lemma}\cite{B14b,BH98}\label{kk1} Let $C$ be a projection and embedding closed class.
If $C$ is adaptively learnable in time $T(n)$ with $Q(n)$ queries,
then $C_r$ is adaptively learnable in time $poly(T(n))$
with $$O(r^6\cdot Q(r^2)\log n)$$ queries.

In particular, if $C$ is learnable then $C_r$ is optimally learnable in $n$.
\end{lemma}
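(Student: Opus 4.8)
The plan is to run the given $Q$-query learner for $C$ not on all $n$ variables but on an artificially reduced instance with only $O(r^2)$ variables, exploiting the fact that $f\in C_r$ has at most $r$ relevant variables. The engine for the reduction is a family of variable-merging maps. Fix $m=\Theta(r^2)$ and let $\mathcal{H}$ be a family of maps $h:[n]\to[m]$ with the separation property that for every $S\subseteq[n]$ with $|S|\le r+1$ some $h\in\mathcal{H}$ is injective on $S$; a standard probabilistic (or greedy) construction gives $|\mathcal{H}|=\poly(r)\log n$, computed deterministically. For a map $h$, define the merged function $f_h(y_1,\dots,y_m)$ by the substitution $x_i\gets y_{h(i)}$. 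Because $C$ is embedding closed, $f_h\in C$, and it has at most $r$ relevant super-variables. Crucially, a membership query to $f_h$ is simulated by a single membership query to $f$ (set every $x_i$ to the value of $y_{h(i)}$), so running the learner on $f_h$ costs only $Q(m)=Q(r^2)$ genuine queries to $f$.

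First I would run the learner on $f_h$ for every $h\in\mathcal{H}$, obtaining for each $h$ a hypothesis for $f_h$ and, in particular, its set of \emph{relevant buckets}. The key observation is that when $h$ separates the true relevant set $R$ (so each bucket holds at most one element of $R$) there is no cancellation among relevant variables: the relevant buckets of $f_h$ are exactly the buckets meeting $R$, and $f_h$ restricted to its relevant super-variables is a faithful copy of $f$ restricted to $R$ under the bijection $i\leftrightarrow h(i)$. Thus a single separating map already reveals the full logical structure of $f$; what it does not reveal is the identity of each relevant variable, only the bucket $h^{-1}(j)$ that contains it.

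Next I would localize. The separation property applied to $R\cup\{i'\}$ guarantees, for every irrelevant $i'$, a map that isolates $i'$ in a bucket of its own and therefore certifies $i'$ as non-relevant, while every $i\in R$ lies in a relevant bucket under every separating map. Cross-referencing the relevant-bucket labels across $\mathcal{H}$ then pins each relevant variable down to the unique index lying in the appropriate bucket of every separating map, which the separation property forces to be a singleton. Summing the cost --- $|\mathcal{H}|=\poly(r)\log n$ learning runs at $Q(r^2)$ queries each, plus a $\poly(r)$ overhead for the combinatorial matching --- gives the claimed $O(r^6\cdot Q(r^2)\log n)$ queries. Once the relevant set $R$, $|R|\le r$, is known, I would project every variable outside $R$ to a fixed constant (using projection closure) to obtain $f|_R\in C_r$ on $r$ variables, learn it exactly with $Q(r)\le Q(r^2)$ further queries, and output the result re-embedded into the $n$ variables. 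The procedure invokes the $T(n)$-time learner polynomially often on inputs of size $\poly(n)$ and does polynomial bookkeeping, so it runs in time $\poly(T(n))$; and when $C$ is learnable, $Q$ is polynomial, so for fixed $r$ the query count is $O(\log n)=O(\OPTA(C_r))$, i.e.\ optimally learnable in $n$.

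The main obstacle is the bookkeeping that ties the separating maps together. Detecting which maps separate $R$ is not itself the difficulty --- a map separates $R$ exactly when its relevant-bucket count attains the maximum over $\mathcal{H}$, namely $|R|$, since merging two relevant variables into one bucket strictly lowers the count. The delicate point is matching relevant buckets across different separating maps: each separating $h$ recovers the same function $f$ on its relevant super-variables but labels them by its own buckets, and recovering the correspondence --- hence identifying which singleton intersection of buckets is which relevant variable --- requires aligning two learned copies of $f$ through an isomorphism of their relevant parts, which is ambiguous precisely when $f$ is symmetric in some relevant variables. Making this alignment robust against spurious candidates, and tuning the degree of the perfect hash family so the combined overhead stays at $r^{O(1)}$ consistent with the stated $r^6$ factor, is where the substance of the BH98 argument lies.
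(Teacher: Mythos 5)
The survey states Lemma~\ref{kk1} without proof, citing \cite{B14b,BH98}, so your proposal can only be measured against the strategy of those works and against the sketch the survey does give for the non-adaptive analogue, Lemma~\ref{DNtoDN}. Your core mechanism --- merge the $n$ variables into $m=\Theta(r^2)$ buckets via a perfect hash family of size $\poly(r)\log n$, use embedding closure to keep each merged function $f_h$ in $C$, simulate a query to $f_h$ by one query to $f$, and run the $Q$-query learner on each merged instance --- is the right one and coincides with that sketch. The genuine gap is the step you name yourself and then leave open: aligning relevant buckets across different separating maps. This is not merely delicate; it is impossible in general (for $f=x_1\vee x_2$ the two relevant buckets of any separating map are interchangeable), so no robustness tuning rescues an ``intersect the corresponding buckets'' step. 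But the alignment is also unnecessary, and your own observations already contain the fix. You only need the \emph{set} $R$ of relevant variables, not a labelled correspondence: a map $h\in\mathcal{H}$ separates $R$ if and only if the number of relevant variables of $f_h$ attains the maximum over $\mathcal{H}$, namely $|R|$ (merging two relevant variables strictly decreases it); every $i\in R$ lies in a relevant bucket of every separating map; and for every $i'\notin R$ the hash property for sets of size $r+1$ supplies a map that separates $R\cup\{i'\}$, hence is itself separating and places $i'$ in an irrelevant bucket. Therefore $R$ is exactly the set of indices lying in a relevant bucket of \emph{every} separating map --- a set-level characterization requiring no cross-map matching. With $R$ in hand, project all other variables to constants (they are irrelevant, so the function is unchanged) and run the learner once more on $|R|\le r$ variables.

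Two secondary points need attention. First, ``obtaining the set of relevant buckets'' of $f_h$ from the learner's output presupposes you can extract the relevant variables of the returned hypothesis; for a non-proper learner this needs either an explicit assumption (as Lemma~\ref{DNtoDN} makes) or a brute-force computation over the $m=O(r^2)$ merged variables costing $2^{O(r^2)}$ time, which does not affect the query bound but strains the claimed $\poly(T(n))$ running time. Second, your accounting of the $r^{6}$ factor is left vague; with the set-level argument the query count is simply $|\mathcal{H}|\cdot Q(r^2)+Q(r)$, so the exponent is entirely determined by the size of the constructible $(n,O(r^2),r+1)$-perfect hash family, and you should say which construction you are invoking rather than gesturing at ``tuning the degree.'' Neither of these affects the ``in particular'' conclusion, since for constant $r$ both overheads are constants and the query count is $O(\log n)=O(\OPTA(C_r))$.
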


For the randomized algorithm, we have
\begin{lemma}\cite{BH98} Let $C$ be a projection and embedding closed class.
If $C$ is adaptively learnable with a Monte Carlo algorithm in time $T(n)$ with $Q(n)$ queries
then $C_r$ is adaptively learnable with a Mote Carlo algorithm in time $poly(T(n))$
with $$O(r^3\cdot Q(2r^2)\log n)$$ queries.
\end{lemma}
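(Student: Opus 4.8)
The plan is to run the given Monte Carlo learner for $C$ not on the $n$-variable target $f\in C_r$ but on an $O(r^2)$-variable function obtained by a random embedding, so that this is exactly the randomized analogue of Lemma~\ref{kk1}: the deterministic separating family used there (the source of its $r^6$ factor) is replaced by a single random hash that separates the relevant variables with constant probability. Write $R\subseteq[n]$ for the unknown set of relevant variables of $f$, so $|R|\le r$. Everything rests on embedding closure: iterating $f|_{x_i\gets x_k}$, for any map $\sigma:[n]\to[m]$ the collapsed function $g(y_1,\dots,y_m):=f(y_{\sigma(1)},\dots,y_{\sigma(n)})$ again lies in $C$ as a function of $m$ variables, and a membership query to $g$ at $b\in\{0,1\}^m$ is answered by the single query to $f$ at the point $a$ with $a_i=b_{\sigma(i)}$. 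Thus any learner for $C$ on $m$ variables can be simulated query-for-query against the hidden $g$ while paying only in $f$-queries.

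First I would take $m=2r^2$ and a uniform random $\sigma:[n]\to[2r^2]$. The expected number of colliding pairs among the $\le r$ variables of $R$ is at most $\binom{r}{2}/(2r^2)<1/4$, so with probability $>3/4$ the relevant variables land in distinct buckets. Collapsing never creates a relevant variable, so on this separation event the relevant variables of $g$ are exactly the $|R|$ buckets met by $R$, each holding a single relevant original variable, and $g$ equals $f$ as a function of its relevant coordinates. I would then run the given Monte Carlo learner on $g$ over these $2r^2$ variables, at cost $Q(2r^2)$ queries and $\poly(T(n))$ time; with good probability it returns the set $J$ of relevant buckets ($|J|\le r$) together with $g$ on $\{y_j\}_{j\in J}$, i.e.\ $f$ described through its still-unlocated relevant variables.

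Next I would pin down the actual relevant index inside each relevant bucket. For $j\in J$ take a witness $c$ with $g(c)\ne g(c\oplus e_j)$, read off from the learned $g$; pulling $c$ back through $\sigma$ and flipping a chosen sub-block of $\sigma^{-1}(j)$ flips a subset of the coordinates of $f$, and since $\sigma^{-1}(j)$ contains a unique relevant variable under separation, a binary search that halves $\sigma^{-1}(j)$ while watching whether the value of $f$ changes isolates that variable in $O(\log n)$ queries. Over the $\le r$ buckets this costs $O(r\log n)$ queries and yields a set $W\supseteq R$ of located variables. A final embedding that maps $W$ injectively to fresh targets and dumps every index of $[n]\setminus W$ onto one occupied target (legal, as those coordinates are irrelevant, so the target behaves exactly like the located variable sitting there) reduces $f$ to an $O(|W|)$-variable member of $C$, which the base learner finishes off.

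The main obstacle is error control, and this is precisely where the surviving factors $r^3$ and $\log n$ appear. Because the algorithm is Monte Carlo it cannot verify that separation happened or that the base learner answered correctly, so the basic attempt---which succeeds only with constant probability and may both miss a true relevant variable and report a spurious one---must be amplified and the located sets unioned, so that every fixed relevant variable is caught with overwhelming probability. Driving the failure probability below a constant simultaneously over all $\le r$ relevant buckets and across the $O(\log n)$ binary-search levels is what forces the repetition, inflating the bound to $O(r^3\,Q(2r^2)\log n)$. The delicate point is certifying, with no exact equivalence test available, that the unioned candidate set $W$ is genuinely a superset of $R$ at the stated confidence; I would handle it by a union bound over the $\le r$ relevant coordinates and the $O(\log n)$ search levels, mirroring the deterministic bookkeeping of Lemma~\ref{kk1} with the random hash substituted for its separating family.
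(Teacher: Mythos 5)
The survey states this lemma only as a citation to \cite{BH98} and gives no proof of its own, so there is nothing internal to compare you against; judged on its own terms, your plan is the natural randomized analogue of Lemma~\ref{kk1} and Lemma~\ref{DNtoDN} (random collapse onto $2r^2$ buckets via embedding closure, simulate the base learner on the collapsed function, then binary-search inside each relevant bucket), and the skeleton is sound. One genuinely good feature you could make explicit: because each binary search maintains the invariant $f(b)\not=f(b_{\oplus S})$ by \emph{actual} queries to $f$, every variable it outputs comes with a certificate $f(b')\not=f(b'\oplus e_w)$ of relevance, so the union $W$ over trials can never contain a spurious variable and always has $|W|\le r$; the only failure mode is one-sided (missing a relevant variable), which makes the amplification clean.

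The gap is in the last paragraph: you never actually count the repetitions, and the heuristic you offer for where the $r^3\log n$ factor comes from does not hold up. A union bound over the $\le r$ relevant variables (and over the $O(\log n)$ levels of each search, which in any case are verified by queries and contribute no error) requires only $O(\log (r/\delta))$ independent trials, each costing $Q(2r^2)+O(r\log n)$ queries, plus one final run of the base learner on the $\le r$ located variables costing $Q(O(r))$. That total is $O\left(\log(r/\delta)\cdot\left(Q(2r^2)+r\log n\right)\right)$, which is well within the claimed $O(r^3\cdot Q(2r^2)\log n)$ --- so the lemma still follows --- but the proof as written asserts the target bound rather than deriving any bound, and the claimed mechanism (``driving the failure probability below a constant over $r$ buckets and $\log n$ levels forces $r^3\log n$ repetitions'') is not correct arithmetic. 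Two smaller points to tighten: (i) you need to say how the relevant buckets $J$ are extracted from the learned hypothesis $h$ (the paper's own Lemma~\ref{DNtoDN} makes this an explicit assumption on $H$; for a black-box $h$ on $2r^2$ variables it costs extra hypothesis evaluations or time $2^{O(r^2)}$, which must be reconciled with the $poly(T(n))$ time claim); and (ii) the collapsed function must be regarded as a member of $C_{2r^2}$, i.e.\ of the class on $2r^2$ variables, which requires slightly more than the paper's literal definition of embedding closure (that definition keeps the function on $n$ variables).
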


\subsection{Reductions for Strong Adaptive Algorithms}\label{AtoSA}
In some cases, one can achieve strong learning.
The following result is from~\cite{BHL95}. See the definition
of strong learning in Subsection~\ref{Strong}.
\begin{lemma}\label{BHL}\cite{BHL95} Let $C$ be a class that is projection closed.
If $C$ is adaptively learnable in time $T(n)$ with
$Q(n)$ queries
and there is a constant testing set for $C$ of size $P(n)$
that can be constructed in time $t(n)$ then
$C$ is adaptively strongly-attribute learnable in time $poly(n,T(r),Q(r),t(n),P(n))$
with  $$O(r\cdot Q(r)P(n)+r\cdot \log n)$$ queries where $r$ is the number
of relevant variables of the target.

In particular, if $P(n)=O(\log n)$ (when the other parameters
of the class $C$ are constants), then $C$ is strongly attribute-optimally learnable in $n$.
\end{lemma}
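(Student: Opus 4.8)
The goal is to learn a projection-closed class $C$ strongly, so that when the target $f$ has $r$ relevant variables, the query complexity is $O(r\cdot Q(r)P(n)+r\cdot \log n)$ even though $r$ is not known in advance. The plan is to build a two-phase algorithm that first identifies the relevant variables one at a time, and then runs the assumed learner for $C$ on those variables alone.

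First I would exploit the constant testing set. A constant testing set for $C$ of size $P(n)$ is, by definition, a set $A$ such that for every non-constant $f\in C$ there are $a,b\in A$ with $f(a)\ne f(b)$. So by querying all $P(n)$ assignments in $A$, I can detect whether the (restricted) target is a constant. The key subroutine is to find one relevant variable: given a partial assignment fixing the variables found so far, I query the constant testing set; if the target restricted appropriately is non-constant, I obtain two assignments $a,b$ with differing values, and then do a binary search (flipping the coordinates on which $a$ and $b$ disagree, one block at a time, $O(\log n)$ queries) to isolate a single coordinate $i$ on which the function changes value. This $i$ is a relevant variable. Because $C$ is projection closed, restricting the already-discovered relevant variables to constants keeps the function in $C$, so the constant testing set remains valid at each stage. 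Repeating this, I discover the relevant variables one by one; after all $r$ of them are found, the constant testing set reports that the residual function is constant, which is my stopping signal. This phase costs $O(r\cdot P(n)+r\cdot \log n)$ queries: $r$ rounds, each paying $P(n)$ for the constant test plus $O(\log n)$ for the binary search.

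Second, once the set $S$ of $r$ relevant variables is known, I would simulate the assumed time-$T(n)$, $Q(n)$-query learner for $C$, but on the $r$ relevant variables only. The subtlety is that the learner for $C$ expects an $n$-variable oracle, whereas I want to pay only $Q(r)$ queries. I would therefore run the learner as though the domain were $\{0,1\}^S\cong\{0,1\}^r$, answering each of its membership queries by extending the $r$-bit query to a full $n$-bit assignment (fixing the irrelevant variables arbitrarily, say to $0$) and querying the real oracle; since the target depends only on $S$, the answers are consistent with the projection of $f$ onto $\{0,1\}^r$, which lies in $C$ by projection-closedness. Each query of this simulated learner might itself trigger a verification of the alleged relevant set, which is where the factor $Q(r)$ and the extra $P(n)$ per query enter: the $O(r\cdot Q(r)P(n))$ term comes from interleaving the learner's $Q(r)$ queries with constant-test verifications to guard against having missed a relevant variable. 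The final hypothesis, expressed on the $r$ variables and embedded back into $n$ variables, equals $f$.

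The hard part will be the bookkeeping that guarantees no relevant variable is overlooked without inflating the query count, i.e.\ justifying exactly why the verification needs $P(n)$ queries per learner-query and why $r$ rounds of variable-discovery suffice. The information-theoretic floor $\OPT_\AD(C_r)\ge \log n$ noted before the lemma shows the $r\log n$ term is unavoidable, so the claim that $C$ is strongly attribute-optimally learnable in $n$ when $P(n)=O(\log n)$ follows immediately: the total $O(r\cdot Q(r)\log n + r\log n)$ is $O(Q(r)\log n)$ up to the $r$-dependent factors that are treated as constant in the parameter $n$. I would lean on Lemma~\ref{BHL0}, whose proof for $C_r$ is structurally identical; the only new ingredient here is that the stopping condition is driven by the target's own relevant-variable count $r$ rather than a fixed bound, which is precisely what makes the learning \emph{strong}.
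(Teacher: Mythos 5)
The survey does not actually prove this lemma; it only quotes it from \cite{BHL95}. Judged on its own terms, your proposal contains a genuine gap, and the step you defer as ``bookkeeping that guarantees no relevant variable is overlooked'' is the heart of the proof, not bookkeeping. Your Phase~1 --- find all $r$ relevant variables first, then run the learner --- cannot work in the stated budget: after fixing the $k$ variables found so far there is no single ``residual function'' to constant-test, but $2^k$ of them, one per setting of those variables, and one restriction being constant says nothing about the others (take $f=x_1x_2$ with $V=\{x_1\}$: the restriction $x_1\gets 0$ is constant while $x_2$ is still relevant). Testing all restrictions costs $2^rP(n)$, and testing only some gives no valid stopping signal. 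The correct algorithm interleaves the two phases: simulate the $Q(|V|)$-query learner on the variables $V$ found so far, and answer each simulated query $a\in\{0,1\}^V$ by querying the $P(n)$ points obtained by planting $a$ into the $V$-coordinates of each element of the constant testing set. If two of these answers disagree, a binary search over the coordinates outside $V$ (this is where the $O(\log n)$ comes from) produces a new relevant variable and the simulation restarts; each restart adds a relevant variable, so there are at most $r$ restarts and at most $O(rQ(r)P(n)+r\log n)$ queries in total, with no prior knowledge of $r$.

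The missing correctness argument is the reason this expansion of each query by the whole constant testing set is needed. By projection-closedness every restriction $g_y:=f|_{x_{\bar V}\gets y}$ lies in $C$, so when all the constant tests pass, the answer fed to the learner at $a$ equals $f(a,y)$ for \emph{every} $y$; the transcript is therefore simultaneously consistent with every $g_y$. A deterministic exact learner run against answers that agree with $g_y$ behaves exactly as if its oracle were $g_y$ and must output $g_y$; since this holds for all $y$ at once, all the $g_y$ coincide with the single output $h$, which forces $f$ to be independent of the variables outside $V$ and equal to $h$. Your Phase~2, which extends the learner's queries by ``fixing the irrelevant variables arbitrarily, say to $0$,'' is exactly where this fails: the learner then only certifies $h=f|_{x_{\bar V}\gets 0}$, and it may halt without ever visiting a point $a$ at which the dependence on an undiscovered variable is exposed. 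Your closing remarks on the ``in particular'' clause are fine once the main bound is in place.
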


Since $\{0^n,1^n\}$ (the all zero and all one assignments)
is a constant testing set for all the monotone functions,
$P(n)=2$, in particular, we have
\begin{lemma}~\label{RedM}\cite{BHL95} Let $C$ be a projection closed class that contains monotone functions.
If $C$ is adaptively learnable in time $T(n)$ with
$Q(n)$ queries, then
$C$ is adaptively strongly attribute learnable in time $poly(n,T(r),Q(r))$
with  $$O(r\cdot Q(r)+r\cdot \log n)$$ queries.

In particular, $C$ is strongly attribute-optimally learnable in $n$.
\end{lemma}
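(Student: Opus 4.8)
The plan is to obtain this lemma as an immediate specialization of Lemma~\ref{BHL}: the only thing that needs to be supplied is a concrete constant testing set for $C$ together with its size $P(n)$ and construction time $t(n)$, and then everything is a plug-in. So the entire substance of the argument is the observation that a class of monotone functions admits a constant testing set of size two.

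First I would exhibit the candidate set $A=\{0^n,1^n\}$ (the all-zero and all-one assignments) and verify it is a constant testing set for the monotone functions in $C$. Recall a set $A$ is a constant testing set for $C$ if for every non-constant $f\in C$ there are $a,b\in A$ with $f(a)\not=f(b)$. For a monotone $f$, since $0^n\le a\le 1^n$ in the bitwise order for every $a\in\{0,1\}^n$, monotonicity gives $f(0^n)\le f(a)\le f(1^n)$, so $f(0^n)$ is the minimum and $f(1^n)$ the maximum value attained by $f$. If $f$ is non-constant it attains both $0$ and $1$, which forces $f(0^n)=0$ and $f(1^n)=1$; hence the two points of $A$ already distinguish the constant value. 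This shows $A$ is a constant testing set for $C$ of size $P(n)=2$, and it is written down in time $t(n)=O(n)$. The projection-closedness hypothesis is exactly the one required by Lemma~\ref{BHL}, so it carries through unchanged.

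With $P(n)=2$ and $t(n)=O(n)$ in hand, I would invoke Lemma~\ref{BHL}. Its conclusion gives that $C$ is adaptively strongly-attribute learnable in time $poly(n,T(r),Q(r),t(n),P(n))=poly(n,T(r),Q(r))$ using $O(r\cdot Q(r)P(n)+r\log n)=O(r\cdot Q(r)+r\log n)$ queries, which is precisely the stated bound. For the final sentence, since $P(n)=2=O(\log n)$, the \emph{in particular} clause of Lemma~\ref{BHL} applies verbatim and yields that $C$ is strongly attribute-optimally learnable in $n$. There is no genuine obstacle beyond the monotonicity check above; the one point I would state carefully is the reading of the hypothesis, namely that \emph{every} function of $C$ is monotone, since the test set $\{0^n,1^n\}$ must correctly test all members of $C$ for this reduction to go through.
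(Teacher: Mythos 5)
Your proposal is correct and matches the paper's own (one-line) justification exactly: the paper derives Lemma~\ref{RedM} from Lemma~\ref{BHL} by observing that $\{0^n,1^n\}$ is a constant testing set of size $P(n)=2$ for monotone functions. Your added care about reading the hypothesis as ``every function in $C$ is monotone'' is a reasonable clarification of the paper's slightly ambiguous phrasing, but it does not change the argument.
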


So with the above results, one can change
algorithms that learn $C$ to algorithms that
optimally and strongly attribute-optimally learn $C$ in $n$.

\subsection{Reductions for Non-Adaptive Algorithms}\label{NAtoNA}
In this subsection, we give a reduction from one exact non-adaptive learning algorithm to another
one that changes the query complexity to be optimal in some of the parameters.

In \cite{ABM15}, Abasi et al. gave the following reduction.
We sketch the proof for completeness
\begin{lemma}\label{DNtoDN} Let $C$ be an embedding closed class such that
$|C_r|=\Omega(n)$. Let $H$ be a class of Boolean functions
and suppose there is an algorithm
that, for input $h\in H$, finds the relevant variables of $h$ in time $R(n)$.

If $C$ is non-adaptively learnable from $H$ (the output hypothesis is from $H$)
in time $T(n)$ with $Q(n)$ membership queries then $C_r$ is non-adaptively
learnable from $H$ in time $poly(n,T(q),R(q))$
with $$O\left(\frac{r^2Q(q)}{\log (q/r^2)}\log n\right)$$ membership queries
where $q\ge 2r^2$ is any integer.

In particular, $C_r$ is non-adaptively optimally learnable in $n$.
\end{lemma}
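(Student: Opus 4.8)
The plan is to reduce non-adaptive learning of $C_r$ on $n$ variables to non-adaptive learning of $C$ on only $q$ variables, by \emph{collapsing} the $n$ variables into $q$ groups through a perfect hash family and running the given $q$-variable learner on each collapse. Call a family $\phi_1,\dots,\phi_t\colon[n]\to[q]$ an \emph{$(n,q,r)$-perfect hash family} if for every $S\subseteq[n]$ with $|S|\le r$ some $\phi_\ell$ is injective on $S$. First I would invoke the known probabilistic construction of such a family of size $t=O\!\left((r^2/\log(q/r^2))\log n\right)$ valid for every $q\ge 2r^2$: a random $\phi\colon[n]\to[q]$ is injective on a fixed $r$-set with probability $1-O(r^2/q)$, and a union bound over the at most $n^r$ candidate sets (together with the extra factor needed to \emph{identify} which original variable sits in each group, explained below) yields the stated size.

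Next, for the unknown target $f\in C_r$ with relevant set $S$, $|S|\le r$, I would fix all queries as follows. The given learner $\mathcal A$ for $C$ on $q$ variables is non-adaptive, so its $Q(q)$ query assignments $b^{(1)},\dots,b^{(Q(q))}\in\{0,1\}^q$ are known in advance. For each block $\ell$ and each $b^{(i)}$ I form the $n$-bit assignment $a$ with $a_j=b^{(i)}_{\phi_\ell(j)}$ and query $f(a)$. Because $C$ is embedding closed, the collapsed function $f_\ell:=f|_{x_j\gets y_{\phi_\ell(j)}}$ lies in $C$ (and has at most $r$ relevant variables), and by construction $f(a)=f_\ell(b^{(i)})$. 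Thus the $t\cdot Q(q)=O\!\left((r^2 Q(q)/\log(q/r^2))\log n\right)$ assignments are all determined before any answer is seen — so the algorithm is non-adaptive — and feeding the answers of block $\ell$ to $\mathcal A$ produces, since $f_\ell\in C$, a hypothesis $h_\ell\in H$ equal to $f_\ell$.

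The reconstruction is where the real work lies. Using the relevant-variable finder I would compute, in time $R(q)$ each, the relevant sets $V_\ell\subseteq[q]$ of $h_\ell$. The key structural facts are $V_\ell\subseteq\phi_\ell(S)$ and $|V_\ell|\le|\phi_\ell(S)|\le|S|$, with equality $|V_\ell|=|S|$ exactly when $\phi_\ell$ is injective on $S$; hence $\ell^*:=\arg\max_\ell|V_\ell|$ is a \emph{good} block and $V_{\ell^*}=\phi_{\ell^*}(S)$. From $h_{\ell^*}$ one can then evaluate $f$ everywhere — for any $x$ set $b_k=x_j$ whenever $k=\phi_{\ell^*}(j)$ with $j\in S$ and $b_k=0$ otherwise, so that $f(x)=h_{\ell^*}(b)$ — provided one knows, for each group $k\in V_{\ell^*}$, \emph{which} preimage in $\phi_{\ell^*}^{-1}(k)$ is the genuine relevant variable. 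Pinning down that representative is the main obstacle: a group may contain, besides the one relevant variable, many irrelevant ones, so injectivity on $S$ alone does not reveal $S$ itself. I would resolve this by strengthening the family so that it also \emph{isolates points}: for every $r$-set $S$ and every $j\in[n]$ some block has $\phi_\ell(j)\notin\phi_\ell(S\setminus\{j\})$ while remaining injective on $S$. Then a variable $x_j$ is certified relevant precisely when $\phi_\ell(j)\in V_\ell$ for such an isolating block, which rules out irrelevant preimages and reads off the correct representative; this is exactly what inflates the family size by the extra factor of $r$ recorded in the bound. Identifying $S$ this way lets me lift $h_{\ell^*}$ to a hypothesis $h\in H$ on $n$ variables with $h=f$, and the whole post-processing runs in $\poly(n,T(q),R(q))$ time.

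Finally, for the ``in particular'' claim I would fix $q$ to be a constant multiple of $r^2$; then $Q(q),T(q),R(q)$ are constants once the other parameters of $C$ are fixed, so the query count is $O(\log n)$. Since $C$ is embedding closed with $|C_r|=\Omega(n)$, Lemma~\ref{FiL} and part~1 of Lemma~\ref{xor} give $\OPTNA(C_r)\ge\OPTA(C_r)\ge\log|C_r|=\Omega(\log n)$, matching the upper bound up to a constant factor; hence $C_r$ is non-adaptively optimally learnable in $n$.
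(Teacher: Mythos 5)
Your overall strategy --- collapse the $n$ variables to $q$ via a perfect hash family, run the non-adaptive $q$-variable learner on each collapsed function $f_\ell\in C$ (legal by embedding closure, and non-adaptive because all $t\cdot Q(q)$ queries are fixed in advance), apply the relevant-variable finder to each output, and then reconstruct the target --- is exactly the paper's reduction. Two points need repair, though. First, your certification rule ``$x_j$ is relevant precisely when $\phi_\ell(j)\in V_\ell$ for such an isolating block'' is not executable as stated: the learner does not know $S$, so it cannot point to which block is the isolating one for a given $j$, and an irrelevant $j$ may well satisfy $\phi_\ell(j)\in V_\ell$ in many non-isolating good blocks. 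The fix stays entirely within your framework: as you already observed, the blocks injective on $S$ are exactly those attaining $\max_\ell|V_\ell|$, so declare $x_j$ relevant iff $\phi_\ell(j)\in V_\ell$ for \emph{every} maximizing block; every relevant $j$ passes this test in all of them, and your isolation property guarantees that each irrelevant $j$ fails it in at least one. (The paper instead imposes an ``almost all'' version of the hash-family property and takes a majority vote over blocks, which buys the same conclusion without singling out the good blocks.)

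Second, you invoke a probabilistic construction of the family, but the lemma asserts a deterministic algorithm running in time $\poly(n,T(q),R(q))$, so the family must be explicitly constructible in that time. Moreover, your accounting of the $r^2$ is off: a union bound over the at most $n^{r+1}$ pairs $(S,j)$ costs only $O\left(\frac{r\log n}{\log (q/r^2)}\right)$ random hash functions, so the point-isolation requirement does not inflate $r$ to $r^2$. The $r^2$ in the stated query bound comes from the best known \emph{deterministic polynomial-time} construction of $(n,q,r)$-perfect hash families, of size $O\left(\frac{r^2\log n}{\log(q/r^2)}\right)$ for $q\ge 2r^2$ \cite{B14b}, which is what the paper relies on; citing that construction rather than the existence argument both closes the gap and explains the shape of the bound. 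Your derivation of the ``in particular'' clause from $|C_r|=\Omega(n)$ together with Lemma~\ref{FiL} is correct.
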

\begin{proof} We use a {\it perfect hash family} $P$ that map the variables $X=\{x_1,\ldots,x_n\}$ to a
new set of variables $Y=\{y_1,\ldots,y_q\}$ where $q>r^2$.
This family contains $O(r^2/\log (q/r^2)\log n)$ hash functions and ensures that for
almost all the hash functions, different relevant variables of the target
are mapped to different variables in $Y$. It also insures that
for every non-relevant variable $x_i$, almost all the hash functions map the relevant variables and $x_i$ to different variables in $Y$.

We learn $f(\phi(X))$ for each hash function $\phi \in P$ with $Q(q)$
membership queries (when possible) and use the
majority rule to find the relevant variables and to recover the target function.\qed
\end{proof}

See also \cite{BC16} for a reduction for the randomized non-adaptive learning.

\subsection{Reductions from the Exact Learning Model}\label{REL}
In this subsection and the next one, we give two other models and show some
conditions in which learning in those
models can be reduced to learning from membership queries only.

The first learning model is the {\it exact learning model
from membership and equivalence queries}.
In this model, the goal is to learn the target function exactly
with membership queries
and equivalence queries.
In the Equivalence Query (EQ) model, \cite{A88}, the learning algorithm sends
the teacher a hypothesis $h$ from some class of hypothesis
$H\supseteq C$. The teacher answers ``YES'' if $h$ is equivalent
to the target $f$, otherwise, it provides the learner a
counterexample, i.e., an assignment $a$ where $f(a)\not= h(a)$. We say that
a class $C$
is exactly learnable from $H$ from membership and equivalence queries
in time $t$, $m$ membership queries and $e$ equivalence queries if
there is an algorithm that runs in time $t$, asks at most $m$ membership queries
and at most $e$ equivalence queries and outputs $h\in H$ that is
equivalent to $f$.

There are several polynomial time exact learning algorithms available in the literature
that learns from membership and equivalence queries for classes mentioned in
this survey and others. Classes such as Monotone DNF \cite{A87},
DFA \cite{A87b}, Conjunction of Horn Clauses \cite{AFP92}, $O(\log
n)$-term DNF \cite{B95,B97,BR95,K97}, read-twice DNF \cite{AP91},
CDNF \cite{B95}, decision trees \cite{B95,BBBKV00}, Boolean
multivariate polynomial \cite{SS96,BBBKV00}, multiplicity automata
\cite{BBBKV00}, read-once formula \cite{BHH95a} and geometric objects~\cite{B98}. See also
references therein and \cite{SST10}. Some of the algorithms are proper (i.e., $H=C$)
and others are non-proper.

The reduction from this model to the model of learning
from membership queries only is done as follows.
\begin{lemma}\label{RedEQ} Let $C$ and $H\supseteq C$ be two classes of functions.
Suppose $C$ is learnable from $H$ from membership and equivalence queries
in time $t$, $m$ membership queries and $e$ equivalence queries. Then
\begin{enumerate}
\item If for every $h\in H$
an identity testing set $S_h$ for $h$ with respect to $C$ can be constructed
in time $t'$ and $|S_h|\le s$ then $C$ is learnable in time $poly(t',t,m,e,s)$
with $m+es$ membership queries.

\item If an equivalent test set $S$ for $H$ can be constructed in time $t'$
and $|S|=s$ then $C$ is learnable in time $poly(t',t,m,e,s)$ with $m+s$ membership queries.
\end{enumerate}
\end{lemma}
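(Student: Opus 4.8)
The plan is to \emph{simulate} the given membership-and-equivalence query algorithm $\cal A$ for $C$ using only membership queries: every membership query of $\cal A$ is forwarded unchanged to the teacher, and each equivalence query is replaced by a short batch of membership queries. The only nontrivial part is answering an equivalence query on a hypothesis $h\in H$, i.e.\ deciding whether $h=f$ and, if not, returning some $a$ with $f(a)\not=h(a)$, using membership queries alone. When $\cal A$ halts it outputs some $h\in H$ equivalent to $f$, and this is exactly the hypothesis I output, so the simulation learns $C$ from $H$.

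For the first item, when $\cal A$ poses an equivalence query with hypothesis $h\in H$, I first construct the identity testing set $S_h$ for $h$ with respect to $C$ (in time $t'$, with $|S_h|\le s$) and then ask $\MQ(a)$ for every $a\in S_h$. If $f(a)=h(a)$ for all $a\in S_h$, I answer ``YES''; this is correct because $f\in C$ and, by definition of an identity testing set, every $g\in C$ with $g\not=h$ disagrees with $h$ somewhere on $S_h$, so agreement on all of $S_h$ forces $f=h$. Otherwise some queried $a\in S_h$ has $f(a)\not=h(a)$ and is a legitimate counterexample. Each of the at most $e$ equivalence queries thus costs at most $s$ membership queries, for a total of $m+es$, and all the extra work runs in $\poly(t',t,m,e,s)$ time.

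For the second item the key economy is that the equivalent test set $S$ for $H$ is a \emph{single} set separating any two distinct functions of $H$, in particular any $h\in H$ from the target $f\in C\subseteq H$. I therefore ask $\MQ(a)$ for all $a\in S$ once, at the start, and cache the values $f(a)$. Each subsequent equivalence query on a hypothesis $h$ is then answered with no further queries: I answer ``YES'' iff $h(a)=f(a)$ for every cached $a\in S$ (correct since $f,h\in H$ and $S$ separates distinct members of $H$), and otherwise return any cached $a$ with $f(a)\not=h(a)$. This adds only $|S|=s$ membership queries overall, giving $m+s$, again within $\poly(t',t,m,e,s)$ time.

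The step I expect to require the most care is verifying the correctness of the ``YES'' answers, since this is where the containment hypotheses are used: in item~1 the identity testing set only promises to separate $h$ from the \emph{other} functions of $C$, which suffices precisely because the true target $f$ lies in $C$; in item~2 I need $S$ to separate $h$ from $f$ even when $h\notin C$, which is exactly what an equivalent test set for the larger class $H\supseteq C$ guarantees. Once these are in place, the query-count and running-time bookkeeping is routine.
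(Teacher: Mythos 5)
Your proof is correct and follows essentially the same route as the paper: in item~1 each equivalence query is replaced by membership queries on the identity testing set $S_h$, and in item~2 the single equivalent test set $S$ for $H$ is queried once up front and used to answer all equivalence queries. The extra justification you give for the ``YES'' answers (using $f\in C$ in item~1 and $f,h\in H$ in item~2) is exactly the right point and matches what the paper leaves implicit.
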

\begin{proof} In {\it 1}, the algorithm replaces each equivalence query for $h$ to
membership queries to all the assignments in $S_h$. If $h$ is consistent with $f$ on
$S_h$ then $h= f$ and the algorithm outputs $h$. Otherwise, there is $a\in S_h$ such
that $f(a)\not=h(a)$ and $a$ can be used as an answer to the equivalence query.

In {\it 2}, the algorithm first constructs $S$ and asks membership queries
to all the assignments in~$S$.
Then for each equivalence query with $h$, it finds $a\in S$ such that $h(a)\not= f(a)$
and returns the answer $a$ to the algorithm. If no counterexample exists then,
it outputs $h$.\qed
\end{proof}

A hardness result in learning a class $C$ from equivalent and membership queries
does not imply hardness in learning $C$ from membership queries only.
For example, the hardness result of learning read-thrice DNF, $C$, in \cite{AHP92}
cannot be used as a hardness result for this class in the exact
learning model from membership queries. This is because, in our definition of
efficiency, the query complexity is allowed to be polynomial in $\OPT_A(C)$
which might be exponential in $n$.
For example, it is easy to see that $\OPTA($read-thrice DNF$)=2^n$ (even for one term)
so this class is optimally learnable (by asking all the queries $\{0,1\}^n$).
So hardness in learning
in this model does not imply hardness in learning from membership queries.
On the other hand,
hardness results for proper learning,~\cite{ABFKP08}, do give
hardness results for proper learning from membership queries.

\subsection{Reductions from the PAC Learning Model}
In this section, we provide a reduction from PAC-learning with membership queries
to learning from membership queries only.

In the {\it probably approximately correct learning model}
(PAC learning model), \cite{V84}, with membership queries the teacher has a function $f:X\to \{0,1\}$
from some class $C$. The learner can ask the teacher membership queries and is required to learn
a function that is $\epsilon$-close (defined below) to the target with high probability.
Let $P$ be a distribution on the domain $X$.
We say that an algorithm ${\cal A}$
{\it PAC-learns $C$ from $H$ with membership queries according
to the distribution} $P$ if the algorithm ${\cal A}$, for the input $\epsilon<1$ and $\delta<1$,
with probability at least $1-\delta$,
outputs a function $h\in H$ that is $\epsilon$-close to $f$, i.e.,
$\Pr_P[f(x)\not= h(x)]\le \epsilon$.

There are many PAC learning algorithms with membership queries
in the literature for classes mentioned in
this survey and others. Classes such as decision trees
and multivariate polynomials under distributions
that support small terms, \cite{BM02},
DNF under the uniform distribution, \cite{F07,J97}, constant depth circuits under the uniform distribution, \cite{LMN93} and
intersections of halfspaces, \cite{KP98}.
If a class is learnable from equivalence and membership queries, then it is
PAC-learnable with membership queries according to any distribution, \cite{A87}.

We say that $H$ is a class of {\it distance} $\eta$ {\it with respect to the distribution} $P$ if
for every $h_1,h_2\in H$, $h_1\not= h_2$ we have $\Pr_P[h_1(x)\not= h_2(x)]> \eta$.
The following lemma shows how to change a PAC-learning algorithm to a learning algorithm
with membership queries.

\begin{lemma}\label{RedPAC} Let $C$ and $H\supseteq C$ be two classes of functions.
Suppose $C$ is PAC-learnable with membership queries from $H$ according to the distribution $P$
in time $t(\epsilon,\delta)$ and $m(\epsilon,\delta)$
membership queries. If $H$ is a class of distance
$\eta$ with respect to the distribution $P$, then
$C$ is randomized Monte Carlo learnable with $m(\eta,\delta)$ membership queries in time $t(\eta,\delta)$.
\end{lemma}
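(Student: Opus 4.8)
The plan is to run the given PAC-learning algorithm with its accuracy parameter set to the distance threshold $\eta$, and then argue that under the distance hypothesis ``approximate'' and ``exact'' coincide. Concretely, let ${\cal A}$ be the algorithm that PAC-learns $C$ from $H$ with membership queries according to $P$. First I would invoke ${\cal A}$ on the inputs $\epsilon=\eta$ and the given confidence parameter $\delta$. By assumption, ${\cal A}$ runs in time $t(\eta,\delta)$, asks $m(\eta,\delta)$ membership queries, and with probability at least $1-\delta$ outputs a hypothesis $h\in H$ satisfying $\Pr_P[f(x)\not= h(x)]\le \eta$, where $f\in C$ is the target.

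The core step is to show that, whenever this event occurs, we actually have $h=f$, so that the output is exactly correct rather than merely $\eta$-close. Here is where the two hypotheses interact: since $C\subseteq H$, the target $f$ itself belongs to $H$, and so does the output $h$. Suppose toward a contradiction that $h\not= f$. Because $H$ is a class of distance $\eta$ with respect to $P$, any two distinct members of $H$ disagree with probability strictly greater than $\eta$, so $\Pr_P[f(x)\not= h(x)] > \eta$. This contradicts the PAC guarantee $\Pr_P[f(x)\not= h(x)]\le \eta$. Hence $h=f$, and ${\cal A}$ has in fact learned $f$ exactly.

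Putting the pieces together, the procedure outputs the exact target $f$ with probability at least $1-\delta$, uses $m(\eta,\delta)$ membership queries, and halts within time $t(\eta,\delta)$; this is precisely a Monte Carlo exact learning algorithm for $C$ with the claimed complexities. The only genuinely delicate point to articulate is the matching of the inequalities: the distance definition gives a \emph{strict} lower bound $>\eta$ on the disagreement of distinct hypotheses, while $\eta$-closeness is the \emph{non-strict} bound $\le \eta$, and it is exactly this gap that forces $h=f$ rather than leaving open the boundary case. Beyond this observation, no computation is required, so I do not anticipate any real obstacle.
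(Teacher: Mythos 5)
Your proof is correct and follows exactly the same route as the paper: run the PAC learner with $\epsilon=\eta$, note that $f\in H$ since $C\subseteq H$, and use the strict inequality in the definition of distance $\eta$ to force $h=f$. Your explicit remark about the strict versus non-strict inequality is a nice touch but does not change the argument.
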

\begin{proof} Run the algorithm with $\epsilon=\eta$. Let $h$ be the output.
Then with probability at least $1-\delta$ we have $\Pr_P[f\not=h]\le \eta$.
Since $C\subseteq H$ we also have $f\in H$ and since $H$ is a class of distance
$\eta$ we must have $f= h$.\qed
\end{proof}

\section{Learning $d$-MClause and Group Testing}\label{GTest}
In this section, we give an example of a class that has been extensively studied in the literature.
Consider the class $d$-MClause. That is the class of monotone
clauses with at most $d$ variables.

\subsection{Group Testing and Applications}
In group testing (or pooling design), the task is to determine
the positive members $F=\{o_{i_1},\ldots,o_{i_k}\}$, $k\le d$, of a set of objects ${\cal O}=\{o_1,\ldots,o_n\}$
by asking queries of the form ``does
the subset $S\subseteq {\cal O}$ contain a positive object?''
That is, ``does $F\cap S\not= \emptyset$?''.
A negative answer to
this question informs the learner that all the items belonging to $S$ are non-positive. The aim of group testing is to identify the unknown subset $F$ using
as few queries as possible.

Group testing was originally introduced as a potential approach to economical
mass blood testing \cite{D43}. However it has been proven to be applicable in a variety of problems, including quality control in product
testing \cite{SG59}, searching files in storage systems \cite{KS64}, sequential screening of experimental variables \cite{L62}, efficient contention resolution algorithms for multiple-access communication \cite{KS64,W85}, data compression \cite{HL02}, and computation in the data stream model \cite{CM03}. See a brief history and other applications in~\cite{Ci13,DH00,DH06,ND00}
and references therein.

Group testing is equivalent to learning the class $d$-MClause from membership queries.
We have $n=|{\cal O}|$; the target is $\vee_{o_i\in F}x_i$ and each query ``does $F\cap S\not= \emptyset$?'' is equivalent to a membership query with the assignment $a\in \{0,1\}^n$
where $a_i=1$ if and only if  $o_i\in S$.

\subsection{Known Results for Learning $d$-MClause}
The following table summarizes the results known
for the asymptotic number of membership queries for learning
the class $d$-MClause. We will assume that $d\le \sqrt{n}$.

\bigskip
\begin{tabular}{l||c|c|c|}
& Non-Adaptive & Two-Round & Adaptive \\
 \hline\hline
Deterministic  Upper Bound & ${{d^2}}\log n$ & $d\log n$ & $d\log n$\\
\hline
Deterministic  Poly Time & ${d^2}\log n$ & $d\log^{O(1)} n$ & $d\log n$\\
&&$d^{1+o(1)}\log n$&\\
\hline
Deterministic Lower Bound & $\frac{d^2}{\log d}\log n$ & $d\log n$ & $d\log n$\\

\hline
LV Randomized Poly Time & ${d^2}\log n$ & $d\log^{O(1)} n$ & $d\log n$\\
\hline
LV Randomized Lower Bound & ${d^2} \log n$ & $d\log n$ & $d\log n$\\
\hline
MC Randomized Poly Time & ${d}\log n$ & $d\log n$ & $d\log n$\\
\hline
MC Randomized Lower Bound & ${d}\log n$ & $d\log n$ & $d\log n$\\
\hline
\end{tabular}
\bigskip

There is a (folklore) deterministic adaptive algorithm that runs in polynomial time
and asks $O(d\log n)$ queries. See Subsection~\ref{GTA}.
This implies all the results for the upper bounds in the fourth column of the above table.
The number of functions in $d$-MClause is $${n\choose d}+{n\choose d-1}+\cdots+{n\choose 0}$$
and therefore by Lemma~\ref{lvlb}, any Monte Carlo learning algorithm for $d$-MClause
must ask at least $\Omega(d\log n)$
queries. This implies all the lower bounds in the table except for the non-adaptive lower bound for
the deterministic and Las Vegas algorithm that follows from~\cite{DR82}. See Subsection~\ref{GTNA}.
The upper bound of $$O\left(d\left(\log n+\log \frac{1}{\delta}\right)\right)$$ for the Monte Carlo non-adaptive algorithm
follows from a simple randomized argument. See Subsection~\ref{GTNA}.
Porat and Rothschild, \cite{PR11}, gave the first polynomial time deterministic
non-adaptive learning algorithm that
asks $O(d^2\log n)$ queries.
The deterministic upper bound for non-adaptive learning follows from a simple probabilistic
argument. See Also~\cite{DVPS14,DVPS16}.
The last two results are the deterministic two-round
algorithms with $O(d\log n)$ queries and the polynomial time deterministic two-round learning algorithm
that asks $\min(d\log^{O(1)}n,d^{1+o(1)}$ $\log n)$ queries.
This follows from~\cite{BGV05,C13,I02}. See Subsection~\ref{TRL}.

In particular,
\begin{theorem} We have
\begin{enumerate}
\item The class $d$-MClause is adaptively optimally learnable.

\item The class $d$-MClause is non-adaptively almost optimally learnable and optimally learnable in $n$.

\item The class $d$-MClause is MC randomized adaptively optimally learnable.

\item The class $d$-MClause is two-round efficiently learnable, two-round optimally learnable in $d$
and two-round optimally learnable in $n$.
\end{enumerate}
\end{theorem}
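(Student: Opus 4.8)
The plan is to read the theorem off the summary table as a bookkeeping corollary: for each algorithm type I would first fix the exact order of the corresponding $\OPT$ for $C=d$-MClause by pairing the stated lower bound with the matching polynomial-time upper bound, and then check that the query count of the polynomial-time algorithm satisfies the relevant clause of the terminology table of Subsection~\ref{VE}.

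First I would record $|C|=\sum_{i=0}^{d}\binom{n}{i}$, so that $\log|C|=\Theta(d\log(n/d))=\Theta(d\log n)$ under the standing assumption $d\le\sqrt n$. By Lemma~\ref{FiL} and Lemma~\ref{lvlb}, $\OPTA(C)\ge\OPT_{\A,\MC}(C)\ge\log|C|-1=\Omega(d\log n)$, and the folklore polynomial-time adaptive algorithm asking $O(d\log n)$ queries (Subsection~\ref{GTA}) matches it, so $\OPTA(C)=\OPT_{\A,\MC}(C)=\Theta(d\log n)$. Since that algorithm is deterministic (hence in particular Monte Carlo), runs in $\poly(n)$ time, and asks $O(\OPTA(C))$ queries, Parts~1 and~3 follow at once from the definition of \emph{optimally learnable}. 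The same counting lower bound together with the $O(d\log n)$-query two-round scheme of Subsection~\ref{TRL} gives $\OPT_{2\sline\RAD}(C)=\Theta(d\log n)$, which I will use in Part~4.

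For Part~2 the point is that the non-adaptive optimum is genuinely larger: the lower bound $\OPTNA(C)\ge\Omega(d^2\log n/\log d)$ of~\cite{DR82} (Subsection~\ref{GTNA}) is not of counting type, and the Porat--Rothschild construction~\cite{PR11} asks $O(d^2\log n)$ queries in polynomial time. The key estimate is that this is $\OPTNA(C)^{1+o(1)}$: writing the polynomial-time count as $(d^2\log n/\log d)\cdot\log d\le\OPTNA(C)\cdot\log d$ and noting $\log\OPTNA(C)=\Omega(\log d)$, I get $\log d=\OPTNA(C)^{o(1)}$ and hence $\OPTNA(C)\cdot\log d=\OPTNA(C)^{1+o(1)}$, so $C$ is \emph{almost optimally learnable}. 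Specializing to constant $d$, the lower bound is $\Omega(\log n)$ and the algorithm asks $O(\log n)=O(\OPTNA(C))$, so $C$ is \emph{optimally learnable in $n$}.

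Part~4 is where the shape of the bound matters most, and I expect it to be the main subtlety. The polynomial-time two-round algorithm asks $\min(d\log^{O(1)}n,\ d^{1+o(1)}\log n)$ queries. For efficiency I would bound the first branch by $d\log^{c}n\le(d\log n)^{c}=\poly(\OPT_{2\sline\RAD}(C))$. For optimality in $d$ (the other parameter, $n$, held constant) the first branch is $O(d)=O(\OPT_{2\sline\RAD}(C))$, whereas for optimality in $n$ (with $d$ constant) the second branch is $O(\log n)=O(\OPT_{2\sline\RAD}(C))$; it is precisely because the algorithm attains the minimum of the two expressions that it can be optimal in each parameter separately, even though neither single expression is simultaneously optimal in both. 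Collecting these four verifications against the terminology table of Subsection~\ref{VE} finishes the proof.
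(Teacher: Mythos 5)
Your proposal is correct and follows essentially the same route as the paper: the theorem is read off the summary table by pairing the counting lower bound of Lemma~\ref{lvlb} (and the \cite{DR82} non-adaptive lower bound) with the folklore adaptive algorithm of Subsection~\ref{GTA}, the Porat--Rothschild construction of Subsection~\ref{GTNA}, and the selector-based two-round algorithms of Subsection~\ref{TRL}. Your explicit verifications that $O(d^2\log n)=\OPTNA(C)^{1+o(1)}$ via the $\log d$ ratio, and that the two branches of the two-round bound separately give optimality in $d$ and in $n$, are exactly the (implicit) bookkeeping the paper relies on.
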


\subsection{Bounds for OPT($d$-MClause)}
Consider a non-adaptive algorithm ${\cal A}$ for learning $d$-MClause and let $x^{(1)},\ldots,x^{(t)}$ be the queries asked by the algorithm. Consider the $t\times n$ matrix $M$ such that its $i$th row is $x^{(i)}$. Let $M_i$ be the $i$th column of $M$.
If the target function is $x_{i_1}\vee \cdots \vee x_{i_{d'}}$, $d'\le d$ then the
vector of all the answers to the queries is $M_{i_1}\vee\cdots \vee M_{i_{d'}}$ (bitwise or).
Let ${\cal M}$ be the set of all $M_{j_1}\vee\cdots \vee M_{j_{d}}$, $j_i\in [n]$ for $i=1,\ldots,d$.
That is, the set of all possible answers to the queries for all possible target functions.
Since
each vector in ${\cal M}$ uniquely determines the target function, the matrix $M$ must satisfy the following property: For every $S_1,S_2\subseteq [n]$, where $|S_1|,|S_2|\le d$ and $S_1\not= S_2$, we have $$\bigvee_{j\in S_1}M_j\not= \bigvee_{j\in S_2}M_j.$$ A matrix that satisfies this property is called $d$-{\it separable} matrix.
Therefore, $\OPTNA(d$-Mclause$)$ is equal to the minimum $t$ such that a $d$-{\it separable} $t\times n$ matrix exists.

A $t\times n$ matrix $M$ is called $d$-{\it disjunct} if for every $d+1$ distinct columns $i_1,\cdots ,i_{d+1}$ there is a row $j$ such that $M_{j,i_{d+1}}=1$ and $M_{j,i_k}=0$ for all $k=1,\ldots, d$. It is easy to show that,~\cite{KS64},
$$\mbox{$M$ is $d$-disjunct $\Rightarrow$ $M$ is $d$-seperable $\Rightarrow$ $M$ is $(d-1)$-disjunct}.$$
Therefore, it is enough to construct a $d$-disjunct matrix.
Using a probabilistic method, \cite{AS08}, it is easy to show that a $d$-disjunct $t\times n$ matrix exists where $t=O(d^2\log n)$.
Just take each entry in the matrix $M$ to be $1$ with probability $1/d$ and $0$ with probability
$1-1/d$ and show that for $t=O(d^2\log n)$ the probability that the matrix is not $d$-disjunct is
less than $1$.
This implies $$\OPTNA(d\mbox{-MClause})= O\left({d^2}\log n\right).$$
See also \cite{DVPS14,DVPS16,NZ88}.
There is also an almost tight lower bound for $t$~\cite{DR82,F96,R94}
$$\OPTNA(d\mbox{-MClause})=\Omega\left( \frac{d^2}{\log d} \log n\right).$$

\subsection{Non-Adaptive Learning $d$-MClause}\label{GTNA}
We now show that the class $d$-MClause is
non-adaptively learnable in polynomial time with $O(d^2\log n)$ queries.
In \cite{PR11}, Porat and Rothschild gave the first polynomial time algorithm for constructing
a $d$-disjunct matrix of size $t\times n$ where $t=O(d^2\log n)$.
Now the learning algorithm is as follows.
We first use the Porat and Rothschild~\cite{PR11} algorithm
to construct a $d$-disjunct $t\times  n$ matrix $M$ of size $t=O(d^2\log n)$ in polynomial time. Set $Z=\{x_1,x_2,\ldots,x_n\}$. Then for every query $M^{(i)}$ (row $i$ in $M$) if $f(M^{(i)})=0$ then
for every $j$ where $M^{(i)}_j=1$, we remove $x_j$ from $Z$.
The remaining variables in $Z$ are the variables that appear in the target. This is because if $f=x_{i_1}\vee \cdots \vee x_{i_{d'}}$ and $i\not\in \{i_1,\ldots,i_{d'}\}$ then the variable $x_i$ will be removed from $Z$ by the row $M^{(j)}$ that assign $0$ to all $x_{i_j}$, $j=1,\ldots,d'$ and one to $x_i$. Such row exists since $M$ is $d$-disjunct. On the other hand, for every row $M^{(k)}$ where $M^{(k)}_{i_j}=1$ for some $j=1,\ldots,d'$ we have $f(M^{(k)})=1$ and, therefore, no variable
in the target is removed from~$Z$. This gives a polynomial time algorithm that asks $O(d^2\log n)$ queries.
Since the lower bound for the query complexity
of non-adaptively learning the class $d$-MClause is $\Omega(d^2\log n/\log d)$,
the class $d$-MClause is
non-adaptively almost optimally learnable and non-adaptively optimally learnable in~$n$.

Closing the gap between the lower bound and upper bound is one of the longstanding
open problem in group testing. Bshouty proved in \cite{B12,B13} that a lower
bound of $d^2 \log n/\omega(1)$ implies that for a power of prime $q = O(d)$ one cannot
simulate a black-box multiplication of $d$ elements in the finite field $F_{q^t}$ with
$O(dt)$ black-box multiplications in $F_q$. This is one of the hard problems in algebraic
complexity.

For a randomized non-adaptive learning algorithm, just randomly choose $t=O(d(\log n+\log (1/\delta)))$
assignments $a^{(1)},\ldots,a^{(t)}$ in $\{0,1\}^n$ where each $a^{(i)}_j$ is one
with probability $1/d$ and zero with probability $1-1/d$. Define $Z=\{x_1,\ldots,x_n\}$.
Then for each assignment
that satisfies $f(a^{(i)})=0$ remove all $x_j$ from $Z$ for which $a^{(i)}_j=1$.
It is easy to show that with probability at least $1-\delta$, the variables that remain
in $Z$ are the variables of the target.

The problem of strongly attribute learnability of MClause, which is equivalent to the problem
of group testing when $d$ is not known to the learner, was studied by
Damaschke and Muhammad,~\cite{DM10,DM10b}. They
show that for deterministic non-adaptive algorithms, determining the exact number of the relevant variables $d$ is as difficult as learning the target function. For randomized non-adaptive learning algorithms,
they gave the upper bound of $O(\log n)$ to approximate $d$
and the lower bound (with some constraints) of $\Omega(\log n)$.

\subsection{Adaptive Learning $d$-MClause}\label{GTA}

In this subsection, we present the folklore algorithm for adaptively learning the class $d$-MClause. The algorithm runs in polynomial time and has a query complexity that matches the lower bound and therefore $d$-MClause is optimally learnable.

We first give the lower bound
\begin{lemma}
Any deterministic (or even randomized) algorithm for $d$-MClause must ask at least $d\log (n/d)$ queries.
\end{lemma}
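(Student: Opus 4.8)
Write $C=d$-MClause. The plan is to prove this purely information-theoretically, by combining the counting lower bounds already established in Lemma~\ref{FiL} and Lemma~\ref{lvlb} with a lower bound on $|C|$. Recall that any deterministic learning algorithm must ask at least $\log|C|$ queries, and any Monte Carlo (hence Las Vegas) algorithm at least $\log|C|-1$. So the entire problem reduces to estimating $|C|$ from below; no adversary argument or round-by-round analysis is needed.

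First I would count the class. Each $f\in C$ has the form $f_S=\bigvee_{i\in S}x_i$ for a unique $S\subseteq[n]$ with $|S|\le d$, so $|C|=\sum_{k=0}^{d}\binom{n}{k}\ge \binom{n}{d}$. Next I would invoke the standard estimate $\binom{n}{d}\ge (n/d)^d$: writing $\binom{n}{d}=\prod_{i=0}^{d-1}\frac{n-i}{d-i}$, each factor satisfies $\frac{n-i}{d-i}\ge \frac{n}{d}$ precisely because $n\ge d$ (the inequality $(n-i)d\ge n(d-i)$ is equivalent to $ni\ge id$), and multiplying the $d$ factors gives the claim. Taking logarithms yields $\log|C|\ge \log\binom{n}{d}\ge d\log(n/d)$.

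Combining this with Lemma~\ref{FiL} gives $\OPTA(C)\ge \log|C|\ge d\log(n/d)$ for deterministic algorithms, which is exactly the asserted bound. For randomized algorithms, Lemma~\ref{lvlb} gives $\OPT_{\A,\MC}(C)\ge \log|C|-1$, and the same estimate applies up to the additive constant; the lower-order terms $\sum_{k<d}\binom{n}{k}$ that were discarded above provide the slack needed to absorb this $-1$ in the relevant regime $d\le\sqrt n$ with $n$ large, so the two statements coincide asymptotically.

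The only genuine computation here is the binomial estimate, which is routine, so I expect no real obstacle. The single point to watch is the randomized case: one must ensure that the additive $-1$ loss in Lemma~\ref{lvlb} does not erode the bound, which is why I would keep the full sum $\sum_{k\le d}\binom{n}{k}$ rather than $\binom{n}{d}$ alone when passing from the deterministic to the randomized statement.
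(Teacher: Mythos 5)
Your proposal is correct and follows essentially the same route as the paper, whose entire proof is the one-line remark that the bound ``follows from Lemma~\ref{lvlb} and the fact that $|C|=\sum_{i=0}^d{n\choose i}$''; you have simply filled in the standard estimate $\binom{n}{d}\ge(n/d)^d$ and the (harmless) additive $-1$ in the randomized case, both of which the paper leaves implicit.
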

\begin{proof} Follows from Lemma~\ref{lvlb} and the fact that $|C|=\sum_{i=0}^d{n\choose i}$.\qed\end{proof}

We now give the folklore algorithm. Let $f$ be the target function. For a subset $R\subseteq [n]$, define $1_R$ the assignment that is one in the entries that are in $R$ and $0$ in the other entries.
At the first stage, the algorithm defines a set $S_{1,1}:=[n]$. At stage~$i$, the algorithm has
disjoint sets $S_{i,1}, S_{i,2}, \cdots, S_{i,j_i}\subseteq [n]$ where $f(1_{S_{i,k}})=1$ for all $k=1,\ldots,j_i$.
The algorithm at stage $i$ partitions each set $S_{i,k}$, $k=1,\ldots,j_i$ into two (almost) equal disjoint sets $S_{i,k,1}\cup S_{i,k,2}$ and asks two queries $f(1_{S_{i,k,1}})$ and $f(1_{S_{i,k,2}})$.
The sets that will survive to the following stage, $i+1$, are the sets $T$ for which $f(1_T)=1$. Those will be assigned to $S_{i+1,1},\ldots,S_{i+1,j_{i+1}}$. The algorithm stops when the sizes of those sets are~$1$. Then each $S_{i+1,k}$ will be holding an index of a variable in the target.

Obviously, throughout the algorithm we have $j_i\le d$ and $|S_{i,k}|\le \lceil n/2^{i-1}\rceil$ for all $k=1,2,\ldots,j_i$. The algorithm has at most $\log n$
stages. At each stage, it asks at most $2d$ queries, and therefore,
the total number of queries is $$2d\log n=O(d\log n).$$
A more precise analysis gives the upper bound $2d\log (n/d)+O(d).$

See also the algorithms in~\cite{BHKK94,CDX14,DP94,DXSC94,ST05,UTW97} and references therein.

The above adaptive algorithm runs even if $d$ is unknown to the learner.
Therefore, the class $d$-MClause is adaptively strongly attribute-optimally learnable.
This implies
\begin{theorem}\cite{H72} The class $d$-MClause is
strongly attribute-optimally learnable with $O(d\log (n/d))$ queries.
\end{theorem}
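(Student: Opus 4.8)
The plan is to argue that the folklore adaptive algorithm described immediately above already proves the theorem: nowhere does it use the value of $d$, so it is automatically a strong attribute-learning algorithm, and its query count can be sharpened to match the lower bound $d\log(n/d)$ established in the lemma just above. First I would verify correctness by maintaining the invariant that, at every stage $i$, each active set $S_{i,k}$ satisfies $f(1_{S_{i,k}})=1$, i.e.\ it contains at least one relevant variable of the target clause. When such a set is split into two halves $S_{i,k,1}\cup S_{i,k,2}$, any relevant variable it contains lands in one of the halves, so at least one half again evaluates to $1$ and survives; a half survives precisely when it contains a relevant variable. Tracking a fixed relevant variable $x_j$ shows that the unique active set containing $j$ survives every split, so $j$ eventually appears as a singleton active set; conversely a singleton $\{j\}$ with $f(1_{\{j\}})=f(e_j)=1$ forces $x_j$ to be relevant. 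Hence the singletons produced at termination are exactly the variables of the target, and the algorithm outputs $f$ exactly without ever being told $d$, which is the meaning of strong attribute learnability.

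Next I would bound the number of active sets and the resulting query count. Because the active sets at any stage are pairwise disjoint and each contains a distinct relevant variable, there are at most $d$ of them, so $j_i\le d$ at every stage. On the other hand, starting from a single set and at most doubling the count at each split, $j_i\le 2^{i-1}$. Combining these gives $j_i\le\min(2^{i-1},d)$, stage $i$ costs at most $2j_i$ queries, and the algorithm runs for at most $\lceil\log n\rceil$ stages since each split halves the set sizes. Every individual step is a simple set operation followed by two membership queries, so the whole procedure runs in time polynomial in $n$.

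The one place where the sharper $O(d\log(n/d))$ bound, rather than the crude $2d\log n$ bound, is obtained is the two-regime accounting of the sum over stages, split at the threshold $i_0\approx\log d$ where $2^{i-1}$ first exceeds $d$. For the early stages $i\le i_0$ the cost is geometric, $\sum_{i=1}^{\log d}2\cdot 2^{i-1}=O(d)$; for the remaining $\log n-\log d=\log(n/d)$ stages the cost is at most $2d$ each, contributing $O(d\log(n/d))$. Adding these yields a total of $O(d\log(n/d))$ queries. I expect this two-regime estimate to be the only subtlety; once it is in place, the matching lower bound of $d\log(n/d)$ from the lemma immediately above certifies that the query complexity is optimal, and the fact that $d$ was never used upgrades ``optimal'' to ``strongly attribute-optimal,'' completing the proof.
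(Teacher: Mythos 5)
Your proposal is correct and follows exactly the paper's route: the theorem is proved by observing that the folklore halving algorithm of Subsection~4.5 never uses $d$, combined with the ``more precise analysis'' giving $2d\log(n/d)+O(d)$ queries, which the paper asserts without detail and which your two-regime accounting (geometric cost $O(d)$ for the first $\approx\log d$ stages, then $2d$ per stage for the remaining $\log(n/d)$ stages) supplies correctly. Your correctness invariant and the bound $j_i\le\min(2^{i-1},d)$ are exactly the facts the paper takes as ``obvious,'' so the proposal is a faithful, fully detailed version of the paper's own argument.
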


For randomized adaptive algorithms see~\cite{DM12} and reference within.
When $d$ is unknown, Cheng~\cite{C11}
shows that there is a randomized adaptive learning algorithm
that asks $O(d\log d)$ queries and finds $d$
with probability at least $1-1/d^{\Theta(1)}$.

\subsection{Two-Round Learning}\label{TRL}

In \cite{BGV05}, De Bonis et al. shows that there is a two-round
adaptive algorithm for learning $d$-MClause that asks $O(d\log n)$ queries.
See also~\cite{CD08,EGH07}.
This is asymptotically as efficient as the best fully adaptive learning
algorithms. Therefore
$$\OPT_{2\mbox{-}{\rm RAD}}(d\mbox{-MClause})=\Theta(d\log n).$$

The algorithm uses $(2d,d+1,n)$-selector. A $(d,m,n)$-{\it selector} is a
$t\times n$ Boolean matrix such that any $d$ columns contain
at least $m$ distinct rows of Hamming weight $1$. It is known that there
is a $(2d,d+1,n)$-selector of size $t=O(d\log n)$. This follows from
the following simple probabilistic argument: randomly choose $t\times n$ matrix
where each entry is $1$ with probability $p=1/(2d)$ and $0$ with probability $1-p$.
Then show that the probability that the matrix is not a $(2d,d+1,n)$-selector
is less than one.

Given a $(2d,d+1,n)$-selector, the algorithm is as follows. Let $f=x_{i_1}\vee \cdots \vee x_{i_{d'}}$, $d'\le d$ be the target function.
At the first round, the algorithm asks $t=O(d\log n)$
queries that are the rows of a $(2d,d+1,n)$-selector. Let $X=\{x_1,\ldots,x_n\}$.
The algorithm then eliminates all the variables $x_i$ in $X$ where there is a query
$a$ for which $f(a)=0$ and $a_i=1$.
At the second round, for each variable $x_i\in X$ (that was not eliminated
in the first round) the algorithm asks the query $e^{(i)}$ where
$e^{(i)}_i=1$ and $e^{(i)}_j=0$ for all $j\not=i$. Then $x_i$ in the
target if and only if $f(e^{(i)})=1$.

Now we show that the number of variables that are
not eliminated in the first round is at most $2d-1$. Suppose
for the contrary that there
are $2d$ variables $X'=\{x_{j_1},\ldots,x_{j_{2d}}\}$
that are not eliminated in the first round. By the same argument as in Subsection~\ref{GTNA},
$\{x_{i_1}, \ldots , x_{i_{d'}}\}\subset X'$. By the property of $(2d,d+1,n)$-selectors
and since $d'\le d$ there is an assignment $a$ where $a_{i_1}=\cdots =a_{i_{d'}}=0$
and $a_{j_k}=1$ for some $j_k\not\in \{i_1,\ldots,i_{d'}\}$. This implies
that $f(a)=0$ and $a_{j_k}=1$ and the variable $x_{j_k}$ was eliminated in the first round.
This is a contradiction.

Indyk shows in \cite{I02} how to construct
an explicit $(2d,d+1,n)$-selector of size $d\cdot poly(\log n)$.
This construction gives a polynomial time learning algorithm
for $d$-MClause with $d\cdot poly(\log n)$ queries.
Therefore the class $d$-MClause is two-round efficiently learnable and
two-round optimally learnable in $d$.
Cheraghchi,~\cite{C13}, used recent results in extractors to prove that $d$-MClause
is two-round almost optimally learnable in $n$. His algorithm asks $d^{1+o(1)}\log n$ queries.

\subsection{Other Related Problems}

The group testing with inhibitors (GTI) model was introduced
in~\cite{FKKM97}. In this model, in addition to positive items and regular items,
there is also a category of items called inhibitors. The inhibitors are the items that
interfere with the test by hiding the presence of positive items. As a consequence,
a test yields a positive feedback if and only if the tested pool contains one or more
positives and no inhibitors. This problem is equivalent to learning functions
of the form $$(x_{i_1}\vee \cdots\vee x_{i_r})\wedge \overline{x_{j_1}\vee \cdots \vee x_{j_s}}.$$ This problem is studied in~\cite{BV98,CH08,DGV05,FKKM97,HC07,HL03}.

See other related problems in~\cite{Cic13,ND00} and references therein.

\section{Learning $s$-Term $r$-Monotone DNF}\label{srMDNF}
Consider the class $s$-term $r$-MDNF. That is, the class of monotone
DNF with $s$ monotone terms (monomials) where each term is of size at most $r$.
Torney,~\cite{T99}, first introduced the problem and gave some applications in molecular biology.
In this section, we present some results known from the literature for learning this class.

\subsection{Learning a Hypergraph and its Applications}

A hypergraph is $H=(V,E)$ where $V$ is the set of vertices, and $E\subseteq 2^V$
is the set of edges. The {\it dimension} or {\it rank} of the hypergraph $H$ is the cardinality
of the largest set in $E$.
A hypergraph
is called Sperner hypergraph if no edge is a
subset of another.
For a set $S\subseteq V$,
the {\it edge-detecting queries} $Q_H(S)$ is answered ``Yes'' or ``No'', indicating whether $S$ contains
all the vertices of at least one edge of $H$.
Learning the class $s$-term $r$-MDNF is equivalent to learning a Sperner hidden
hypergraph of dimension at most $r$ with at most $s$ edges using edge-detecting queries~\cite{AC06}.

This problem has many applications in chemical reactions, molecular biology, and genome sequencing.
In chemical reactions, we are given a set of chemicals,
some of which react and some which do not. When multiple chemicals are combined in one
test tube, a reaction is detectable if and only if at least one set of the chemicals in the tube reacts.
The goal is to identify which sets react using as few experiments as possible.

See \cite{AA05,ABKRS04,ABM14,AC06,AC08,BAK01,BGK05,CCFS11,CFS14,CH08,CLY13,DH06,DM10,DVMT02,GHTW06,GK98,MP04,MRY04,RS07,T99} for more details on the problem, learnability of subclasses of $s$-term $r$-MDNF
and other applications. This problem is also called, ``sets of positive subsets''~\cite{T99}
``complex group testing''~\cite{DH06,MP04} and ``group testing in hypergraph''~\cite{GHTW06}.

In all of the above applications,
the size of the terms $r$ is much smaller than the number of terms $s$
and both are much smaller than the number of vertices~$n$. Therefore, all the results in the literature, except~\cite{ABM15}, assumes that $r,s\le \sqrt{n}$,
although they do not mention this constraint explicitly. For ease of the presentation of the results,
we will also adopt this constraint throughout this section.

\subsection{Cover Free Families}\label{CVF}

One of the tools used in the literature for learning $s$-term $r$-MDNF
is cover-free families (CFF).
A $(n,(s,r))$-{\it cover free
family} ($(n,(s,r))$-CFF), \cite{KS64}, is a set $A\subseteq \{0,1\}^n$ such that for every $1\le
i_1< i_2<\cdots < i_d\le n$ where $d = s +r$ and every $J
\subseteq [d]$ of size $|J|=s$ there is $a\in A$ such that
$a_{i_k} = 0$ for all $k \in J$ and $a_{i_j} = 1$ for all $j
\not\in J$. Denote by $N(n,(s,r))$ the minimum size of such set.
The lower bound in
\cite{DVPS14,MW04,SWZ00} is\begin{eqnarray}\label{LBCFF}
N(n,(s,r))\ge \Omega\left(N(s,r)\cdot\log n\right)
\end{eqnarray}
where
\begin{eqnarray}\label{Nsr2}
N(s,r)=\frac{s+r}{\log{s+r\choose r}}{s+r\choose r}.
\end{eqnarray}

It is known, \cite{ABM14}, that a set of
\begin{eqnarray}\label{rand}
m&=&O\left(\min(s,r)^{1.5}\left(\log \left(\frac{s+r}{\min(s,r)}\right)\right)
\left (N(s,r)\cdot\log n+\frac{N(s,r)}{s+r}\log\frac{1}{\delta}\right)\right)\nonumber\\
&=& N(s,r)^{1+o(1)}(\log n+\log(1/\delta))
\end{eqnarray}
random vectors $a\in \{0,1\}^n$, where each $a_j$ is $1$ with probability $r/(s+r)$, is a $(n,(s,r))$-CFF with probability at least $1-\delta$.

It follows from \cite{B12,B14b,BG15,FLS14} that there is
a polynomial time (in the size of the CFF)
deterministic construction of $(n,(s,r))$-CFF
of size
\begin{eqnarray}\label{recent}
{N(s,r)}^{1+o(1)}\log n\end{eqnarray} where the $o(1)$ is with respect to $\min(r,s)$. When $r=o(s)$, the construction can be done in linear time~\cite{B14b,BG15}.

\subsection{Non-Adaptive Learning $s$-Term $r$-MDNF}

In this section, we give a non-adaptive learning algorithm for
the class of $s$-term $r$-MDNF.

We first give a lower bound
\begin{theorem}\cite{ABM15,DH06} Let $n\ge r+s$.
Any equivalent test set $A\subseteq \{0,1\}^n$ for $s$-term $r$-MDNF
is $(n,(s,r-1))$-CFF and $(n,(s-1,r))$-CFF. Therefore,
Any non-adaptive algorithm for learning
$s$-term $r$-MDNF must ask at least
$$N(n,(s-1,r))+N(n,(s,r-1))= \Omega\left(N(s,r)\log n\right)$$ queries.

In particular,
when $r$ is constant, the number of queries is at least
$$\OPTNA(\mbox{$s$-term $r$-MDNF})= \Omega\left(\frac{s^{r+1}}
{\log s}\log n\right).$$
\end{theorem}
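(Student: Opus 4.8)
The plan is to separate the statement into two parts: the structural claim that every equivalent test set for $s$-term $r$-MDNF is simultaneously an $(n,(s,r-1))$-CFF and an $(n,(s-1,r))$-CFF, and the quantitative estimate $N(n,(s-1,r))+N(n,(s,r-1))=\Omega(N(s,r)\log n)$. The passage from CFFs to queries is immediate: the assignments queried by any non-adaptive algorithm for a class $C$ must distinguish every pair of distinct concepts (otherwise the answer vector would not determine the target), so they form an equivalent test set for $C$, and by item~3 of Lemma~\ref{xor} the minimum size of such a set is exactly $\OPTNA(C)$. Hence any lower bound on the size of every equivalent test set is a lower bound on the number of queries.

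To prove the structural claim I would, for each prescribed CFF configuration, exhibit two functions $f,g$ in $s$-term $r$-MDNF that disagree \emph{only} on the assignments demanded by the CFF definition, forcing the equivalent test set to contain one of them. Fix $d$ coordinates and split them into a zero-block and a one-block. For the $(n,(s,r-1))$-CFF the zero-block $\{z_1,\dots,z_s\}$ has size $s$ and the one-block $\{o_1,\dots,o_{r-1}\}$ has size $r-1$, and I take
$$g=x_{o_1}\wedge\cdots\wedge x_{o_{r-1}},\qquad f=\bigvee_{k=1}^{s}\bigl(x_{z_k}\wedge x_{o_1}\wedge\cdots\wedge x_{o_{r-1}}\bigr),$$
so that $f$ has $s$ terms of size $r$ and $g$ is one term of size $r-1$, both in the class, with $f\le g$ and $f(a)\ne g(a)$ precisely when $a$ sets the one-block to $1$ and the zero-block to $0$. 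For the $(n,(s-1,r))$-CFF the zero-block has size $s-1$ and the one-block $\{o_1,\dots,o_r\}$ has size $r$, and I take
$$f=\bigvee_{k=1}^{s-1}x_{z_k},\qquad g=f\vee\bigl(x_{o_1}\wedge\cdots\wedge x_{o_r}\bigr),$$
again both in $s$-term $r$-MDNF, differing exactly on the assignments setting the one-block to $1$ and the zero-block to $0$. Since an equivalent test set distinguishes $f$ from $g$, it contains an assignment of the required form, which is the defining property of the respective CFF.

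With the structural claim established, any equivalent test set $A$ satisfies $|A|\ge\max\bigl(N(n,(s-1,r)),N(n,(s,r-1))\bigr)\ge\tfrac12\bigl(N(n,(s-1,r))+N(n,(s,r-1))\bigr)$, so it remains to estimate the sum. Applying the lower bound~(\ref{LBCFF}) to each term reduces matters to showing $N(s-1,r)+N(s,r-1)=\Omega(N(s,r))$; using the definition~(\ref{Nsr2}), the identity $s-1+r=s+r-1$, the bound $\log {s+r-1\choose r},\log{s+r-1\choose r-1}\le\log{s+r\choose r}$ on the denominators, and Pascal's identity ${s+r-1\choose r}+{s+r-1\choose r-1}={s+r\choose r}$, the sum dominates $\tfrac{s+r-1}{\log{s+r\choose r}}{s+r\choose r}=\tfrac{s+r-1}{s+r}N(s,r)=\Omega(N(s,r))$. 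For constant $r$ one has ${s+r\choose r}=\Theta(s^r)$ and $\log{s+r\choose r}=\Theta(\log s)$, whence $N(s,r)=\Theta(s^{r+1}/\log s)$ and the bound becomes $\Omega\bigl((s^{r+1}/\log s)\log n\bigr)$. The genuinely inventive step is the choice of the two witness pairs, since everything else is bookkeeping; I expect the only real care to be needed in verifying that each pair lies in $s$-term $r$-MDNF and disagrees on exactly the prescribed set, and in the clean application of Pascal's identity to the explicit form of $N(s,r)$.
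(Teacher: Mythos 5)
Your proposal is correct and follows essentially the same route as the paper: exhibit, for each CFF pattern, a pair of functions in $s$-term $r$-MDNF that disagree exactly on the assignments the pattern demands (your second witness pair is identical to the paper's, and your first pair is a cosmetically different but equally valid choice of the same kind), then invoke the bound (\ref{LBCFF}). Your added bookkeeping via Pascal's identity to pass from $N(s-1,r)+N(s,r-1)$ to $\Omega(N(s,r))$ is a correct filling-in of a step the paper leaves implicit.
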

\begin{proof} Consider any distinct $1\le i_1,\cdots,i_{r+s-1}\le n$.
To be able to distinguish between the two functions
$f_1=(x_{i_1}\cdots x_{i_r})\vee x_{i_{r+1}}\vee \cdots \vee x_{i_{r+s-1}}$
and $f_2=(x_{i_1}\cdots x_{i_{r-1}})\vee x_{i_{r+1}}\vee \cdots \vee x_{i_{r+s-1}}$
we must have an assignment $a$ that satisfies $a_{i_1}=\cdots=a_{i_{r-1}}=1$
and $a_{i_r}=\cdots=a_{i_{r+s-1}}=0$. Therefore $A$ is $(n,(s,r-1))$-CFF.

To be able to distinguish between the two functions
$g_1=(x_{i_1}\cdots x_{i_r})\vee x_{i_{r+1}}\vee \cdots \vee x_{i_{r+s-1}}$
and $g_2=x_{i_{r+1}}\vee \cdots \vee x_{i_{r+s-1}}$
we must have an assignment $a$ that satisfies $a_{i_1}=\cdots=a_{i_{r}}=1$
and $a_{i_{r+1}}=\cdots=a_{i_{r+s-1}}=0$. Therefore $A$ is $(n,(s-1,r))$-CFF.\qed
\end{proof}

We now give a simple upper bound
\begin{theorem} Any $(n,(s,r))$-CFF
$A$ is an equivalent test set for $s$-term $r$-MDNF.
Therefore, there is a non-adaptive learning algorithm
for $s$-term $r$-MDNF that asks
$$N(n,(s,r))= O\left(\sqrt{r}{s+r\choose r}(s+r)\log n\right)$$ queries.
In particular,
when $r$ is constant,
$$\OPTNA(\mbox{$s$-term $r$-MDNF})\le N(n,(s,r))=O\left({s^{r+1}}
\log n\right)$$
\end{theorem}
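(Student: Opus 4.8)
The plan is to separate the statement into its combinatorial core and its quantitative part. The core assertion is that every $(n,(s,r))$-CFF $A$ is an equivalent test set for $s$-term $r$-MDNF; granting this, the claimed bound $N(n,(s,r))=O(\sqrt r\binom{s+r}{r}(s+r)\log n)$ is exactly a known upper bound on the size of a cover-free family, and the constant-$r$ case is obtained by substituting $\binom{s+r}{r}=O(s^r)$, $s+r=O(s)$ and $\sqrt r=O(1)$. So I would spend essentially all the effort on showing that $A$ is an equivalent test set, and only quote the CFF size estimate, which follows from the probabilistic argument already recorded in~(\ref{rand}) of Subsection~\ref{CVF}.

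For the core, fix two distinct functions $f,g$ in $s$-term $r$-MDNF, each written as a monotone DNF with at most $s$ terms, every term of size at most $r$. Since $f\ne g$ there is an assignment on which they differ, and because an equivalent test set is symmetric in its two arguments I may, after possibly swapping $f$ and $g$, assume there is $b$ with $f(b)=1$ and $g(b)=0$. As $f$ is monotone and $f(b)=1$, some term $M$ of $f$ is satisfied by $b$; let $T\subseteq[n]$ be its variable set, so $|T|\le r$ and $b$ is $1$ on $T$. Monotonicity of $g$ together with $1_T\le b$ gives $g(1_T)\le g(b)=0$, hence no term of $g$ has its variable set contained in $T$. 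Listing the terms of $g$ as $M_1',\dots,M_q'$ with $q\le s$ and variable sets $T_i'$, I can therefore choose for each $i$ a ``blocker'' $v_i\in T_i'\setminus T$. Setting $V=\{v_1,\dots,v_q\}$ I have $V\cap T=\emptyset$, $|T|\le r$ and $|V|\le s$. I then pad $T$ to $T^+\supseteq T$ with $|T^+|=r$ and $V$ to $V^+\supseteq V$ with $|V^+|=s$ using fresh indices, keeping the two sets disjoint (possible since $n\ge s+r$), and apply the defining property of the $(n,(s,r))$-CFF to the $s+r$ indices $T^+\cup V^+$ with size-$s$ zero-set $J=V^+$. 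This yields some $a\in A$ that is $1$ on all of $T^+$ and $0$ on all of $V^+$.

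It remains to check that this $a$ separates $f$ and $g$, and here lies the only real subtlety. Since the CFF pins down $a$ only on the chosen $s+r$ coordinates, $a$ may take arbitrary values elsewhere, so I must argue robustness. For $f$: because $a$ is $1$ on $T\subseteq T^+$, the term $M$ is satisfied, and monotonicity gives $f(a)=1$ regardless of the padding ones and of the unspecified coordinates. For $g$: each term $M_i'$ contains the variable $v_i\in V^+$, which $a$ sets to $0$, and a single zeroed variable already falsifies a monomial, so $M_i'$ is unsatisfied no matter how the remaining coordinates of $a$ are set; hence $g(a)=0$. Thus $f(a)=1\ne 0=g(a)$, so $A$ is an equivalent test set and $\OPTNA(\mbox{$s$-term $r$-MDNF})\le N(n,(s,r))$. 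The expected obstacle is precisely this last verification---ensuring that the coordinates left free by the CFF can neither revive a term of $g$ nor kill the witnessing term of $f$---which the choice of a per-term blocker and monotonicity together resolve. Finally, the size bound follows from a union bound over the $\binom{n}{s+r}\binom{s+r}{s}$ ``bad'' index/zero-set pairs for a random matrix with entry probability $p=r/(s+r)$, each missed with probability $(1-p^r(1-p)^s)^m$; Stirling's estimate turns $1/(p^r(1-p)^s)$ into $O(\sqrt r\binom{s+r}{r})$, and the constant-$r$ specialization gives $O(s^{r+1}\log n)$.
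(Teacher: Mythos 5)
Your proof is correct and follows essentially the same route as the paper's: fix a separating assignment $b$ with $f(b)=1$, $g(b)=0$, take the variable set $T$ of a term of $f$ satisfied by $b$, pick one blocking variable per term of $g$ outside $T$, and invoke the CFF property to produce $a$ with $f(a)=1$, $g(a)=0$. The only cosmetic differences are that you justify the existence of blockers outside $T$ via monotonicity ($g(1_T)\le g(b)=0$) where the paper reads them directly off $b$ (each term of $g$ has a variable set to $0$ by $b$, while $b$ is $1$ on all of $T$), and that you make the padding to exactly $s+r$ coordinates explicit, which the paper leaves implicit.
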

\begin{proof} Let $A$ be a $(n,(s,r))$-CFF.
Let $f_1,f_2$ be any two non-equivalent $s$-term $r$-MDNF.
Suppose $f_1=M_{1,1}\vee M_{1,2}\vee \cdots \vee M_{1,s_1}$ and
$f_2=M_{2,1}\vee M_{2,2}\vee \cdots \vee M_{2,s_2}$ where $s_1,s_2\le s$.
Let $b\in \{0,1\}^n$ be an assignment such that (w.l.o.g.) $f_1(b)=1$
and $f_2(b)=0$. Then $M_{2,i}(b)=0$ for all $i=1,\ldots,s_2$ and $M_{1,j}(b)=1$
for some $1\le j\le s_1$. Let $M_{1,j}=x_{k_1}x_{k_2}\cdots x_{k_{r'}}$, $r'\le r$.
Then for every $M_{2,i}$, $i=1,\ldots,s_2$, there is a variable
$x_{j_i}$ in $M_{2,i}$ where $b_{j_i}=0$ and for all the variables
$x_{k_1},x_{k_2},\ldots,x_{k_{r'}}$  in $M_{1,j}$ we have $b_{k_1}=\cdots=b_{k_{r'}}=1$.

Now take $a\in A$ such that $a_{j_i}=0$, $i=1,\ldots,s_2$ and $a_{k_1}=a_{k_2}=\cdots=a_{k_{r'}}=1$.
Such $a$ exists since $A$ is $(n,(s,r))$-CFF.
Then we have $f_1(a)=1$ and $f_2(a)=0$.
This completes the proof.\qed
\end{proof}

The first explicit non-adaptive learning algorithm for $s$-term $r$-MDNF
was given by Gao et al., \cite{GHTW06}.
They show that this class can be learned with $(n,(s,r))$-CFF.
Given such a $(n,(s,r))$-CFF, the algorithm simply takes all the monomials $M$ of size at most $r$
that satisfy $(\forall a\in A)(M(a)=1\Rightarrow f(a)=1)$. It
is easy to see that the disjunction
of all such monomials is equivalent to the target function.
Assuming a set of $(n,(s,r))$-CFF of size $N$ can be constructed in time $T$,
the above algorithm learns $s$-term $r$-MDNF with $N$ queries in time $O({n\choose r}N+T)$.
This with~(\ref{recent}) gives

\begin{theorem}\label{nafmdnf}
There is a non-adaptive learning algorithm
for $s$-term $r$-MDNF that asks
$$Q={N(s,r)}^{1+o(1)}\log n$$ queries
and runs in $poly({n\choose r},Q)$ time.

When $r$ is constant, the algorithm asks
$$Q'=O(s^{r+1}\log n)$$ queries and runs in $poly(n)$ time.

In particular, for constant $r$, the class $s$-term $r$-MDNF is
non-adaptively almost optimally learnable
and optimally learnable in $n$.
\end{theorem}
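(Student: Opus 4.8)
The plan is to analyze the algorithm described immediately above the theorem. First I would construct a $(n,(s,r))$-CFF $A$ using the deterministic construction of~(\ref{recent}), then query $f$ on every point of $A$, and finally output the disjunction $g=\bigvee_M M$ taken over all monomials $M$ of size at most $r$ that are \emph{consistent} with the answers, i.e.\ that satisfy $M(a)=1\Rightarrow f(a)=1$ for all $a\in A$. The argument then splits into a correctness claim, $g=f$, and a resource count.

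For correctness I would prove the two inclusions separately. The easy direction is $f\le g$: writing $f=T_1\vee\cdots\vee T_{s'}$ with $s'\le s$, each term satisfies $T_i\le f$, hence $T_i(a)=1\Rightarrow f(a)=1$ for every $a$, so $T_i$ is one of the monomials entering $g$; thus $g\ge T_i$ for all $i$ and $g\ge f$. The hard direction, which I expect to be the main obstacle, is $g\le f$: I must show that \emph{every} consistent monomial $M$ actually satisfies $M\le f$ as a function on all of $\{0,1\}^n$, not merely on $A$. I would argue by contradiction. Suppose $M(b)=1$ but $f(b)=0$ for some $b$, and write $M=x_{k_1}\cdots x_{k_{r'}}$ with $r'\le r$. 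Then $f(b)=0$ forces each term $T_i$ to contain a variable $x_{j_i}$ with $b_{j_i}=0$, and since $M(b)=1$ these indices are disjoint from $\{k_1,\dots,k_{r'}\}$. The set $\{j_1,\dots,j_{s'}\}$ has at most $s$ distinct indices, so padding up to exactly $r$ ones and $s$ zeros (possible since $n\ge r+s$) the CFF property yields $a\in A$ with $a_{k_\ell}=1$ for all $\ell$ and $a_{j_i}=0$ for all $i$. Then $M(a)=1$ while every $T_i(a)=0$, so $f(a)=0$, contradicting the consistency of $M$. This is essentially the same packing argument that proves the equivalent-test-set property above, but it must be run per monomial because $g$ may have more than $s$ terms and hence is not itself guaranteed to lie in the class.

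For the resource count I would bound the number of queries by $|A|$, which by~(\ref{recent}) is $Q=N(s,r)^{1+o(1)}\log n$; constructing $A$ takes time polynomial in its size, and the output step enumerates the $O(\binom{n}{r})$ monomials of size at most $r$, testing each against the $Q$ recorded answers, for a total running time of $poly(\binom{n}{r},Q)$. For constant $r$ I would substitute the closed form $N(s,r)=\Theta(s^{r+1}/\log s)$ from~(\ref{Nsr2}) to obtain $Q'=O(s^{r+1}\log n)$, while $\binom{n}{r}=poly(n)$ makes the running time $poly(n)$. Finally, comparing $Q'=O(s^{r+1}\log n)$ with the lower bound $\OPTNA=\Omega((s^{r+1}/\log s)\log n)$ from the preceding theorem shows the gap is only a factor $O(\log s)=\OPTNA^{o(1)}$, giving almost optimal learnability, and for fixed $s,r$ both bounds collapse to $\Theta(\log n)$, giving optimal learnability in $n$.
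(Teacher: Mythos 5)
Your proposal matches the paper's argument: the paper proves this theorem by exactly the algorithm you describe (query a deterministically constructed $(n,(s,r))$-CFF and output the disjunction of all consistent monomials of size at most $r$, citing Gao et al.), with the same query and time accounting. The only difference is that the paper dismisses correctness with ``it is easy to see,'' whereas you supply the per-monomial packing argument --- correctly observing that the equivalent-test-set property alone does not suffice since the hypothesis may have more than $s$ terms --- so your write-up is a faithful, slightly more detailed version of the same proof.
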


When $r=\omega(1)$ we can use Lemma~\ref{DNtoDN} to prove

\begin{theorem} Let $r=\omega(1)$.
There is a non-adaptive learning algorithm
for $s$-term $r$-MDNF that asks
$$Q={N(s,r)}^{1+o(1)}\log n$$ queries
and runs in $poly(n,{2(rs)^2\choose r})$ time.

In particular, for $r\le s^{c}$ for some $c<1$, the class $s$-term $r$-MDNF is
non-adaptively almost optimally learnable
and optimally learnable in $n$.
\end{theorem}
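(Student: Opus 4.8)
The plan is to reduce to the already-established cover-free-family learner by first contracting the number of variables, which is exactly what Lemma~\ref{DNtoDN} is designed for. The key structural fact is that every $s$-term $r$-MDNF depends on at most $\rho:=rs$ variables. When $r$ is constant the learner of Theorem~\ref{nafmdnf} already runs in $poly(n)$ time because $\binom{n}{r}=poly(n)$; the only obstruction for $r=\omega(1)$ is that $\binom{n}{r}$ is superpolynomial. I would therefore hash the $n$ variables down to $q=2(rs)^{2}$ variables, run the cover-free-family algorithm on each of the few contracted instances (where $\binom{q}{r}$ is now affordable), and reassemble the target by the majority rule built into Lemma~\ref{DNtoDN}.

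Concretely, I would invoke Lemma~\ref{DNtoDN} taking $C$ to be $s$-term $r$-MDNF and $H$ the class of monotone DNF with monomials of size at most $r$, with relevant-variable parameter $\rho=rs$ and with the integer $q=2\rho^{2}=2(rs)^{2}$. The hypotheses are readily checked: $s$-term $r$-MDNF is embedding closed (identifying two variables only merges or shrinks monomials, leaving at most $s$ of them of size at most $r$); it contains the $n$ single variables, so $|C_{\rho}|=\Omega(n)$; and the relevant variables of an $r$-MDNF hypothesis are just the variables surviving in its reduced form, computable in time $R(n)=poly(n)$. As the base learner on $q$ variables I would use Theorem~\ref{nafmdnf}, which learns $s$-term $r$-MDNF over $q$ variables non-adaptively with $Q(q)=N(q,(s,r))=N(s,r)^{1+o(1)}\log q$ queries --- via the deterministic construction behind~(\ref{recent}) --- in time $T(q)=poly\!\left(\binom{q}{r},Q(q)\right)=poly\!\left(\binom{2(rs)^{2}}{r}\right)$.

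Substituting into Lemma~\ref{DNtoDN} (where $\log(q/\rho^{2})=\log 2$ and $\log q=O(\log(rs))$) yields the query count
$$O\!\left(\frac{\rho^{2}\,Q(q)}{\log(q/\rho^{2})}\,\log n\right)=O\!\left((rs)^{2}\,\log(rs)\,N(s,r)^{1+o(1)}\,\log n\right)$$
and running time $poly(n,T(q),R(q))=poly\!\left(n,\binom{2(rs)^{2}}{r}\right)$, matching the statement. The step I expect to demand the most care is absorbing the overhead $(rs)^{2}\log(rs)$ into $N(s,r)^{o(1)}$: this is precisely where $r=\omega(1)$ enters, since in the operative regime $r\le s$ one has $\min(r,s)=r=\omega(1)$, so $N(s,r)\ge\binom{s+r}{r}$ grows faster than every fixed power of $rs$ and $(rs)^{2}\log(rs)=N(s,r)^{o(1)}$; the query bound then collapses to $N(s,r)^{1+o(1)}\log n$.

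For the final assertion I would compare against the lower bound $\OPTNA(\text{$s$-term $r$-MDNF})=\Omega(N(s,r)\log n)$ proved just above. The algorithm asks $N(s,r)^{1+o(1)}\log n=\OPTNA^{1+o(1)}$ queries, giving almost-optimality, and it is optimal in $n$ because the $n$-dependence of the upper and lower bounds is the same single factor $\log n$. It remains to certify that the running time is $poly(\OPTNA,n)$, i.e. that $\binom{2(rs)^{2}}{r}=poly(N(s,r))$; using $\binom{2(rs)^{2}}{r}\le(2r^{2}s^{2})^{r}$ together with $N(s,r)\ge\binom{s+r}{r}\ge(s/r)^{r}$, this reduces to finding a constant $k$ with $\log(2r^{2}s^{2})\le k\log(s/r)$ for all large $s$, which holds exactly when $r\le s^{c}$ with $c<1$. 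This last estimate is the only place the hypothesis $r\le s^{c}$ is used, and I would present it as the closing computation.
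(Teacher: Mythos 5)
Your proposal is correct and follows essentially the same route as the paper, whose own proof is just the one-line remark that the theorem ``follows from Lemma~\ref{DNtoDN} and Theorem~\ref{nafmdnf} and the fact that any $s$-term $r$-MDNF has at most $sr$ relevant variables''; you instantiate exactly that reduction with $\rho=rs$ and $q=2(rs)^2$ and verify the hypotheses the paper leaves implicit. Your explicit check that the $(rs)^2\log(rs)$ overhead is absorbed into $N(s,r)^{o(1)}$ (which genuinely needs $\min(r,s)=\omega(1)$, i.e.\ the regime $r\le s$ you identify) and the closing computation for $r\le s^c$ are more careful than the paper's own treatment.
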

\begin{proof} Follows from Lemma~\ref{DNtoDN} and Theorem~\ref{nafmdnf}
and the fact that any $s$-term $r$-MDNF has at most $sr$ relevant variables.\qed
\end{proof}

One can now use~(\ref{rand}) in a straightforward manner
to get a randomized non-adaptive algorithm with
better time and query complexity. Recently, Abasi et al.~\cite{ABM15},
gave an almost optimal learning algorithm for all $n,r$ and $s$.

\subsection{Adaptive Learning $s$-Term $r$-MDNF}

In this section, we give results on adaptive algorithms for learning
$s$-term~$r$-MDNF.

Adaptive algorithms for learning
$s$-term $r$-MDNF is studied in \cite{AC06,AC08} and~\cite{ABM14}.
The information theoretic lower bound for this class is $rs\log n$.
Angluin and Chen gave in~\cite{AC08} the lower bound $\Omega((2s/r)^{r/2}+rs\log n)$
when $s> r$ and Abasi et al. gave in~\cite{ABM14} the lower bound $\Omega(({r}/{s})^{s-1}+rs\log n)$ when $s\le r$. Angluin and Chen gave a
polynomial time adaptive algorithm for learning $s$-term $2$-MDNF that
asks $O(s\log n)$ queries. Therefore, the class $s$-term $2$-MDNF is
adaptively optimally learnable. In \cite{ABM14} Abasi et al. gave a polynomial
time learning algorithm for $s$-term $r$-MDNF that asks $rs\log n +({r}/{s})^{s+o(s)}$ queries when $r>s$ and $rs\log n +s^{r/2+o(r)}$ queries
when $r\le s$. They also gave some randomized algorithms.

The following table summarizes the latest results: Det. and Rand. stand for deterministic algorithm and randomized algorithm, respectively.
\begin{center}
\begin{tabular}{|l|c|c|l|l|}
&& &  Rand./&\\
$r,s$ & Lower Bound & Ref. & Det.& Upper Bound\\
\hline
\hline
$r=2$ & $s\log n$ & \cite{AC08} & Det. & $s\log n$\\
\hline
$r>s$&$rs\log n+\left(\frac{r}{s}\right)^{s-1}$ & \cite{ABM14} & Det. &$rs\log n +\left(\frac{r}{s}\right)^{s+o(s)}$\\
\cline{3-5}
&& \cite{ABM14} & Rand. & $rs\log n +(\log r)\sqrt{s}e^s\left(\frac{r}{s}+1\right)^s$\\
\hline
$r\le s$ & $rs\log n+\left(\frac{2s}{r}\right)^{r/2}$ & \cite{ABM14,AC08} & Det. &$rs\log n +s^{r/2+o(r)}$\\
\cline{3-5}
 &  & \cite{ABM14,AC08} & Rand. &$rs\log n +\sqrt{r}(3e)^{r}(\log s) s^{r/2+1}$\\
\hline
\end{tabular}
\end{center}

\subsection{Learning Subclasses of $s$-term $r$-MDNF}

Learning subclasses of graphs and hypergraphs from edge-detecting queries received
considerable attention in the literature due to its diverse applications~\cite{AC06,AC08,BAK01,CCFS11}.
This is equivalent to
learning subclasses of $s$-term $2$-MDNF and $s$-term $r$-MDNF,
respectively. Subclasses include Graphs of bounded degree~\cite{AC08},
Hamiltonian cycles~\cite{CCFS11,GK97,GK98}, matchings~\cite{ABKRS04,BAK01,CCFS11}, stars~\cite{AA05,CCFS11},
cliques~\cite{AA05,CCFS11}, families of graphs that are closed under isomorphism~\cite{ABKRS04}
$r$-uniform hypergraph and almost uniform hypergraph~\cite{AC08}.
The class of Read-Once $2$-MDNF is equivalent to learning matchings~\cite{ABKRS04}.

\section{Learning Decision Tree}\label{Decision}
In this section we study the learnability of the class of Depth $d$-DT ($\DT_d$), i.e., the class of
all decision trees of depth at most $d$.

\subsection{Bounds on $\OPT(\DT_d)$}

We say that a set of assignments $A\subseteq \{0,1\}^n$ is $(n,d)$-{\it universal set} if for every $1\le i_1<i_2<\cdots<i_d\le n$ and every $b\in \{0,1\}^d$ there is an $a\in A$ such that $a_{i_j}=b_j$ for all $j=1,\ldots,d$.

It is known that any $(n,d)$-universal set is of size  $\Omega(2^d\log n)$~\cite{KS72,SB}.
The probabilistic method with the union bound gives the upper bound $O(d2^d\log
n)$. The best known polynomial time, $poly(n,2^d)$,
construction  gives an $(n,d)$-universal set
of size $O(2^{d+\log^2 d}\log n)$ \cite{NSS95}. It is easy to show that a random uniform set $A\subseteq \{0,1\}^n$ of size $O(d2^d\log n)$ is $(n,d)$-universal set with probability at least $1/n^{O(d)}$.
For $d\le \log n/\log\log n$,
an $(n,d)$-universal set of size $d^{O(1)}2^d\log n$ can be constructed in polynomial
time~\cite{NSS95}.

We now prove
\begin{theorem} A set of assignments $A\subseteq \{0,1\}^n$ is a hitting set for $\DT_d$ if and only if $A$ is an $(n,d)$-universal set. In particular
$$\Omega(2^d\log n)\le \OPT^{\HS}(\DT_d)\le O(d2^d\log n).$$
\end{theorem}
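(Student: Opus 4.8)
The plan is to prove the biconditional by establishing each implication separately, and then to read off the quantitative bounds directly from the estimates on the size of $(n,d)$-universal sets recalled immediately before the statement.

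First I would show that any hitting set $A$ for $\DT_d$ is $(n,d)$-universal. Fix coordinates $1 \le i_1 < \cdots < i_d \le n$ and a pattern $b \in \{0,1\}^d$. Consider the term $f = \ell_{i_1} \wedge \cdots \wedge \ell_{i_d}$, where $\ell_{i_j} = x_{i_j}$ if $b_j = 1$ and $\ell_{i_j} = \bar x_{i_j}$ if $b_j = 0$. This term is computed by a decision tree that queries $x_{i_1}, \ldots, x_{i_d}$ in order, sending every branch that disagrees with $b$ straight to a $0$-leaf and the fully agreeing branch to a $1$-leaf; hence $f \in \DT_d$ and $f$ is not identically zero. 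Since $A$ hits $f$, there is $a \in A$ with $f(a) = 1$, which forces $a_{i_j} = b_j$ for every $j$. As the coordinates and pattern were arbitrary, $A$ is $(n,d)$-universal.

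Next I would prove the converse, that an $(n,d)$-universal set $A$ hits every non-zero $f \in \DT_d$. Represent $f$ by a decision tree $T(f)$ of depth at most $d$. Because $f \not\equiv 0$, following any assignment with $f(a)=1$ reaches a leaf labelled $1$; let $x_{i_1} = c_1, \ldots, x_{i_k} = c_k$, with $k \le d$, be the variable tests and edge labels along the root-to-leaf path reaching that leaf. Every assignment agreeing with this partial assignment follows the same path and evaluates to $1$. If $k < d$, I pad the path to exactly $d$ coordinates by choosing arbitrary further indices distinct from the first $k$ together with arbitrary values for them; after reindexing in increasing order, the universal-set property supplies some $a \in A$ matching all $d$ prescribed values, in particular the first $k$. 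Thus $f(a) = 1$, so $A$ hits $f$.

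Finally, the two implications give that $\OPTHS(\DT_d)$ equals the minimum size of an $(n,d)$-universal set, so the displayed bounds follow at once from the facts stated above the theorem: every $(n,d)$-universal set has size $\Omega(2^d \log n)$, and the probabilistic method yields one of size $O(d 2^d \log n)$. The only genuinely delicate point is the padding step in the converse: one must verify that prescribing values on a superset of the path variables cannot destroy an accepting assignment, which holds precisely because reaching a $1$-leaf depends only on the variables tested along its own path.
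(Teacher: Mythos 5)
Your proof is correct and follows essentially the same route as the paper: the paper likewise hits a nonzero tree by matching the partial assignment along a root-to-leaf path ending in a $1$-leaf, and obtains the converse by observing that every term $x_{j_1}^{\xi_1}\cdots x_{j_d}^{\xi_d}$ lies in $\DT_d$ and can only be hit by an assignment realizing the prescribed pattern. The only difference is presentational — you treat the two implications in the opposite order and spell out the padding of short paths to exactly $d$ coordinates, a detail the paper leaves implicit.
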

\begin{proof} Let $A$ be an $(n,d)$-universal set. Let $f\not\equiv 0$ be a decision tree of depth at most $d$. Consider a path $p$ from the root of $f$
to a leaf that is labeled with $1$. Let $p=v_1\stackrel{{\scriptscriptstyle {\xi_1}}}{\to} v_2\stackrel{{\scriptscriptstyle {\xi_2}}}{\to} \cdots\stackrel{{\scriptscriptstyle {\xi_{d'-1}}}}{\to} v_{d'}
 \stackrel{{\scriptscriptstyle {\xi_{d'}}}}{\to}v_{d'+1}$, $d'\le d$
where $v_{d'+1}$ is the leaf that is labeled with $1$, $v_i$ is labeled with $x_{j_i}$ and the edge $v_j\to^{\xi_j} v_{j+1}$ is labeled with $\xi_j\in\{0,1\}$. If $A$ is $(n,d)$-universal set then there is $a\in A$ such that $a_{j_i}=\xi_i$ for all $i=1,\ldots,d'$ and $f(a)=1$. Therefore, $A$ is a hitting set for $DT_d$.

The other direction follows from the fact that any term $T=x_{j_1}^{\xi_1}\cdots x_{j_d}^{\xi_d}$ where $1\le j_1<\cdots<j_d\le n$ and $\xi_i\in\{0,1\}$ is a decision tree of depth $d$.
Recall that $x^\xi=x$ if $\xi=1$ and $x^\xi=\bar x$ if $\xi=0$.
To hit $T$ we need an assignment $a$ that satisfies $a_{j_1}=\xi_1,\ldots,a_{j_d}=\xi_d$.\qed
\end{proof}

For equivalent test set, adaptive and non-adaptive learning, we prove
\begin{theorem}\label{DTHT}
We have
$$\Omega(2^d\log n)\le \OPT_\A(C)\le \OPT_\NA(C)\le O(d2^{2d}\log n).$$
\end{theorem}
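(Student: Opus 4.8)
The plan is to sandwich both quantities between hitting-set sizes and then invoke the preceding theorem on $(n,d)$-universal sets. The middle inequality $\OPT_\A(C)\le\OPT_\NA(C)$ is exactly part~1 of Lemma~\ref{xor}, so nothing new is needed there; all the work is in the two outer bounds, each of which I would reduce to a statement about $\OPT^{\HS}$ of a depth-bounded decision tree class.

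For the lower bound I would use the chain $\OPT_\A(C)\ge\TD(C)\ge\OPT^{\HS}(\DT_d)$. The first inequality is the general fact $\OPT_\A(C)\ge\TD(C)$ recorded earlier (running any adaptive learner on a target $h$ yields a teaching set for $h$ among its queries). For the second, recall $\TD(C)=\max_{h\in C}\OPT^{\HS}(C-h)$ and that the constant function $0$ lies in $\DT_d$ (a single leaf labelled $0$ is a depth-$0$ tree); taking $h=0$ gives $C-0=C$ and hence $\TD(\DT_d)\ge\OPT^{\HS}(\DT_d)$. The previous theorem then supplies $\OPT^{\HS}(\DT_d)\ge\Omega(2^d\log n)$. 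Alternatively one can argue directly: running a learner on the all-zero target forces its (now fixed) query set to hit every nonzero depth-$d$ tree, since otherwise some nonzero tree would produce identical answers and be misclassified as $0$.

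For the upper bound I would use the identity $\OPT_\NA(C)=\OPT^{\HS}(C\oplus C)$ from part~3 of Lemma~\ref{xor}, together with the key structural containment $\DT_d\oplus\DT_d\subseteq\DT_{2d}$. To see the containment, given $f,g\in\DT_d$ I would build a single tree for $f\oplus g$ by grafting a copy of $g$'s tree onto each leaf of $f$'s tree and relabelling every resulting leaf by the XOR of the two leaf constants; each root-to-leaf path first spends at most $d$ edges resolving $f$ and then at most $d$ edges resolving $g$, so the composite tree has depth at most $2d$. Since hitting-set size is monotone under class inclusion (a hitting set for the larger class hits every nonzero function of the smaller), this gives $\OPT^{\HS}(C\oplus C)\le\OPT^{\HS}(\DT_{2d})$, and the previous theorem with $d$ replaced by $2d$ yields $\OPT^{\HS}(\DT_{2d})\le O(2d\cdot 2^{2d}\log n)=O(d2^{2d}\log n)$.

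The one genuinely new step, and the main thing to get right, is the composition $\DT_d\oplus\DT_d\subseteq\DT_{2d}$; everything else is bookkeeping and citations of already-established facts. The only subtlety there is confirming that grafting a subtree onto a leaf produces a legitimate decision tree (it does, since the decision-tree definition is closed under replacing a leaf by an arbitrary subtree) and that computing the XOR of the two leaf constants at the new leaves incurs no depth beyond $d+d$.
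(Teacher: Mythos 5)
Your proposal is correct and follows essentially the same route as the paper: the upper bound is identical (the containment $\DT_d\oplus\DT_d\subseteq\DT_{2d}$ via grafting $T(g)$ onto the leaves of $T(f)$ with the leaf labels XOR-ed, combined with $\OPT_\NA(C)=\OPTHS(C\oplus C)$ and the preceding theorem applied at depth $2d$), and your grafting construction is exactly the paper's, just described in one step rather than as ``replace $0$-leaves by $T(g)$ and $1$-leaves by $T(g+1)$.'' The only divergence is cosmetic and concerns the lower bound: the paper argues directly with the adversary that answers $0$ to every query, forcing the query set of any adaptive learner to be an $(n,d)$-universal set, whereas you package the same idea through the chain $\OPT_\A(C)\ge\TD(C)=\max_{h}\OPTHS(C-h)\ge\OPTHS(C-0)=\OPTHS(\DT_d)$ using the constant-$0$ tree as the teacher's target; both are valid given the facts already recorded in the survey, and you explicitly note the direct argument as an alternative, so nothing is missing. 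One tiny bookkeeping point worth stating when you invoke the teaching-dimension chain is that $C-0=C$ because $f\oplus 0=f$, which you do handle.
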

\begin{proof} Let ${\cal A}$ be an adaptive algorithm that learns DT${}_d$.
We run ${\cal A}$ and answer $0$ for all the membership queries.
The algorithm must stop and return the function $0$ as the target function.
Let $A$ be the set of all the assignments asked in the membership queries.
If $A$ is not $(n,d)$-universal set, then there is $1\le j_1<\cdots<j_d\le n$ and $(\xi_1,\ldots,\xi_d)\in \{0,1\}^d$ such that $(a_{j_1},\ldots,a_{j_d})\not=(\xi_1,\ldots,\xi_d)$.
Then the term $T=x_{j_1}^{\xi_1}\cdots x_{j_d}^{\xi_d}\in \DT_d$
is zero on all the
assignments of $A$, and we get a contradiction.

We now prove that $\DT_d\oplus\DT_d\subset \DT_{2d}$ and then the other results follow from {\it 3} in Lemma~\ref{xor}.

For any two functions $f,g\in \DT_d$ and the corresponding decision trees
$T(f)$ and $T(g)$, one can construct a tree for $f\oplus g$ from $T(f)$ and $T(g)$ as follows. First $g+1$ is equivalent to the tree $T(g+1)$ that is the same as $T(g)$ where the labels in the leafs are flipped from $0$ to $1$ and from $1$ to $0$. Now in the tree $T(f)$, replace each leaf that is labeled with $0$ with the tree $T(g)$ and each leaf that is labeled with $1$ with the tree $T(g+1)$. It is easy to show that this tree computes $f\oplus g$, and its depth is at most $2d$.\qed
\end{proof}

\subsection{Adaptive Learning Decision Tree}
The adaptive learnability of decision tree
of depth $d$ follows from many papers~\cite{B12,B13,BJT04,BM02,F07,J97,KM93,SS96}.
One of the powerful techniques used in the literature is the discrete Fourier transform DFT.
In DFT, one regards
the Boolean function $f:\{0,1\}^n\to \{0,1\}$ as a real function $f:\{0,1\}^n\to \{+1,-1\}$
in $\Re^{\{0,1\}^n}$,
represent it as a linear combination of orthonormal basis functions,
and then learns the coefficients.

In \cite{KM93}, Kushilevitz and Mansour used this technique for
learning the class of decision trees as follows. Consider the set ${\cal
A}=\{\chi_a:\{0,1\}^n\to\{-1,+1\} | a\in \{0,1\}^n\}$ where
$\chi_a(x)=\prod_{i=1}^n (-1)^{a_ix_i}$.
It is easy to see that ${\cal
A}$ is an orthonormal basis for the set of functions $f:\{0,1\}^n\to
\Re$. Therefore, every function $f:\{0,1\}^n\to \Re$ can be
represented as
$$f=\sum_{a\subseteq \{0,1\}^n}\hat f(a) \chi_a$$ where $\hat f(a)\in \Re$.
This representation is called the Fourier representation of $f$
and $\hat f(a)$ is called the {\it Fourier coefficient of} $\chi_a$.
It is easy to see that the Fourier coefficients of $\chi_a$ is $\hat
f(a)=\E_x[f(x)\chi_a(x)]$ where $x\in\{0,1\}^n$, and
the expectation is over the uniform distribution on $\{0,1\}^n$.
So every coefficient can be estimated using Chernoff
bound. It remains to show that for a decision tree of depth $d$ the
number of nonzero Fourier coefficients $\hat f(a)$ is small, and they can be
found exactly and efficiently\footnote{An efficient learning algorithm for
decision tree of depth $d$ is one that asks $poly(2^d,\log
n)$ queries and runs in $poly(2^d,n)$ time. See Section~\ref{VE}.}.
We demonstrate the algorithm with the help of the following simple example.
$$ $$
$$ $$
$$ $$
\begin{figure}
\centering
\includegraphics[trim = 2.5cm 5cm 0 5cm,width=1\textwidth]{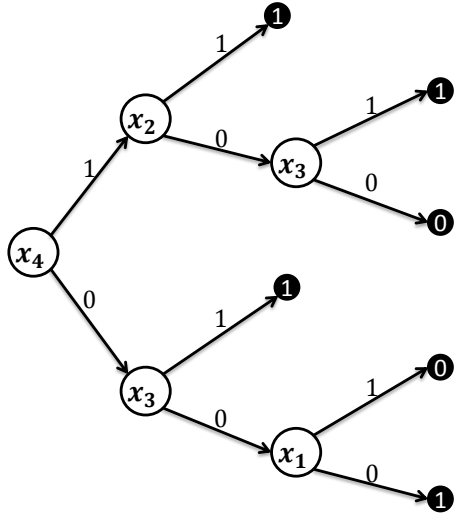}
\caption{The decision tree of $f=x_4(x_2+\bar x_2
x_3)+\bar x_4(x_3+\bar x_3\bar x_1)$. The tree is of depth $3$ and size $6$.}
\label{DTree02}
\end{figure}

Consider the decision tree in Figure~\ref{DTree02}: $f=x_4(x_2+\bar x_2
x_3)+\bar x_4(x_3+\bar x_3\bar x_1)$. In this example, the depth of $f$ is $3$, and $f$ is a
sum of terms of size $3$. First notice that since the
terms are disjoint (no two terms are equal to $1$ for the same
assignment), the ``+'' operation can be replaced by the arithmetic
``+'' operation in $\Re$. To change the values of the function to $+1,-1$ values,
we take $g=1-2f$.
In general, every decision tree of size
$s$ and depth $d$ can be written as a sum (in $\Re$) of $s$ terms
of size at most~$d$. Now take any term, say $t=x_4\bar x_2 x_3$. Over the
real numbers $\Re$, we can express $x$ as $(1-(-1)^x)/2$ and $\bar
x$ as $(1+(-1)^x)/2$. Then the term $t$ can be expressed as
\begin{eqnarray*}
t&=& \frac{1-(-1)^{x_4}}{2}\cdot \frac{1+(-1)^{x_2}}{2}\cdot \frac{1-(-1)^{x_3}}{2}\\
&=& \frac{\chi_{(0000)}+\chi_{(0100)}-\chi_{(0010)}-\chi_{(0001)}-\chi_{(0110)}
-\chi_{(0101)}+\chi_{(0011)}+\chi_{(0111)}}{8}.
\end{eqnarray*}
In general, every term of size $d$ has a Fourier representation
that contains $2^d$ non-zero coefficients $\hat f(a)$ each is $\pm 1/2^d$
and $wt(a)\le d$  where $wt(a)$ is the Hamming weight of $a$, i.e, the number of ones in $a$.
Therefore, every decision tree of size $s$ and
depth $d$ has a Fourier representation that contains at most $s2^d+1\le
2^{2d}+1$ non-zero Fourier coefficients, each has one of the
values in $\{\pm k/2^d\ |\ k\in [s2^d]\}\cup \{0\}$.
In fact, using Parseval's identity, one can prove that $\sum_{a\in \{0,1\}^n} \hat f(a)^2=1$
and, therefore, the
Fourier coefficients have values from $\{\pm k/2^d\ |\ k\in [2^d]\}\cup \{0\}$. Also each
non-zero coefficient $\hat f(a)$ satisfies $wt(a)\le d$. Now using
Chernoff bound, for each assignment $a$ of weight at most $d$, one can exactly find each
coefficient $\hat f(a)=\E_x[f(x)\chi_a(x)]$ with
$poly(s,2^d)$ queries. The problem with this algorithm is
that since the number of assignments $a$ that satisfies $wt(a)\le d$ is $n^{O(d)}$
the time complexity is exponential $n^{O(d)}$.

Kushilevitz and Mansour in \cite{KM93}
and Goldreich and Levin in \cite{GL89} gave an adaptive algorithm
that finds the non-zero coefficients in $poly(2^d,n)$ time and
queries. Kushilevitz and Mansour algorithm (KM-algorithm) is based on the fact
that for any $\alpha\in \{0,1\}^r$ we have
$$F_\alpha:=\underset{y,z\in \{0,1\}^r, x\in \{0,1\}^{n-r}}{\E} \left[ f(yx)
f(zx)\chi_\alpha(y)\chi_\alpha(z)\right]=\sum_{x \in \{0,1\}^{n-r}} \hat f(\alpha x)^2,$$
where for $y=(y_1,\ldots,y_r)$ and $x=(x_1,\ldots,x_s)$,
$yx=(y_1,\ldots,y_r,x_1,\ldots,x_s)$.
Notice that $F_\alpha\le 1$ can be computed exactly (with high probability) with Chernoff bound.
Now KM-algorithm uses divide and conquer technique with the above identity to
find the non-zero coefficients in $poly(2^d,n)$ time. Notice that
$$F_{()}= \sum_{x \in \{0,1\}^{n}} \hat f(x)^2,\ \ F_{(0)}=\sum_{x \in \{0,1\}^{n-1}} \hat f(0 x)^2,\ \ F_{(1)}=\sum_{x \in \{0,1\}^{n-1}} \hat f(1 x)^2$$ and therefore $F_{()}=F_{(0)}+F_{(1)}$.
The algorithm first computes $F_{(0)}$ and $F_{(1)}$
and lets $T_1=\{\xi\in \{0,1\}|F_{(\xi)}\not=0\}$.
At some stage, it holds a set $T_r=\{\alpha^{(1)},\ldots,\alpha^{(t)}\}\subseteq \{0,1\}^r$ where $F_\alpha\not=0$
for all $\alpha\in T_r$ and $F_\alpha=0$
for all $\alpha\not\in T_r$. Now for each $\alpha\in T_r$ it computes $F_{(\alpha,0)}$ and $F_{(\alpha,1)}$. Since $0\not= F_\alpha =F_{(\alpha,0)}+F_{(\alpha,1)}$, at least one of them is not zero.
Then it defines $T_{r+1}=\{(\alpha,\xi)\ |\ F_{(\alpha,\xi)}\not=0\}$. Since the number
of the non-zero Fourier coefficients of a decision tree of depth at most $d$ is less than $2^{2d}+1$, the number of elements in
$T_{r+1}$ is less than $2^{2d}+1$. Notice that for $\alpha\in \{0,1\}^n$
\begin{eqnarray*}
F_\alpha&=&\underset{y,z\in \{0,1\}^n}{\E} \left[ f(y)
f(z)\chi_\alpha(y)\chi_\alpha(z)\right]\\
&=&\underset{y\in \{0,1\}^n}{\E} \left[ f(y)
\chi_\alpha(y)\right]\underset{z\in \{0,1\}^n}{\E} \left[
f(z)\chi_\alpha(z)\right]=\hat   f(\alpha)^2.
\end{eqnarray*}
and therefore, $T_n$ contains all the assignments $\alpha\in \{0,1\}^n$
for which $\hat f(\alpha)\not=0$. It is easy to see that this algorithm
runs in time $poly(2^d,n)$. Therefore

\begin{theorem} \cite{KM93} There is an adaptive Monte Carlo learning algorithm that
learns $\DT_d$ in $poly(2^d,n)$ time and $O(2^{6d}\cdot n)$ membership queries.
\end{theorem}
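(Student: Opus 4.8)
The plan is to turn the structural facts about the Fourier spectrum of a depth-$d$ decision tree, together with the Kushilevitz--Mansour identity for $F_\alpha$, into a branch-and-prune search over prefixes, and then account for the queries. Throughout I work with the $\pm 1$-valued function $g=1-2f$, whose Fourier expansion has the same support as that of $f$. The facts I would take as given (they are established in the discussion above) are: $g$ has at most $2^{2d}+1$ nonzero Fourier coefficients, every nonzero $\hat g(a)$ satisfies $wt(a)\le d$ and is an integer multiple of $1/2^d$, and hence $|\hat g(a)|\ge 1/2^d$ whenever $\hat g(a)\neq 0$.

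First I would record the crucial gap: for any prefix $\alpha\in\{0,1\}^r$ the quantity $F_\alpha=\sum_{x}\hat g(\alpha x)^2$ is either $0$ or at least $1/2^{2d}$, since a single nonzero summand already contributes at least $1/2^{2d}$. Because $F_\alpha=\E[g(yx)g(zx)\chi_\alpha(y)\chi_\alpha(z)]$ is the mean of a random variable bounded in $[-1,1]$, a Chernoff/Hoeffding bound lets me estimate it from membership queries: drawing $m=O(2^{4d}\log(1/\delta'))$ independent samples $(y,z,x)$ and querying $g$ at the two points $yx$ and $zx$ yields an estimate within additive error $1/2^{2d+1}$ with probability $1-\delta'$. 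Comparing the estimate to $1/2^{2d+1}$ then decides, correctly, whether $F_\alpha=0$.

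Next I would run the divide-and-conquer search exactly as described: maintain the live set $T_r=\{\alpha\in\{0,1\}^r: F_\alpha\neq 0\}$, and pass from level $r$ to level $r+1$ by estimating $F_{(\alpha,0)}$ for each $\alpha\in T_r$ (and setting $F_{(\alpha,1)}=F_\alpha-F_{(\alpha,0)}$), keeping the nonzero children. The key bound is $|T_r|\le 2^{2d}+1$, because each live prefix witnesses at least one nonzero coefficient and distinct prefixes witness disjoint coefficients. At level $n$ we have $F_\alpha=\hat g(\alpha)^2$, so $T_n$ is exactly the Fourier support; I would then estimate each $\hat g(\alpha)=\E_x[g(x)\chi_\alpha(x)]$ to additive accuracy below $1/2^{d+1}$ and round to the nearest multiple of $1/2^d$, recovering every nonzero coefficient (sign included) exactly. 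The output hypothesis is $h=(1-\sum_\alpha\hat g(\alpha)\chi_\alpha)/2$, which equals $f$ since the whole spectrum has been recovered exactly.

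Finally, the accounting: the search performs $\sum_{r=0}^{n-1}|T_r|=O(n\,2^{2d})$ estimations, each costing $O(2^{4d})$ queries (up to the logarithmic factor needed for a union bound over all $O(n2^{2d})$ estimations, which I would fold into a constant overall failure probability), for a total of $O(2^{6d}n)$ membership queries; all bookkeeping is $\poly(2^d,n)$, and adaptivity enters only through the dependence of $T_{r+1}$ on the estimates at level $r$. The main obstacle I anticipate is the probabilistic bookkeeping rather than the algebra: I must choose the per-estimation confidence $\delta'$ small enough that a union bound over all $O(n2^{2d})$ estimates keeps the total failure probability below $1/4$, while ensuring the resulting $\log(1/\delta')=O(d+\log n)$ factor is absorbed into the claimed $O(2^{6d}n)$ bound. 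Getting the gap constant and the Chernoff sample size to line up so that no nonzero $F_\alpha$ is ever mistaken for zero, which would prune away a genuine coefficient, is the one place where the argument must be handled with care.
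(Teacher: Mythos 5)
Your proposal is correct and follows essentially the same route as the paper: the Kushilevitz--Mansour identity $F_\alpha=\sum_x \hat f(\alpha x)^2$, the gap $F_\alpha\in\{0\}\cup[1/2^{2d},1]$ coming from the $1/2^d$ granularity of the coefficients, the prefix tree search with $|T_r|\le 2^{2d}+1$ live nodes per level, and the $n\cdot 2^{2d}\cdot 2^{4d}$ query accounting. The only points where you deviate are cosmetic (inferring $F_{(\alpha,1)}$ from $F_\alpha-F_{(\alpha,0)}$ instead of estimating both children, and your honest flag about the $O(d+\log n)$ union-bound factor, which the paper also suppresses in its $O(2^{6d}n)$ figure).
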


Kushilevitz and Mansour use a derandomization technique to change the algorithm to deterministic.
They prove
\begin{theorem}\label{DTD} \cite{KM93} There is an adaptive deterministic learning algorithm that
learns $\DT_d$ in $poly(2^d,n)$ time and $O(2^{10d}\cdot n\log^2 n)$ membership queries.
\end{theorem}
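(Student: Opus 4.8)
The plan is to derandomize the Monte Carlo KM-algorithm of the previous theorem by replacing its only source of randomness---the Chernoff-bound estimation of the quantities $F_\alpha$---with evaluation over a small, explicitly constructible pseudorandom set of assignments. First I would pin down the accuracy required. Since $f$ is (the $\pm1$-encoding of) a decision tree of depth $d$, every nonzero Fourier coefficient satisfies $|\hat f(a)|\ge 1/2^d$ and $wt(a)\le d$; hence each $F_\alpha=\sum_x \hat f(\alpha x)^2$ is either $0$ or at least $1/2^{2d}$. So it suffices to estimate every $F_\alpha$ that the algorithm encounters to additive error $\theta<1/2^{2d+1}$, which lets the divide-and-conquer search decide correctly whether to keep the prefix $\alpha$ in its frontier (of size at most $2^{2d}$).

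Next I would observe that the algorithm only ever needs $F_\alpha$ for prefixes with $wt(\alpha)\le d$: a prefix survives only when $F_\alpha\neq 0$, which forces some $\hat f(\alpha x)\neq 0$ and hence $wt(\alpha)\le d$. For such $\alpha$, consider the combined variable $w=(y,z,x)\in\{0,1\}^{n+r}$ and the test function $G_\alpha(w)=f(yx)f(zx)\chi_\alpha(y)\chi_\alpha(z)$, whose uniform average is exactly $F_\alpha$. As a function of $w$, both $f(yx)$ and $f(zx)$ have Fourier degree at most $d$ and the two characters contribute degree at most $d$ each, so $G_\alpha$ is a polynomial of degree at most $4d$; moreover $\|\widehat{G_\alpha}\|_1\le \|\widehat f\|_1^2\le 2^{O(d)}$, since a depth-$d$ tree is a sum of at most $2^d$ subcube indicators of spectral $L_1$-norm $1$, and multiplying by a character does not change the $L_1$-norm.

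The key step is then to fool all these test functions simultaneously. If $D$ is an $\epsilon$-almost $4d$-wise independent distribution over the $n+r$ coordinates of $w$, then for every relevant $\alpha$ we get $|\E_{w\sim D}[G_\alpha]-F_\alpha|\le \epsilon\,\|\widehat{G_\alpha}\|_1\le \epsilon\,2^{O(d)}$, so choosing $\epsilon=2^{-cd}$ for a suitable constant $c$ forces the error below $\theta$. Using the explicit constructions of almost $k$-wise independent (equivalently, small-bias) sample spaces, such a $D$ is constructible in $\poly(2^d,n)$ time and the same space works for every $\alpha$. Thus the learner queries $f(yx)$ and $f(zx)$ once for each point of $D$ and thereafter estimates each $F_\alpha$ by reweighting the stored answers with $\chi_\alpha(y)\chi_\alpha(z)$; this bounds the membership queries by $O(|D|)$, which the standard construction pins at $O(2^{10d}n\log^2 n)$, and the running time by $\poly(2^d,n)$ because the frontier has size at most $2^{2d}$ across $n$ levels.

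The main obstacle I anticipate is not the algorithmic skeleton---that is inherited verbatim from the randomized KM-algorithm---but the precise parameter accounting: verifying the bound $\|\widehat{G_\alpha}\|_1\le 2^{O(d)}$ and then threading the degree $4d$ and bias $\epsilon=2^{-\Theta(d)}$ through the explicit small-bias construction so that the support size comes out at exactly $O(2^{10d}n\log^2 n)$. The reduction ``fooling low-degree, low-$L_1$ functions $\Rightarrow$ correct estimates of $F_\alpha$'' is where all the exponents in $d$ are set, and checking that the BFS frontier stays correct under the resulting worst-case additive errors is the detail that must be verified carefully.
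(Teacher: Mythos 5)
The paper does not actually prove this theorem: it states it with the single remark that Kushilevitz and Mansour ``use a derandomization technique'' and cites \cite{KM93}. Your reconstruction is correct and is essentially the derandomization that \cite{KM93} carry out: since the $\pm1$ encoding of a depth-$d$ decision tree has Fourier degree at most $d$ and spectral $L_1$-norm $2^{O(d)}$, every estimator $F_\alpha$ (and every final coefficient $\hat f(\alpha)$) is the mean of a test function of degree $O(d)$ and spectral norm $2^{O(d)}$, and an explicit $\epsilon$-almost $O(d)$-wise independent sample space with $\epsilon=2^{-\Theta(d)}$ fools all of them simultaneously to additive error below the $2^{-2d-1}$ gap, so the KM frontier is maintained exactly. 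One small bookkeeping point: a single sample space does not yield $O(|D|)$ queries overall, because the split of $w$ into $(y,z,x)$ changes with the prefix length $r$, so the points $f(yx),f(zx)$ must be re-queried at each of the $n$ levels; the count is $O(n\cdot|D|)$ with $|D|=O(d^2 2^{8d}\log^2 n)$ from the almost-$k$-wise-independent construction (a plain $\epsilon$-biased set over $2n$ bits would be too large here), which is exactly how the stated $O(2^{10d}\cdot n\log^2 n)$ arises.
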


By Lemma~\ref{BHL0}, we have

\begin{theorem} There is an adaptive deterministic learning algorithm that
learns $\DT_d$ in $poly(2^d,n)$ time and $O(2^{13d+o(d)}\cdot \log n)$ membership queries.

In particular, $\DT_d$ is efficiently adaptively learnable and optimally adaptively learnable in $n$.
\end{theorem}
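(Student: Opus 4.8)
The plan is to derive this result as a direct application of the reduction in Lemma~\ref{BHL0} to the class $C=\DT_d$, using the deterministic Kushilevitz--Mansour learner of Theorem~\ref{DTD} as the base algorithm. First I would check that $\DT_d$ is projection closed: fixing any variable $x_i$ to a constant $\xi$ in a depth-$d$ decision tree yields, after pruning the now-unreachable subtree, again a tree of depth at most $d$, so $\DT_d$ satisfies the hypothesis of Lemma~\ref{BHL0}. The key observation is that every depth-$d$ tree has at most $2^d-1$ internal nodes and hence fewer than $2^d$ relevant variables; taking $r:=2^d$ we therefore have $(\DT_d)_r=\DT_d$, and the reduction will be invoked with this value of $r$.

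Next I would assemble the three inputs Lemma~\ref{BHL0} requires. For the base learner, Theorem~\ref{DTD} gives $Q(n)=O(2^{10d}\,n\log^2 n)$ queries and $T(n)=\poly(2^d,n)$ time; evaluating at $n=r=2^d$ (and using $\log 2^d=d$, with $d$ fixed) yields $Q(r)=O(2^{11d}d^2)$. For the constant testing set I would argue that any $(n,d)$-universal set $A$ is a constant testing set for $\DT_d$: a non-constant $f\in\DT_d$ has a leaf labeled $1$ reached by a root-to-leaf path fixing at most $d$ variables, and likewise a leaf labeled $0$; by the universal-set property there is $a\in A$ consistent with the first path (so $f(a)=1$) and $b\in A$ consistent with the second (so $f(b)=0$), giving $f(a)\neq f(b)$. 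The polynomial-time construction of~\cite{NSS95} then supplies such a set of size $P(n,r)=O(2^{d+\log^2 d}\log n)$ in time $t(n)=\poly(n,2^d)$.

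Plugging these into the query bound $O(r\cdot Q(r)P(n,r)+r\log n)$ of Lemma~\ref{BHL0}, the product term dominates and equals
$$2^d\cdot O(2^{11d}d^2)\cdot O(2^{d+\log^2 d}\log n)=O\!\left(2^{13d}\,d^2\,2^{\log^2 d}\log n\right),$$
and since $d^2\,2^{\log^2 d}=2^{\log^2 d+2\log d}=2^{o(d)}$ this is $O(2^{13d+o(d)}\log n)$, while the additive $r\log n=2^d\log n$ term is absorbed; the running time $\poly(n,T(r),Q(r),t(n),P(n,r))$ remains $\poly(2^d,n)$. Finally, for the ``in particular'' claim I would compare with the lower bound $\OPT_\A(\DT_d)\ge\Omega(2^d\log n)$ of Theorem~\ref{DTHT}: for fixed $d$ the algorithm asks $O(\log n)=O(\OPT_\A(\DT_d))$ queries, establishing optimality in $n$, and in general $2^{13d+o(d)}\log n\le(2^d\log n)^{O(1)}=\poly(\OPT_\A(\DT_d))$, establishing efficiency.

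The one genuinely delicate part is the bookkeeping of the two separate parameters $d$ and $n$ inside $Q(\cdot)$ and $P(\cdot,\cdot)$: one must remember that $r=2^d$ is substituted for the variable count while the depth $d$ stays fixed, and must verify that the stray factor $d^2\,2^{\log^2 d}$ is truly $2^{o(d)}$. I do not anticipate any obstacle beyond this accounting, since every combinatorial ingredient (projection-closure, the universal-set constant test, and the \cite{NSS95} construction) is already available from the excerpt.
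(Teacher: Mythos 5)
Your proposal is correct and follows essentially the same route as the paper: invoke Lemma~\ref{BHL0} with $r=2^d$, take $Q(r)=O(d^2 2^{11d})$ from Theorem~\ref{DTD} evaluated at $n=r=2^d$, use an $(n,d)$-universal set of size $O(2^{d+o(d)}\log n)$ from \cite{NSS95} as the constant testing set, and multiply out to get $2^{13d+o(d)}\log n$. The only difference is that you spell out the projection-closure and constant-test-set verifications that the paper leaves to a pointer to Theorem~\ref{DTHT}, which is fine.
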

\begin{proof} We use Lemma~\ref{BHL0}. Since decision trees of depth $d$ have at most $2^d$ relevant variables, we can set $r=2^d$. By Theorem~\ref{DTD}, $Q(r)=O(d^22^{11d})$.
An $(n,d)$-universal set is a constant testing set for $\DT_d$. See the proof of Theorem~\ref{DTHT}.
The best known polynomial time, $poly(n,2^d)$,
construction  gives an $(n,d)$-universal set
of size $O(2^{d+o(d)}\log n)$ \cite{NSS95}. Therefore $P(n,r)=O(2^{d+o(d)}\log n)$.
Then the reduction in Lemma~\ref{BHL0} gives a polynomial time adaptive learning algorithm that asks
$$O(r\cdot Q(r)P(n,r)+r\log n)=2^{13d+o(d)}\log n$$ membership queries.\qed
\end{proof}

See other randomized algorithms in \cite{BM02,SS96} that
use different techniques. The algorithm in \cite{SS96} uses membership and
equivalence queries, and it is easy to see that every equivalence query can be simulated
by randomized membership queries.

\subsection{Non-Adaptive Learning Decision Tree}
In this subsection, we give a sketch of the results in \cite{BJT04,GL89}
    and then of \cite{F07} that gave the first polynomial time Monte Carlo non-adaptive learning algorithm for~$\DT_d$.

The following is the result of Hofmeister in~\cite{H99}
\begin{lemma}~\cite{H99}\label{H99} There is a polynomial time deterministic non-adaptive algorithm for $C=\{\chi_a(x)|wt(a)\le d\}$
that asks $O(d\log n)$ membership queries.

In particular, there is a set of assignments $B=\{b^{(1)},\ldots,b^{(t)}\}$ of size $t=O(d\log n)$ that can be constructed in polynomial time and an algorithm ${\cal A}$ such that: Given $\chi_a(b^{(1)}),\ldots,\chi_a(b^{(t)})$ for some $a$ of weight at most $d$, the algorithm ${\cal A}$ finds the assignment $a$ in polynomial time.
\end{lemma}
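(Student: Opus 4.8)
The plan is to recognize this as the problem of recovering a sparse vector from linear measurements over $F_2$, which is exactly syndrome decoding of an error-correcting code. Since $\chi_a(x)=(-1)^{\langle a,x\rangle}$ with $\langle a,x\rangle=\sum_i a_ix_i \bmod 2$, a membership query at an assignment $b$ reveals the single bit $\langle a,b\rangle \bmod 2$. Collecting the queries $b^{(1)},\ldots,b^{(t)}$ as the rows of a $t\times n$ Boolean matrix $M$, the vector of exponents of the answers is precisely the syndrome $s=Ma \bmod 2$, where $a$ is a column vector of weight at most $d$. So the first step is to observe that the answers determine $a$ uniquely as long as $a\mapsto Ma$ is injective on vectors of weight $\le d$; equivalently, no nonzero vector of weight $\le 2d$ lies in the kernel of $M$, i.e. every $2d$ columns of $M$ are $F_2$-linearly independent. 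This is exactly the condition that $M$ be a parity-check matrix of a binary linear code of minimum distance at least $2d+1$.

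First I would instantiate $M$ with a binary BCH code. Set $m=\lceil \log (n+1)\rceil$, work in $F_{2^m}$, and index the $n$ columns by distinct nonzero elements $\beta_1,\ldots,\beta_n\in F_{2^m}$. Take the functionals $\beta\mapsto \beta^j$ for the odd exponents $j\in\{1,3,\ldots,2d-1\}$; identifying $F_{2^m}$ with $F_2^m$ via a fixed basis, each functional contributes $m$ binary rows, so $t=dm=O(d\log n)$. The minimum-distance bound is the standard BCH argument: any $2d$ distinct columns, paired with the full exponent set $j=1,\ldots,2d$, form a Vandermonde matrix over $F_{2^m}$ and are linearly independent. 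Restricting to the odd exponents loses nothing over $F_2$ because each even-power constraint is obtained from an odd-power one by the Frobenius map $z\mapsto z^2$, which is $F_2$-linear; hence the $F_2$-row span of the odd powers equals that of all $2d$ powers, and the code still has minimum distance $\ge 2d+1$.

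The algorithm $\mathcal{A}$ is then syndrome decoding for this code. From the answer bits one reassembles the field syndromes $s_j=\sum_{k}\beta_{i_k}^{\,j}$ (summed over the unknown support of $a$) for the chosen exponents, runs Berlekamp--Massey to produce the error-locator polynomial whose roots are the $\beta_i$ in the support of $a$, and performs a Chien search over $\beta_1,\ldots,\beta_n$ to read off that support. Because $wt(a)\le d$ and the code corrects $d$ errors, the decoded $a$ is the unique weight-$\le d$ preimage, so $\mathcal{A}$ outputs the correct $a$ in $\poly(n)$ time. The set $B$ of rows of $M$ is explicitly constructible in $\poly(n)$ time once a degree-$m$ irreducible polynomial is fixed, which can itself be found in time $\poly(m)$.

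The main obstacle is algorithmic rather than existential: a counting (Gilbert--Varshamov) argument easily gives a matrix with $t=O(d\log n)$ rows and every $2d$ columns independent, but that only yields a brute-force decoder. The content of the lemma is to obtain $M$ \emph{explicitly} and to decode in \emph{polynomial} time simultaneously, and this is exactly what the algebraic structure of BCH codes supplies. The step requiring the most care is getting the reduction right --- verifying that the raw answer bits assemble into precisely the syndrome the BCH decoder expects, and that the weight-$\le d$ promise makes the decoded error pattern unique --- so that Berlekamp--Massey and the Chien search apply verbatim and run in polynomial time.
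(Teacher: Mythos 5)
Your proposal is correct and follows essentially the same route as the cited source: the survey states this lemma without proof, and Hofmeister's original argument in \cite{H99} is exactly the reduction of recovering a weight-$\le d$ exponent vector $a$ from the bits $\langle a,b^{(i)}\rangle$ to syndrome decoding of a binary BCH code of designed distance $2d+1$, with $t=dm=O(d\log n)$ parity checks and Berlekamp--Massey plus Chien search for the polynomial-time decoder. Your treatment of the details (the Frobenius argument for restricting to odd powers, the Vandermonde minimum-distance bound, and the uniqueness of the weight-$\le d$ preimage) is accurate.
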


The main idea of the learning algorithm of $\DT_d$ is to use pairwise independent
assignments for estimating $\E_x[f(x)\chi_a(x)]$, rather than totally independent assignments.
Since pairwise independent assignments can be generated with a small number of random bits, the problem
is reduced to finding the Fourier coefficient of a function that depends on a small number of variable.
Using those coefficients one can recover the assignments $a$ with large Fourier coefficients $\hat f(a)$ in $f$. We now give a sketch of the algorithm and its correctness.

It is easy to see that for the function $f_b(x):=f(x+b)$ we have
\begin{eqnarray}\label{zuzu}
\hat f_b(a)=\chi_a(b)\hat f(a).
\end{eqnarray} Therefore
$$\chi_a(b)\cdot \sign(\hat f(a))=\sign(\hat f_b(a))=\sign\left(\E_x[f(x+b)\chi_a(x)]\right)$$ where $\sign(x)=1$ if $x\ge 0$ and $=-1$ otherwise. Therefore
$$\chi_a(b)=\sign(\hat f(a))\cdot \sign\left(\E_x[f_b(x)\chi_a(x)]\right).$$
Assuming we know $\sign(\hat f(a))$, to compute $\chi_a(b)$ for some $b$ we only need to
know the sign of $\E_x[f_b(x)\chi_a(x)]$. Notice that $|\E_x[f_b(x)\chi_a(x)]|=|\hat f_b(a)|=|\hat f(a)|$.
Now assuming $|\E_x[f_b(x)\chi_a(x)]|=|\hat f(a)|$
is not zero (and therefore $\ge 1/2^d$), to compute the sign of $\E_x[f_b(x)\chi_a(x)]$ it is enough to use Bienayme-Chebyshev
bound rather than Chernoff bound. That is, to estimate $f_b(x)\chi_a(x)$ on pairwise independent
assignments rather than totally independent assignments. To generate such assignments,
consider a random uniform $k\times n$ matrix $R$ over the binary field $F_2=\{0,1\}$ where $k=\log m$.
We will determine $m$ later. Then the assignments in the set $\{pR\ |\ p\in \{0,1\}^{k}\}$ are pairwise
independent. Combining all the above ideas with Bienayme-Chebyshev
bound we prove
\begin{lemma}\label{fco} With probability at least $1-4/(m(\hat f(a))^2)$ we have
$$\chi_a(b)= \sign(\hat f(a)) \cdot \sign(\hat f_{R,b}(aR^T))$$
where $f_{R,b}(x_1,\ldots,x_k)=f(xR^T+b)$.
\end{lemma}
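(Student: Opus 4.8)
The plan is to recognize $\hat f_{R,b}(aR^T)$ as an empirical estimate of $\hat f_b(a)$ taken over the pairwise independent sample $\{xR:x\in\{0,1\}^k\}$, and then to combine the Bienayme--Chebyshev inequality with identity~(\ref{zuzu}). First I would unwind the definition. Writing $f_{R,b}(x)=f(xR+b)$ for $x\in\{0,1\}^k$, the Fourier coefficient at the index $aR^T\in\{0,1\}^k$ is
\[ \hat f_{R,b}(aR^T)=\E_{x\in\{0,1\}^k}\left[f(xR+b)\,\chi_{aR^T}(x)\right]. \]
The key elementary identity is $\chi_{aR^T}(x)=\chi_a(xR)$, which holds because over $F_2$ we have $(aR^T)\cdot x=a\cdot(xR)$; this is just the defining property of the transpose. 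Substituting and setting $f_b(y):=f(y+b)$ gives
\[ \hat f_{R,b}(aR^T)=\E_{x\in\{0,1\}^k}\left[f_b(xR)\,\chi_a(xR)\right], \]
i.e. the average of the $\{+1,-1\}$-valued quantity $g(y):=f_b(y)\chi_a(y)$ over the $m=2^k$ sample points $y=xR$. For a uniformly random $R$ these points are pairwise independent and, for $x\ne 0$, uniform, so the expectation of each such term is $\E_y[g(y)]=\hat f_b(a)$, which by~(\ref{zuzu}) equals $\chi_a(b)\hat f(a)$.

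Next I would bound the deviation. Let $Y:=\hat f_{R,b}(aR^T)=\frac1m\sum_{x}g(xR)$. By pairwise independence the covariances between distinct sample terms vanish, and since each term has variance at most $1$ we get $\Var[Y]\le 1/m$. Moreover $\E[Y]$ differs from $\hat f_b(a)$ only through the single constant term $x=0$, a bias of magnitude $O(1/m)$ that is negligible against $|\hat f(a)|\ge 1/2^d$ for the values of $m$ used. Applying Bienayme--Chebyshev to $Y$ about its mean with deviation $\tfrac12|\hat f(a)|$, and absorbing the bias, yields
\[ \Pr\left[\,|Y-\hat f_b(a)|\ge \tfrac12|\hat f(a)|\,\right]\le \frac{\Var[Y]}{(|\hat f(a)|/2)^2}\le \frac{4}{m(\hat f(a))^2}. \]

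Finally I would convert this magnitude bound into the claimed sign identity. On the complementary event $|Y-\hat f_b(a)|<\tfrac12|\hat f(a)|<|\hat f(a)|=|\hat f_b(a)|$ (using $\chi_a(b)=\pm1$ and~(\ref{zuzu})), so $Y$ and $\hat f_b(a)$ lie strictly on the same side of $0$ and hence $\sign(Y)=\sign(\hat f_b(a))$. Since $\hat f_b(a)=\chi_a(b)\hat f(a)$ with $\chi_a(b)\in\{+1,-1\}$, this gives $\sign(\hat f_b(a))=\chi_a(b)\,\sign(\hat f(a))$; multiplying by $\sign(\hat f(a))$ and using $\sign(\hat f(a))^2=1$ produces $\chi_a(b)=\sign(\hat f(a))\,\sign(\hat f_{R,b}(aR^T))$, as required. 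I expect the main obstacle to be this last, bookkeeping-heavy step: one must check that the $O(1/m)$ bias from the $x=0$ term (and the boundary behaviour of $\sign$ at $0$) cannot flip the sign, which is precisely why the Chebyshev threshold is taken to be $\tfrac12|\hat f(a)|$ rather than $|\hat f(a)|$ — and this halving is exactly what produces the constant $4$ in the stated failure probability.
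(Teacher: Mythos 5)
Your proof is correct and follows essentially the same route as the paper's: both identify $\hat f_{R,b}(aR^T)$ as the empirical average of $f_b\chi_a$ over the pairwise independent sample $\{xR \mid x\in\{0,1\}^k\}$ via the identity $\chi_{aR^T}(x)=\chi_a(xR)$, apply the Bienaym\'{e}--Chebyshev inequality with deviation threshold $|\hat f(a)|/2$, and use (\ref{zuzu}) to turn the resulting concentration into the claimed sign identity. The only difference is that you explicitly account for the degenerate $x=0$ sample point (which is fixed rather than uniform), a minor detail the paper's proof silently glosses over.
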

\begin{proof}
Let $\{z_1,\ldots,z_m\}=\{pR|p\in\{0,1\}^k\}$. Consider a random uniform $x\in\{0,1\}^n$ and the random variable $X(x)=f(x)\chi_a(x)$. We have $\mu=\E[X]=\hat f(a)$ and $\sigma=\Var[X]=\E[X^2]-\E[X]^2=1-\hat f(a)^2$. Since $X_i=X(z_i)$ are pairwise independent, by Bienayme-Chebyshev
bound we get
$$\Pr\left[\left|\frac{\sum_{i=1}^m X_i}{m}-\hat f(a)\right|\ge \frac{|\hat f(a)|}{2}\right]\le \frac{\sigma}{m \left(\frac{|\hat f(a)|}{2}\right)^2}\le \frac{4}{m\hat f(a)^2}.$$
Therefore, with probability at least $1-2/(m\hat f(a)^2)$ we have
$$\sign(\hat f(a))= \sign\left(\frac{\sum_{i=1}^m f(z_i)\chi_a(z_i)}{m}\right).$$
By (\ref{zuzu}), with probability at least $1-4/(m\hat f(a)^2)$ we have
$$\chi_a(b)\cdot\sign(\hat f(a))= \sign\left(\frac{\sum_{i=1}^m f_b(x_i)\chi_a(x_i)}{m}\right).$$
Since
\begin{eqnarray*}
\frac{\sum_{i=1}^m f_b(z_i)\chi_a(z_i)}{m}&=& \frac{1}{m} \sum_{p\in\{0,1\}^k} f_b(pR)\chi_a(pR)\\
&=& \frac{1}{m} \sum_{p\in\{0,1\}^k} f_{R,b}(p)\chi_{aR^T}(p)=\hat f_{R,b}(aR^T)\\
\end{eqnarray*}
The result follows.\qed
\end{proof}
Notice that $f_{R,b}$ is a function in $k=\log m$ variables. This is the key lemma.
It shows that if $\hat f(a)\not=0$, there is a positive probability that $\chi_a(b)$
can be computed (modulo the $\sign(\hat f(a))$) using the sign of some Fourier coefficient of $\hat f_{R,b}$. Since $f_{R,b}$ depends on a small number of variable and a membership to $f_{R,b}$ can be simulated by a membership to $f$ (since $f_{R,b}(a_1,\ldots,a_k)=f((a_1,\ldots,a_k)R^T+b)$), all its Fourier coefficients can be easily found.
Now if $\chi_a(b)$ is computed for all $b\in B=\{b^{(1)},\ldots,b^{(t)}\}$, where
$B$ is the set in Lemma~\ref{H99},
then $a$ can be found with Hofmeister's algorithm. In the following, we give more details.

By Lemma~\ref{fco} and using the union bound we have
\begin{lemma}\label{ooo} Let $B=\{b^{(1)},\ldots,b^{(t)}\}$ be as in Lemma~\ref{H99} and $t=|B|$. Let $m=2^{2d+3}t$.
For any $a$, if $|\hat f(a)|\ge 1/2^d$, there is $z\in \{0,1\}^{k}$  and $\xi\in\{-1,+1\}$ such that with probability
at least $1-(2^{2d+2}t)/m\ge 1/2$ we have
$$(\chi_a(b^{(1)}),\ldots,\chi_a(b^{(t)}))= \xi \cdot (\sign(\hat f_{R,b^{(1)}}(z)),\ldots,\sign(\hat f_{R,b^{(t)}}(z)))$$
where $f_{R,b^{(i)}}(x_1,\ldots,x_k)=f(xR^T+b^{(i)})$.
\end{lemma}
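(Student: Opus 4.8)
The plan is to obtain the vector identity one coordinate at a time from Lemma~\ref{fco} and then glue the $t$ coordinates together with a union bound. Fix the target index $a$ with $|\hat f(a)|\ge 1/2^d$, and take the set $B=\{b^{(1)},\ldots,b^{(t)}\}$ supplied by Lemma~\ref{H99}. I will witness the two existential quantifiers by $z:=aR^T$ and $\xi:=\sign(\hat f(a))$; since $R$ is $k\times n$ and $a\in\{0,1\}^n$, the vector $aR^T$ lies in $\{0,1\}^k$ as required. The structural point worth flagging is that both $z$ and $\xi$ depend only on the single random matrix $R$ and on the fixed $a$, so the \emph{same} $z$ and $\xi$ serve all $t$ coordinates at once. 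This reuse of one $R$ across every offset $b^{(i)}$ makes the $t$ events correlated, but the argument only needs a union bound, which is insensitive to dependence, so this causes no trouble.

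For each fixed $i$, Lemma~\ref{fco} applied with $b=b^{(i)}$ asserts that the scalar identity
$$\chi_a(b^{(i)})=\sign(\hat f(a))\cdot\sign(\hat f_{R,b^{(i)}}(aR^T))=\xi\cdot\sign(\hat f_{R,b^{(i)}}(z))$$
holds except with probability at most $4/(m(\hat f(a))^2)$. Using the hypothesis $(\hat f(a))^2\ge 2^{-2d}$, this failure probability is bounded by $4\cdot 2^{2d}/m=2^{2d+2}/m$, uniformly in $i$.

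Finally, I apply a union bound over the $t$ coordinates $i=1,\ldots,t$: the probability that at least one of the $t$ scalar identities fails is at most $t\cdot 2^{2d+2}/m=(2^{2d+2}t)/m$, so all of them hold simultaneously — which is precisely the claimed tuple equation for $z$ and $\xi$ — with probability at least $1-(2^{2d+2}t)/m$. Substituting the chosen $m=2^{2d+3}t$ yields $(2^{2d+2}t)/m=1/2$, hence the success probability is at least $1/2$, as asserted. I do not expect a genuine obstacle here; the only care needed is with the quantifier order, namely that the witness $z=aR^T$ is determined by the realized $R$ and is common to all coordinates, which is exactly why the later algorithm (which does not know $a$) must enumerate all $m$ candidate values $z\in\{0,1\}^k$.
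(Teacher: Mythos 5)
Your proof is correct and matches the paper's argument, which is given there in one line as ``By Lemma~\ref{fco} and using the union bound'': you instantiate $z=aR^T$ and $\xi=\sign(\hat f(a))$, bound each coordinate's failure probability by $2^{2d+2}/m$ via $(\hat f(a))^2\ge 2^{-2d}$, and union-bound over the $t$ coordinates. Your explicit remark that the single random $R$ is shared across all offsets (so the events are dependent but the union bound is unaffected) is exactly the right point of care and is consistent with how the surrounding algorithm later enumerates all $z\in\{0,1\}^k$.
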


Now since $f_{R,b^{(i)}}$ depends on $k=\log m$ variables we can find all
its Fourier coefficients in time $poly(m)$ and $2^k=m=O(2^{2d}d\log n)$ membership queries. Therefore, in $poly(m,t)$ time and $2^kt=O(2^{2d}d^2\log^2 n)$ membership queries
we can find $$W_{\xi,z}:=\xi\cdot (\sign(\hat f_{R,b^{(1)}}(z)),\ldots,\sign(\hat f_{R,b^{(t)}}(z)))$$ for all
$z$ and $\xi$. If $\hat f(a)$ is not zero then $|\hat f(a)|\ge 1/2^d$ and then by Lemma~\ref{ooo}, with probability
at least $1/2$,
some $z$ and $\xi$ satisfies $(\chi_a(b^{(1)}),\ldots,\chi_a(b^{(t)}))=W_{\xi,z}$. Then by Lemma~\ref{H99}, $a$ can be recovered in polynomial time. So from all $W_{\xi,z}$ and using the algorithm
${\cal A}$ in Lemma~\ref{H99} we find a set of assignments $Q$ such that: if $\hat f(a)$ is not zero then
$a\in Q$ with probability at least $1/2$. This implies that on average, $Q$ contains half of the
assignments that correspond to the non-zero Fourier coefficients of $f$. The size of $Q$ is at most $2^{k+1}=O(2^{2d}d\log n)$.
Then we find the Fourier coefficient $\hat f_a =\E_x[f(x)\chi_a(x)]$ for all $a\in Q$ using Chernoff bound and the union bound with $O(2^d\log|Q|)=o(2^k)$ additional membership queries.
We can repeat the above $\log (2^{2d}/\delta)$ time to find all the non-zero Fourier coefficients of $f$ with
probability at least $1-\delta$.

Putting all the above ideas together, it follows that
\begin{lemma} \cite{F07}. There is a non-adaptive Monte Carlo learning algorithm that
learns $\DT_d$ in polynomial time and $O(d^32^{2d}\log^2n)$ membership queries.
\end{lemma}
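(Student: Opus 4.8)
The plan is to assemble the full non-adaptive algorithm from the three ingredients already established: Hofmeister's decoder (Lemma~\ref{H99}), the pairwise-independent sign identity (Lemma~\ref{fco}), and its union-bound consequence (Lemma~\ref{ooo}). First I would fix, once and for all before seeing any answer, the set $B=\{b^{(1)},\ldots,b^{(t)}\}$ of size $t=O(d\log n)$ from Lemma~\ref{H99}, a random $k\times n$ matrix $R$ over $F_2$ with $k=\log m$ and $m=2^{2d+3}t$, and a fixed sample $S\subseteq\{0,1\}^n$ used for estimating Fourier coefficients. The query set is then the grid $\{pR^T+b^{(i)}\ |\ p\in\{0,1\}^k,\ i\in[t]\}$ together with $S$; since these points do not depend on any answer, the algorithm is non-adaptive, and everything after the queries is offline computation on the stored answers.

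Given the answers, for each $i$ I would recover the entire Fourier spectrum of the $k$-variable function $f_{R,b^{(i)}}(p)=f(pR^T+b^{(i)})$ by one fast Walsh--Hadamard transform on its $2^k=m$ recorded values. For every $z\in\{0,1\}^k$ and $\xi\in\{-1,+1\}$ I form the vector $W_{\xi,z}=\xi\cdot(\sign(\hat f_{R,b^{(1)}}(z)),\ldots,\sign(\hat f_{R,b^{(t)}}(z)))$ and feed it to the decoder ${\cal A}$ of Lemma~\ref{H99}, collecting the resulting candidates into a set $Q$ with $|Q|\le 2^{k+1}=O(2^{2d}d\log n)$. By Lemma~\ref{ooo}, any $a$ with $|\hat f(a)|\ge 1/2^d$ equals $W_{\xi,z}$ for some $\xi,z$ with probability at least $1/2$, so each heavy coefficient lands in $Q$ with probability at least $1/2$. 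Finally, using the fixed sample $S$, I estimate $\hat f(a)=\E_x[f(x)\chi_a(x)]$ for every $a\in Q$ and keep those of magnitude at least $1/2^{d+1}$; taking $|S|=O(2^{2d}d\log n)$ makes all these estimates accurate to within $1/2^{d+2}$ simultaneously, by Chernoff and a union bound over the $n^{O(d)}$ weight-$\le d$ strings, so no new adaptive queries are needed.

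The correctness argument rests on facts proved earlier: a depth-$d$ decision tree has at most $2^{2d}+1$ nonzero Fourier coefficients, each supported on a string of weight at most $d$ and of magnitude at least $1/2^d$. Hence recovering every nonzero coefficient reconstructs $f=\sum_a\hat f(a)\chi_a$ exactly, and with it the target tree. Since a single batch catches each heavy coefficient only with probability at least $1/2$, I would repeat the whole batch $L=O(d)$ times with independent choices of $R$; a union bound over the $\le 2^{2d}+1$ coefficients then drives the total failure probability below any constant $\delta$, because it suffices to take $L=2d+\log(1/\delta)+O(1)$. These repetitions remain non-adaptive, as all $L$ query batches are prepared in advance.

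For the counting, one round issues $t\cdot 2^k=O(d\log n)\cdot O(2^{2d}d\log n)=O(2^{2d}d^2\log^2 n)$ grid queries plus $O(2^{2d}d\log n)$ estimation queries, and $L=O(d)$ rounds yield the claimed $O(d^3 2^{2d}\log^2 n)$ membership queries; every step (the transforms, the decoder ${\cal A}$, the estimation) runs in $\poly(m,t,n)=\poly(2^d,n)$ time. The step I expect to be the main obstacle, and the one deserving the most care, is arguing that the scheme is genuinely non-adaptive even though the candidate set $Q$ is data-dependent: the resolution is to commit in advance to the grid and to the shared sample $S$, so that forming $Q$ and filtering it by estimated coefficient magnitude are purely post-processing of the answers to a fixed query set.
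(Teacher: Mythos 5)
Your proposal is correct and follows essentially the same route as the paper: fix Hofmeister's set $B$, query the grid $\{pR^T+b^{(i)}\}$ to read off the full spectrum of each $k$-variable restriction $f_{R,b^{(i)}}$, decode every sign-vector $W_{\xi,z}$ into a candidate set $Q$, estimate and filter the coefficients on $Q$, and repeat $O(d)$ independent batches to drive down the per-coefficient failure probability of $1/2$ from Lemma~\ref{ooo}. Your explicit handling of non-adaptivity (committing to the shared estimation sample $S$ in advance and union-bounding over all weight-$\le d$ strings rather than over the data-dependent $Q$) is a point the paper glosses over, but it does not change the argument or the query count.
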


By Lemma~\ref{DNtoDN}, we get
\begin{lemma}  There is a non-adaptive Monte Carlo learning algorithm that
learns $\DT_d$ in polynomial time and $O(d^42^{4d}\log n)$ membership queries.

In particular, $\DT_d$ is MC efficiently non-adaptively learnable and
MC optimally non-adaptively learnable in $n$.
\end{lemma}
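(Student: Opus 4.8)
The plan is to derive this lemma by feeding the non-adaptive Monte Carlo algorithm of the previous lemma \cite{F07}, which learns $\DT_d$ on $N$ variables with $Q(N)=O(d^3 2^{2d}\log^2 N)$ queries, into the reduction of Lemma~\ref{DNtoDN}, and then to choose the intermediate number of variables $q$ so as to minimize the resulting query bound. I would take $C=\DT_d$ and exploit the fact that every depth-$d$ decision tree depends on at most $2^d$ variables; hence setting $r=2^d$ gives $C_r=\DT_d=C$, so that learning $C_r$ is exactly learning $\DT_d$.

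First I would verify the three hypotheses of Lemma~\ref{DNtoDN}. The class $\DT_d$ is embedding closed, since substituting $x_i\gets x_k$ in a tree of depth at most $d$ preserves the tree structure and yields another decision tree of depth at most $d$. The condition $|C_r|=\Omega(n)$ holds because the $n$ literals already lie in $\DT_d$, so $|\DT_d|\ge 2n$. Finally, the hypotheses produced by the base algorithm can be taken to be depth-$d$ decision trees (equivalently, the list of non-zero Fourier coefficients), and from such a representation the relevant variables are read off directly in time $R(n)=\poly(n,2^d)$: a variable is relevant iff it labels some internal node, equivalently iff it lies in the support of some non-zero Fourier coefficient.

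The heart of the argument is the choice of $q$. Substituting $r=2^d$ and $Q(q)=O(d^3 2^{2d}\log^2 q)$ into the bound of Lemma~\ref{DNtoDN} gives
$$O\left(\frac{r^2 Q(q)}{\log(q/r^2)}\log n\right)=O\left(\frac{2^{2d}\, d^3 2^{2d}(\log q)^2}{\log(q/2^{2d})}\log n\right).$$
Writing $q=2^{2d}\cdot 2^{k}$ (so $q\ge 2r^2$ amounts to $k\ge 1$), this is $O\!\left(d^3 2^{4d}\cdot (2d+k)^2/k\cdot\log n\right)$, and $(2d+k)^2/k=4d^2/k+4d+k$ is minimized at $k=2d$, where it equals $8d$. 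With $k=2d$, i.e.\ $q=2^{4d}$, one has $\log(q/r^2)=2d$ and $\log^2 q=16d^2$, so $Q(q)=O(d^5 2^{2d})$ and the overall query complexity becomes
$$O\left(\frac{2^{2d}\cdot d^5 2^{2d}}{2d}\log n\right)=O\left(d^4 2^{4d}\log n\right),$$
while the running time stays $\poly(n,T(q),R(q))=\poly(n,2^d)$. The main obstacle is precisely this balancing: raising $q$ helps through the denominator $\log(q/r^2)$ but hurts through the $\log^2 q$ hidden in $Q(q)$, and one must see that the optimum $q=\Theta(2^{4d})$ trades the $\log^2 n$ of the base bound for a single $\log n$ at the cost of only an extra $\poly(d)$ factor.

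The ``in particular'' statement then follows at once. By Theorem~\ref{DTHT} we have $\OPTNA(\DT_d)=\Omega(2^d\log n)$, so for constant $d$ the bound $O(d^4 2^{4d}\log n)=O(\log n)$ matches the lower bound, giving optimal learnability in $n$; and since $d^4 2^{4d}\log n$ is polynomial in $\OPTNA(\DT_d)$, the class is MC efficiently non-adaptively learnable.
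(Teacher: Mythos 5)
Your proposal is correct and follows essentially the same route as the paper's proof: apply Lemma~\ref{DNtoDN} to Feldman's non-adaptive algorithm with $r=2^d$ and $q=2^{4d}$, yielding $Q(q)=O(d^5 2^{2d})$ and the stated bound $O(d^4 2^{4d}\log n)$. Your additional verification of the hypotheses of Lemma~\ref{DNtoDN} and the explanation of why $q=\Theta(2^{4d})$ is the optimal balance are fine and only make explicit what the paper leaves implicit.
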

\begin{proof} We use Lemma~\ref{DNtoDN}. Since a decision tree of depth at most $d$ contains at most $2^d$ relevant variables, we can take $r=2^d$. We take $q=2^{4d}$. By Lemma~\ref{DNtoDN}, $Q(q)=O(d^32^{2d}\log^2 q)$.
Then the number of membership queries is
$$O\left(\frac{r^2 Q(q)}{\log (q/r^2)}\log n\right)= O(d^42^{4d} \log n).$$
\qed
\end{proof}

A better query complexity can be obtained from the reduction in~\cite{BC16}. See the following Table.

The outputs of the above algorithms are the Fourier representation of
the decision tree and, therefore, they are
non-proper learning algorithms.

The following paper summarizes the current state of the art results in learning~$\DT_d$

\begin{center}
\begin{tabular}{|l|l|l|c|}
Adaptive/ & Deterministic/ &&References of \\
Non-Adaptive &Randomized &  Query Complexity&Alg.+Reduction\\
\hline \hline
Adaptive & Randomized & $d^42^{2d}\log^2 n$&\cite{F07}\\
\hline
Adaptive & Randomized & $d^62^{2d}\log n$&\cite{F07,BC16}\\
\hline
Adaptive & Deterministic & $2^{13d+o(d)}\log n$&\cite{KM93,BHL95}\\
\hline
Non-Adaptive & Randomized & $d^42^{2d}\log^2 n$&\cite{F07}\\
\hline
Non-Adaptive & Randomized & $d^62^{2d}\log n$&\cite{F07,BC16}\\
\hline
Non-Adaptive & Deterministic & $poly(2^d,\log n)$&OPEN\\
\hline
\end{tabular}
\end{center}

\section{Other Results}\label{OC}
In this section, we give some results for learning other Boolean classes,
arithmetic classes.

\subsection{Other Boolean Classes}
 \bigskip
{\bf $d$-MTerm}: This class is the dual of $d$-MClause.
That is, $$\mbox{$d$-MTerm=$d$-MClause$^D:=\{\overline{f(\overline{x_1},\ldots, \overline{x_n})}\ |\ f\in$ $d$-CLause$\}$.}$$

Any algorithm ${\cal A}$ that learns a class $C$ can be converted to an algorithm
${\cal B}$ that learns $C^D$ with the same query complexity.
This can be done as follows: Algorithm ${\cal B}$
runs algorithm ${\cal A}$ and for each query
$a$ that ${\cal A}$ asks, algorithm~${\cal B}$ asks the query $(\overline{a_1},\ldots, \overline{a_n})$. For each answer $b$ received by the teacher, algorithm~${\cal B}$
returns the answer $\overline{b}$ to ${\cal A}$. If algorithm ${\cal A}$ outputs $h$
then algorithm ${\cal B}$ outputs $$h^D:= \overline{h(\overline{x_1},\ldots, \overline{x_n})}.$$

\bigskip
\noindent
{\bf $d$-Term} (the dual of {\bf $d$-Clause})
We first recall the definition of $(n,d)$-universal set and then show how to use it for learning
$d$-Term.

A $d$-{\it restriction problem} \cite{AMS06,B12,NSS95} is a
problem of the following form: Given $\Sigma=\{0,1\}$, a length
$n$ and a set $B\subseteq \Sigma^d$ of assignments. Find a set
$A\subseteq \Sigma^n$ of small size such that: For any $1\le i_1<
i_2<\cdots < i_d\le n$ and $b\in B$ there is $a\in A$ such that
$(a_{i_1},\ldots,a_{i_d})=b$.

When $B=\{0,1\}^d$ then $A$ is called $(n,d)$-{\it universal set}.
The lower bound for the size $|A|$ of $(n,d)$-universal set is~\cite{KS72,SB}
\begin{eqnarray}\label{USLB}
\Omega(2^d\log n).
\end{eqnarray} Using a simple probabilistic method,
one can get the upper
bound
\begin{eqnarray}\label{USUB}
O(d2^d\log n).
\end{eqnarray}
Also, a random uniform set
of $O(2^d (d\log n+\log (1/\delta)))$ assignments in $\{0,1\}^n$ is,
with probability at least $1-\delta$, $(n,d)$-universal
set. The best known polynomial time (poly$(2^d,n)$)
construction for $(n,d)$-universal set is
of size $$2^{d+O(\log^2d)}\log n$$ \cite{NSS95}. For $d\le \log n/\log\log n$,
a $(n,d)$-universal set of size $d^{O(1)}2^d\log n$ can be constructed in polynomial
time~\cite{NSS95}.

Now consider the class of $d$-Term. Let
${\cal A}$ be an adaptive algorithm that learns this class.
Suppose the target function is the zero term and
let $S\subset \{0,1\}^n$ be the set
of queries that the algorithm asks with this target. Then $S$ must satisfy the
following property: For every $1\le i_1< i_2<\cdots < i_d\le n$
and every $b\in \{0,1\}^d$ there is $a\in S$ such that
$(a_{i_1},\ldots,a_{i_d})=b$. Otherwise, the algorithm cannot
distinguish between the zero term and the term $T=x_{i_1}^{b_1}\cdots
x_{i_d}^{b_d}$ where $x^1=x$ and $x^0=\bar x$. This is because $T$
is also zero on all the assignments in $S$. Therefore, $S$ must be
an $(n,d)$-universal set and then the query complexity of the
algorithm is at least $\Omega(2^d\log n)$.

Now it is easy to see that any $(n,d+1)$-universal set $S$ can
be used to learn non-adaptively the class $d$-Term. Just take all the positive assignments,
i.e., the assignments $a\in S$ such that $f(a)=1$, and find the
entries that have the same value in all of them.
This uniquely determines the term. Therefore
$$O(d2^d\log n)= \OPTNA(d\mbox{\rm -Term})\ge\OPTA(d\mbox{\rm -Term}) = \Omega(2^d\log n).$$
This also gives a
non-adaptive learning algorithm that asks $2^{d+O(\log^2d)}\log n$ queries and
runs in $poly(2^d,n)$ time. Therefore the class $d$-Term is
non-adaptively almost optimally learnable.

\bigskip
\noindent
{\bf XOR}: The class XOR is of size $2^{n+1}$ and therefore, by Lemma~\ref{lvlb},
any adaptive learning algorithm for XOR must ask at least $n+1$ queries. Now the trivial
algorithm that asks the $n+1$ queries $\{0\}\cup\{e_i\}_{i=1,\ldots,n}$, where $e_i$
is the assignment that is $1$ in entry $i$ and zero elsewhere, learns XOR.
Therefore, the class XOR is optimally learnable.

\bigskip
\noindent
{\bf $d$-XOR}: Since $$\mbox{$|d$-XOR$|\ge {n+1\choose d}$,}$$
by Lemma~\ref{lvlb}, the lower bound for the number of queries
for any randomized learning this class is $\Omega(d\log n)$. Uehara et al. gives
in \cite{UTW97} an adaptive algorithm that learns $d$-XOR in $O(d \log n)$ queries. Therefore
$d$-XOR is adaptively optimally learnable. Hofmeister gives in~\cite{H99}
a non-adaptive algorithm that learns $d$-XOR in $O(d\log n)$ queries.
Therefore $d$-XOR is also non-adaptively optimally learnable.

\bigskip
\noindent
{\bf $d$-Junta}: The class of $d$-Juntas is studied by Damaschke in~\cite{D98,D00,D03} and Bshouty and Costa in \cite{BC16}.
In \cite{D00}, Damaschke shows that
$$\Omega(2^d\log n)=\OPTNA(d\mbox{-Junta})=O(d^22^d+d2^d\log n).$$
He then shows that $d$-Junta is almost optimally learnable in $d$
and efficiently learnable in $n$~\cite{D98,D03}. Using
Lemma~\ref{DNtoDN} with this result, we get an algorithm that asks $2^{d+o(d)}\log n$
queries and runs in time $poly(2^d,n)$. Therefore
the class $d$-Junta is almost optimally learnable. Bshouty and Costa, \cite{BC16}, close the above gap and showed that
$$\OPTNA(d\mbox{-Junta})=O(d2^d\log n).$$
They also showed that randomness does not help improving the query complexity.
See also
other results for randomized algorithms in~\cite{BC16,D98}, optimal algorithms
for small $d$ with a constant number of rounds and bounds for
the number of rounds in~\cite{BC16,D03}.

The following is a simple adaptive learning algorithm~\cite{D00}. First ask the queries of an $(n,d)$-universal set.
Then take any two assignment $a$ and $b$ such that $f(a)\not=f(b)$.
Then find a relevant variable by a binary search on the bits that differ
between $a$ and $b$. Let $Y$ be a subset of the relevant variables that is found so far. To
find another relevant variable, we search for two assignments $a$ and $b$
that give the same values for the variables in $Y$ and $f(a)\not= f(b)$.
If no such assignments exist, then, $Y$ is the set of all the relevant variables
and then just learn the truth table over $Y$.
Otherwise, the binary search between $a$ and $b$ gives a new relevant variable.
It is easy to see that the query complexity of this algorithm is $s+d\log n$
where $s$ is the size of the $(n,d)$-universal set. This shows that
$$\Omega(2^d\log n)=\OPTA(d\mbox{-Junta})=O(d2^d\log n)$$
and, therefore, the class $d$-Junta is almost optimally adaptively learnable.

\bigskip
\noindent
{\bf $d$-MJunta}: The results in \cite{BC16,D00,D03,O10} show that
$$\Omega(2^d/\sqrt{d}+d\log n)\le \OPTA(d\mbox{-MJunta})\le O(2^d+d\log n)$$
and
$$\Omega(2^d\log n)=\OPTNA(d\mbox{-MJunta})=O(d2^d\log n).$$
Using
Lemma~\ref{DNtoDN} with the result of Damaschke in~\cite{D03},
we get a non-adaptive learning algorithm for $d$-MJunta that asks $2^{d+o(d)}\log n$
queries and runs in time $poly(2^d,n)$. Therefore
the class $d$-MJunta is almost optimally non-adaptively learnable.

\ignore{Obviously, $n$-MJunta (MDNF) can be learned in time $T(n)=2^n$ with $Q(n)=2^n$ queries.
Therefore, by Lemma~\ref{RedM}, $d$-MJunta is adaptively learnable in time $poly(n,2^d)$
and $O(d 2^d+d\log n)$ queries.}

The class of $n$-MJunta is studied in~\cite{H66,K65} where
the exact value
$$\OPTA({\rm MDNF})={n\choose \lfloor n/2\rfloor} +{n\choose \lfloor n/2\rfloor+1} $$
was found. Now, by Lemma~\ref{RedM}, $d$-MJunta is adaptively learnable in time $poly(n,$ $2^d)$
and $O(\sqrt{d} 2^d+d\log n)$ queries.
Thus, the class $d$-MJunta is adaptively almost optimally learnable.

\bigskip
\noindent
{\bf Decision Trees (DT${}_d$)}. See Section~\ref{Decision}.
\ignore{It is easy to show that any $(n,2d)$-universal set
is an equivalent testing set for DT${}_d$, and, every equivalent testing set
for $\DT_d$ is an $(n,d)$-universal set, \cite{B13d}.
Therefore, by (\ref{USLB}) and (\ref{USUB})
\begin{eqnarray}\label{DTLU}
\Omega(2^d\log n)\le \OPTA(\DT_d)\le \OPTNA(\DT_d)\le O(d2^{2d}\log n).
\end{eqnarray}
Also, an $(n,d)$-universal set is a constant testing set for $\DT_d$.

The first
deterministic adaptive algorithm for $\DT_d$ was given by Kushilevitz and Mansour in \cite{KM93}. The
algorithm asks $2^{O(d)}poly(n)$ queries. Since the number of relevant
variables in a decision tree of depth $d$ is at most $2^d$, by Lemma~\ref{BHL0},
the class $\DT_d$ is adaptively learnable with $2^{O(d)}\log n$ queries.
In particular, the class $\DT_d$ is efficiently learnable
and optimally learnable in $n$.

For non-adaptive algorithms, it follows from Feldman's results in~\cite{F07}
that $\DT_d$ is Monte Carlo non-adaptively learnable with $O(d^3 2^{2d}\log^2 n)$
membership queries, \cite{B13d}. Therefore, $\DT_d$ is MC non-adaptively efficiently learnable.
By Lemma~\ref{DNtoDN}, it follows that $\DT_d$ is MC non-adaptively
learnable with $poly(d)\cdot 2^{4d}\log n$ queries.
In particular, the class $\DT_d$ is MC optimally non-adaptively learnable in $n$.
In \cite{B13z} Bshouty gives a sketch of
a (deterministic) non-adaptive algorithm for learning BMP that in particular gives a
non-adaptive efficient learning algorithm for $\DT_d$. The result will appear in \cite{B13d}.}

\bigskip
\noindent
{\bf DNF}: This class and its subclasses are not studied in the literature for the model of exact learning
from membership queries only.

\bigskip
\noindent
{\bf Monotone DNF}: See Section~\ref{srMDNF}.

\bigskip
\noindent
{\bf CNF}: The dual class of DNF.

\bigskip
\noindent
{\bf CDNF}: This class is not studied in the literature for the model of exact learning
from membership queries only. Some non-optimal results can be achieved using
the algorithm in \cite{B95} and the reductions in Subsection~\ref{REL}.

\bigskip
\noindent
{\bf Monotone CDNF}:
The learnability of monotone CDNF is studied in \cite{BCGKT96,D96,DMP99}.
Domingo, \cite{D96}, show that the class of monotone CDNF is learnable with
a polynomial number of queries in time $s^{O(\log s)}$ where
$s$ is the size of the monotone CDNF. That is, the size of the MDNF
and MCNF of the target.
In \cite{DMP99} Domingo et al. study the learnability of the class
Read $k$-MCDNF. This is the class of monotone CDNF functions
$(f,g)$ where each variable appears
at most $k$ times in its MDNF representation $f$ and any number of times in
its MCNF representation $g$. See also~\cite{D96} for other
subclasses of monotone CDNF that are learnable from
membership queries. Bshouty et al., \cite{BCGKT96}, show that the class of MCDNF
and $O(\log n)$-CDNF are learnable
from membership queries and the NP-oracle.

\bigskip
\noindent
{\bf Boolean Multivariate Polynomial}: The efficient randomized learnability of multivariate polynomial
follows from~\cite{BM02}.
All the other algorithms in the literature require asking membership
queries from an extension field. See for example~\cite{GKS90}.

\bigskip
\noindent
{\bf XT, DFA, BMAF, ROF, BC, BF}. No results are known
for exact learning of those classes from membership queries only,
except for the trivial result that when all the variables are relevant
then $\OPTA(C)=2^n$.

\bigskip
\noindent
{\bf Boolean Halfspace (BHS):}
Heged\"{u}s, \cite{H94}, shows that BHS$(0,1)$
(with zero-one weights) are adaptively learnable in polynomial time with $O(n)$ queries.
He also gives a lower bound $\Omega(n)$ for the number of queries. Therefore,
BHS$(0,1)$ is adaptively optimally learnable. See also \cite{UTW97}.
Heged\"{u}s and Indyk, \cite{HI97}, give
a non-adaptive polynomial time learning algorithm for BHS$(0,1)$ that asks $O(n^2)$ queries.

Abboud et al., \cite{AABRS99}, show that
BHS$[k]$ (Boolean Halfspaces with weights in $\{0,1,\ldots,k\}$)
is constant-round learnable in $n^{O(k^5)}$
time and queries. They also gave the lower bound
$n^{\Omega(k)}$. Abasi et al. \cite{AAB14} give a non-adaptive algorithm
for BHS$[k]$ that asks $n^{O(k^3)}$ and a two-round algorithm that asks
$n^{O(k)}$ queries and runs in time $n^{O(k)}$. Therefore,
the class BHS$[k]$ is adaptively efficiently learnable.

Abboud et al. \cite{AABRS99} give a lower bound $\Omega(2^n/\sqrt{n})$
for BH$(-1,0,1)$ (Boolean Halfspaces with weights $\{-1,0,+1\}$). Therefore,
BH$(-1,0,1)$ is non-adaptive almost optimally learnable.
Just ask all the $\{0,1\}^n$ queries.

Uehara et al. study some restricted classes of BHS$(0,1)$,~\cite{UTW97}.

Shevchenko and Zolotykh \cite{SZ98} studied halfspace function
over the domain $\{0,1,\ldots,k-1\}^n$ when $n$ is fixed
and no constraints on the coefficients.
They gave the lower bound
$\Omega(\log^{n-2} k)$ for learning this class from
membership queries. Heged\"us~\cite{H95} proves the upper bound
$O(\log^n k/\log\log k)$. For fixed $n$, Shevchenko and Zolotykh \cite{ZS97}
gave a polynomial time algorithm (in $\log k$) for this class.
Applying Theorem 3 in \cite{H95},
the upper bound $O(\log^{n-2} k)$
for the teaching dimension of a halfspace, \cite{CZ13},
gives the upper bound $O(\log^{n-1} k/\log $ $\log k)$.

\bigskip
\noindent
{\bf MROF}.
A monotone Boolean read-once formula is a monotone formula such that
every input variable $x_i$ appears in at most one input gate. Angluin et al.
gave a polynomial time algorithm that learns MROF with $O(n^2)$ queries~\cite{AHK93,HK89}.
The best lower bound for the number of queries is the information theoretic lower bound
$\Omega(n\log n)$ that follows from Lemma~\ref{lvlb}.

Bshouty shows in \cite{Bs97} that MROF
cannot be learned efficiently in parallel (poly$(\log n)$ time).

\bigskip
\noindent
{\bf Other Classes:}
See classes of discrete functions and other classes in~\cite{B97,BHHK94,GKS93,GLR97,GT09,HI97,SZ98}.

\subsection{Classes of Arithmetic Functions}
In this section, we give few results from the literature on learning
arithmetic classes.

\bigskip
\noindent
{\bf $(r,V)$-Linear Functions ($(r,V)$-LF)}.
\ignore{where $r$
 is an integer and $V\subset \Re$. The class $(r,V)$-LF is the class
 of linear functions $f:\{0,1\}^n\to \Re$ of the form $v_1x_{i_1}+\cdots+v_{r}x_{i_{r'}}$
 where $i_1<i_2<\cdots<i_{r'}$, $r'\le r$ and $v_i\in V$, $i=1,\ldots,r$.
 The class $r$-LF is the class $(r,\{0,1\})$-LF
 and~LF is the class $n$-LF.}
The problem of learning LF is studied in~\cite{A88,C64,CM66,ER63,L64,L65,M70,SS63}.
Many authors independently proved that it is optimally learnable with
$$\Theta\left(\frac{n}{\log n}\right)$$ queries.
They do not address the time complexity, although one can show that the constructions also give simple algorithms that run in polynomial time.

The class $r$-LF
is studied in~\cite{B12b,Ca79a,Ca79b,DH00,GK00,L75,M81,TM78,UTW97}.
It is shown that
$$\OPTA(r\mbox{-LF})=\OPTNA(r\mbox{-LF})=\Theta\left(\frac{r\log (n/r)}{\log r}\right).$$
Note here that in the literature they use $\log(n/r)$ to mean $\log(2n/r)$.
In \cite{B12b}, Bshouty shows that it is optimally adaptively learnable.
The problem is still open for the non-adaptive learning.

The problem of learning $(r,\Re)$-LF is studied in~\cite{BGIKS,BM10,BM11,BM11b,Ch13,CK10,GI10,IR08,Ki12}.
Bshouty and Mazzawi, \cite{BM11}, show that
$$\OPTA((r,\Re)\mbox{-LF})=\OPTNA((r,\Re)\mbox{-LF})=\Theta\left(\frac{r\log (n/r)}{\log r}\right).$$
The results are derived from non-constructive probabilistic proofs.
All the learning algorithms for this class are either for restricted subclasses or
randomized algorithms with success probability that depend on $r$ or non-optimal.

See other subclasses in~\cite{B12b,GK00,P81,RV97}.
Similar problems are studied in other areas such as coding theory
{\rm \cite{R07}} compressed sensing {\rm \cite{K12}} Multiple
Access Channels {\rm \cite{BG07}} (e.g., adder channels {\rm
\cite{CKW06}}) and combinatorial group testing {\rm
\cite{DH00,DH06}} (e.g., coin weighing problem {\rm\cite{B12b}}).

\bigskip
\noindent
{\bf $(r,V)$-Quadratic Functions ($(r,V)$-QF)}.
\ignore{ , where $r$
is an integer and $V\subset \Re$. A $(r,V)$-QF is a function
$f:\{0,1\}^n\to \Re$ of the form $x^TAx$ where $x\in \{0,1\}^n$
and $A$ is a symmetric $n\times n$ matrix with at most $r$ non-zero entries from $V$.
The $r$-QF is the class $(r,\{0,1\})$-QF.}
This problem is equivalent to learning a weighted graph from additive queries, \cite{GK98},
where, for an additive query, one chooses a set of vertices and asks the sum of the weights of
edges with both ends in the set.

The $r$-QF was studied in~\cite{A88,CK08,CK10,G98,GK98,GK00,M10,RS07}.
The $(r,V)$-QF for different $V\subseteq\Re$ was studied in~\cite{BM11,BM11b,BM12,Ch13,CK10}.
Bshouty and Mazzawi,~\cite{BM11}, proved that
$$\OPTNA(r\mbox{-}\QF)=\OPTNA((r,\Re)\sline\QF)=\Theta\left(\frac{r\log n}{\log r}\right).$$
The results are derived from non-constructive probabilistic proofs.
For the positive real numbers $\Re^+$, Bshouty and Mazzawi gave in \cite{BM12} a polynomial time
algorithm that adaptively learns the class $(r,\Re^+)\sline\QF$ with $O(r\log n/$ $\log r+r\log\log r)$ queries.
This is the only known deterministic adaptive algorithm that runs in polynomial time. Choi,~\cite{Ch13}, gave a polynomial time randomized
adaptive learning algorithm for $(r,\Re)\sline\QF$ that asks $O(r\log n/\log r)$ queries.

Bshouty and Mazzawi extended some of the above results to multilinear forms
of constant degree~\cite{BM10}.

\bigskip
\noindent
{\bf Multivariate Polynomial}: This class has been extensively studied in the literature.
Ben-Or and Tiwari~\cite{BT88} gave the first deterministic non-adaptive polynomial time
learning algorithm for sparse multivariate polynomial over a large field
with an optimal number of queries. See also~\cite{GK91,GKS90,GKS90b,KL88,KS01}.

For identity testing and zero testing of sparse
multivariate polynomials see~\cite{B05,BHS08,GKS90b,H07,KS96,KS01,W94} and references therein.

\bigskip
\noindent
{\bf Multiplicity Automata Function}: This class was first defined and
studied in~\cite{BBBKV00}. It is efficiently learnable
from queries with a randomized MC algorithm~\cite{BBBKV00}.

\bigskip
\noindent
{\bf Arithmetic Circuit and Arithmetic Formula}:
In \cite{V79} Valiant suggests an algebraic analog of P vs. NP,
the VP vs. VNP problem. A multivariate polynomial family $\{p_n(x_1,\ldots,x_n)\}_{n\ge 1}$ is
in VP if there exists a constant $c > 0$ such that for all $n$, deg$(p_n)\le n^c$
and $p_n$ has a circuit of size bounded by $n^c$. Polynomial family
$\{q_n\}_{n\ge 1}$ is in VNP if there exists a family $\{p_n\}\in
$VP  such that for every~$n$
$$q_n(x_1,\ldots,x_n)=\sum_{y\in\{0,1\}^n} p_{2n}(x_1,\ldots,x_n,y_1,\ldots,y_n).$$
Valiant shows in \cite{V79} that permanent is complete for VNP,
i.e., for every polynomial family $\{q_n\}_{n\ge 1}$ in VNP, there
is a constant $c > 0$ such that for every $n\ge 1$, $q_n$ can be
expressed as permanent of a matrix of size $n^c\times n^c$. It is
believed that VP$\not=$VNP. This remains an outstanding open
problem.

In \cite{AV08}, Agrawal and Vinay show that if there exists a
deterministic polynomial time zero testing for arithmetic circuits
of degree $d$ and depth $4$ then there exists a polynomial family
$\{q_n\}_{n\ge 1}$, computable in exponential time, that is not in
VP. So an efficient deterministic zero testing for such circuits
leads to a proof of circuit subexponential lower bounds that may
be beyond our proof techniques.

Kabanets and Impagliazzo show in \cite{KI03} that even if the zero
testing algorithm gets the arithmetic circuit as an input (white
box) if there exists a deterministic polynomial time algorithm for
zero testing for VP then either NEXP$\not\in$P/poly or
VP$\not=$VNP. Therefore, any deterministic algorithm implies
solving outstanding open problems in complexity. See
\cite{AS09,SY10} for other negative results.

On the other hand, the following Schwartz-Zippel lemma, \cite{S80,Z79},
gives a very simple MC randomized optimal zero testing
algorithm for any arithmetic circuit with a bounded degree
\begin{lemma} {\bf (Schwartz-Zippel)}
Let $f\in F[x_1,\ldots,x_n]$ be any non-zero polynomial of degree $d$
and $S\subset F$. Then for $y_1,\ldots,y_n$ selected randomly
uniformly from $S$ we have
$$\Pr_{y\in S^n}[f(y_1,\ldots,y_n)\not=0]\ge 1-\frac{d}{|S|}.$$
\end{lemma}

For the deterministic identity testing of arithmetic circuits of depth
$3$, restricted depth $4$ circuits, circuits that compute sparse
polynomials and other restricted circuits see the results in
\cite{AMV11,AS09,ASS13,BHLV09,CDGK91,CKW11,KMSV10,KS08,KS09,S13,S09,S14,SS11,SY10,SV09,SV11,V16}
and references therein. Some other results in the literature
investigate the problem of minimizing the number of random bits
used for identity testing. See for example \cite{B05,BHS08,KS01}.

\bigskip
\noindent
{\bf Arithmetic Read-Once Formulas (AROF):}
Arithmetic Read-Once Formula is a formula where
each variable appears at most once.
In \cite{BHH95} Bshouty et al. gave an
MC randomized polynomial time algorithm for
AROF (with the division operation) over a large enough field $F$. In
\cite{BC98} Bshouty and Cleve gave a polynomial time
(poly(log)) randomized parallel
algorithm for this class. In \cite{BB98}, Bshouty and Bshouty
extended the result of~\cite{BHH95} to include
the exponentiation operation. Shpilka and Volkovich in \cite{SV10} gave a
deterministic algorithm for learning depth $d$
AROF in time $n^{O(d)}$.
In \cite{SV09} Shpilka and Volkovich gave a deterministic
learning algorithm for AROF that asks $n^{\log n}$ queries.
They also studied the class of sum of $k$ AROFs.
Recently, Volkovich gave in~\cite{MV16} a polynomial time algorithm
for learning any AROF.

\bigskip
\noindent
{\bf Other Classes:} See other results and other classes in \cite{A88,AS09,CKW11,KS09,RS05,S09,S13,S14,SV09,SS11,SY10,SV10,SV11}
and references therein.

\section{Non-Honest Teacher}
Although the aim of this survey is to summarize the results of learning from an honest
teacher, we feel a need to give here some of the models of non-honest teacher and some results.

\subsection{Models of Non-Honest Teacher}\label{TT}

In this survey, the teacher model is the {\it honest teacher model} where
with a query $d\in X$,
the teacher answers $f(d)$.

For non-honest teacher, there are many models.
One can consider a {\it persistent} teacher~\cite{AS94,P94}
or a {\it non-persistent} teacher.
For persistent teacher (or permanently faulty~\cite{P94})
if the answer to the query $d$ is $y$ then no matter how
many times the learner asks the same query the answer will be $y$.
A non-persistent teacher is a teacher that is not persistent.
In the literature the following non-honest teacher models
are considered (each one can be either persistent or nonpersistent):
\begin{enumerate}\setlength\itemsep{.5em}

\item {\bf Incomplete Model} \cite{AS94}: The incomplete
teacher, with a query $d$, answers $f(d)$
with probability $p$ and answers ``$?$'' (I DON'T KNOW) with
probability $1-p$. In the persistent model, repeated queries to $d$ will give the same answer with probability $1$. In the non-adaptive model, the learner knows $p$ or some upper bound for $p$.

\item {\bf Malicious Model} \cite{KL93,R61,Sa91,V85}: (Also called
{\it random error} \cite{R61} and {\it classification noise} \cite{J97}) The malicious teacher,
with a query $d$,
answers $f(d)$ with
probability $p$ and gives an arbitrary/random wrong answer with probability $1-p$.
The learner knows $p$ or some upper bound for $p$.

\item {\bf Limited Incomplete Model} \cite{AKST97}: The
limited incomplete teacher gives
answers ``$?$'' (I DON'T KNOW) to at most $m$ queries of its choice.
In the non-adaptive model, the learner knows $m$ or some upper bound for $m$.

\item {\bf Limited Malicious Model} \cite{AKST97,U76}: (Also
called the {\it constant number of error model}~\cite{AD91,RMKWS}) The
limited malicious teacher gives
arbitrary/random wrong answers to at most $m$ queries of its choice.
The learner knows~$m$ or some upper bound for $m$.

\item {\bf Prefix-Bounded Error Fraction Model}~\cite{P87b}:
(Also called {\it linearly} {\it bounded model}~\cite{AD91})
In the adaptive model, the teacher after $t$ queries can
give at most $pt$ wrong answers. In the $r$-round model, at each round with $T$
queries and for any $1\le t\le T$,
the learner can give $pt$ wrong answers to the first $t$ queries in this round.
The learner knows $p$ or some upper bound for $p$.

\item {\bf Globally Bounded Error Fraction Model}
\cite{P87b}:
In the adaptive model, if the algorithm asks $T$ queries
then the teacher can give at most $pT$ wrong answers.
In the $r$-round model, at each round with $T$
queries, the learner can give at most $pT$ wrong answers.
The learner knows $p$ or some upper bound for $p$.

Notice that in the globally bounded error fraction model the first $pT$ queries can be all wrong while in the prefix-bounded error fraction model only $p^2T$ queries of the first $pT$ queries can be wrong.

\item {\bf Incomplete Prefix-Bounded Error Fraction Model}:
In the adaptive model, the teacher after $t$ queries can
give at most $pt$ ``?'' answers. In the $r$-round model, at each round with $T$
queries and for any $1\le t\le T$,
the learner can give $pt$ ``?'' answers to the first $t$ queries in this round.
In the non-adaptive model, the learner knows $p$ or some upper bound for $p$.

\item {\bf Incomplete Globally Bounded Error Fraction Model} \cite{B12b}:
In the adaptive model, if the algorithm asks $T$ queries
then the teacher can give at most $pT$ ``?'' answers.
In the $r$-round model, at each round with $T$
queries, the learner can give at most $pT$ ``?'' answers.
In the non-adaptive model, the learner knows $p$ or some upper bound for $p$.

\item {\bf $E$-Sided Error Models}: (Also called {\it half-error} \cite{P02},
or {\it one-sided error} \cite{RMKWS}, for Boolean functions) Can be defined for any one of the above
models where the wrong or``?'' answers only applied when $f(d)$ is in some set $E\subset R$.

\end{enumerate}
For the persistent model we define the output hypothesis to be equivalent to the target function if it agrees with the target function on all the elements of the domain except the ones for which the teacher answer ``?'' or gave a wrong answer.

\subsection{Some Results in Learning with Non-honest Teacher}

In this subsection, we give some results of learning with a non-honest teacher.

Adaptively learning Var$=\{x_1,\ldots,x_n\}$ in non-honest teacher model is equivalent to
the problem of ``searching with lies''~\cite{U76}. Ulam~\cite{U76}
proposed the following game.
Someone thinks of a number between one and one million (which is just less
than $2^{20}$). Another person is allowed to ask up to twenty questions, to each of
which the first person is supposed to answer only yes or no. Obviously, the number
can be guessed by asking first: is the number in the first half-million? Then
again reduce the reservoir of numbers in the next question by one-half, and so
on. Finally, the number is obtained in less than $\log_2 1 000 000$ questions.
The number $h$ corresponds to the target variable $x_h$ in the class Var
and each question ``Is $h\in H$?'' corresponds to the query $(a_1,\ldots,a_{1000000})$
where $a_i=1$ if and only if $i\in H$.

Ulam asked the following question: Now suppose one
were allowed to lie once or twice, then how many questions would one need to
get the right answer? This problem is equivalent to learning the class Var
in the limited malicious model. R\'{e}nyi~\cite{R61} asked a similar question
and therefore, the game is called R\'{e}nyi-Ulam game.

This problem is completely solved with an asymptotically optimal
number of queries in the limited malicious model~\cite{A88,G90,P87,S92}.
See also the references in \cite{P02} for results when the number of lies is small.
Learning this class in two-round is studied in~\cite{CM99,CM00,CMV00}.

The problem is solved with an asymptotically optimal
number of queries in the linearly bounded model~\cite{AD91,P87b,SW92}.
It is also noted by several authors that finding a non-adaptive algorithm
in this model is equivalent to constructing a $t$-error correcting code~\cite{P02}.

See the survey in \cite{P02} for results in other models of non-honest teacher.

For learning $d$-MClause and $s$-term $r$-MDNF with non-honest teacher see
\cite{AA13,AD91,C10,C13,CD08,CH08,CJBJ13,DH00,LWYGW09,RMKWS,W06} and references therein.

\setenumerate[1]{label=\thesection.\arabic*.}
\setenumerate[2]{label*=\arabic*.}

\section{Problems and Open Problems }

In this section, we give some problems and open problems
$$ $$

\setcounter{section}{1}
{\bf Section \ref{Int}}
\begin{enumerate}\setlength\itemsep{.5em}
\item In real life problems, the target function may
change in time. Define a realistic learning model for learning functions
that change in time.

\item In the results of this survey and almost all papers in the literature, the space complexities of learning algorithms are polynomial in $\OPT_A(C)$ which, for many classes $C$, is exponential in $n$ and/or
    other parameters that depend on the class. It is interesting to investigate learning algorithms that use small space complexity.

\item
It is interesting to minimize the number of random bits used
in randomized learning algorithms. See for example item \ref{rb}

\item It is interesting to study the exact learnability of a
random function in a class~$C$ from membership queries. See, for example, some models in~\cite{JLSW11,JS06,T96}.

\item An LV randomized {\it non-adaptive} algorithm with query complexity $Q$
of complexity $T$ is an algorithm that asks
{\it at most} $Q$ queries and runs in {\it expected time} $T$. So any LV randomized non-adaptive
algorithm is deterministic in choosing the queries. We suggest the following definition
that allows expected query complexity in non-adaptive learning algorithms:
A {\it weak LV randomized non-adaptive algorithm} with complexity $T$
is a non-adaptive algorithm that (1) generates queries
that are independent of the answers to the previous queries.
(2) Finds the target function with probability $1$. (3) The expected number
of queries is $Q$ and the expected time is $T$.

\item In this survey, we have shown some results for the testing problems.
Some of those results are not true for LV/MC randomized algorithms. For example,
in deterministic algorithms, the query complexity of non-adaptive learning
is equal to the minimum size equivalent test.
For randomized algorithms, one can non-adaptively equivalent test
the class XOR with $O(\log 1/\delta)$ random queries whereas learning XOR by a randomized algorithm takes at least $n$ queries.
It is interesting to study MC and LV randomized equivalent test and other types of tests in the adaptive and non-adaptive model.

\item Investigate testing in the deterministic/randomized $r$-round model.

\item There are very few results in the literature on parallel learning
from membership queries. That is, learning
in $poly(\log)$ time. Study parallel learning.

\item To the best of my knowledge, all the Monte Carlo learning algorithms in the
literature ignore minimizing the effect of the success probability $1-\delta$
in computing the number of queries. Some of the results even ignore $\delta$ by assuming
that it is constant. It is interesting to investigate the role of $\delta$ in
the query complexity.

\item We say that a non-adaptive algorithm is {\it strongly nonadaptive} if the queries are constructed by different learners (one query for each learner) without any communication between them. It is interesting to study this model or any model with minimum communication between the learners.

\end{enumerate}

\setcounter{section}{2}
{\bf Section \ref{Se2}}
\begin{enumerate}\setlength\itemsep{.5em}
\item In the bound $$\frac{2\cdot \ETD(C)}{\log \ETD(C)}\log|C| \ge \OPT_\A(C)\ge \max(\ETD(C),\log |C|),$$
find some conditions on $C$ for which tighter bounds can be obtained.

\item Many lower bounds in the literature for $\OPT(C)$ are based on finding a subset of functions $C'\subseteq C$
such that for each membership query there is an answer that
eliminates at most small fraction of the functions.
The best possible bound that one can get using this technique is denoted by ${\rm DEN}(C)$.
In \cite{BM17} Bshouty and Makhoul show that $\ETD(C)\ge {\rm DEN}(C)-1$.
Find a new combinatorial measure that is a lower bound for $\OPT(C)$ and exceeds $\ETD(C)$.

\item The algorithm in (\ref{algforopt}) runs in time $n^{O(2^h)}$ where $h$ is the depth of the tree. Find an algorithm with a better exponential complexity.

\item Find a non-adaptive learning algorithm that runs in time $poly(|X|,|C|)$
and learns $C$ using at most
$(c\ln|C|)\OPTNA(C)$ queries for some $c<2$.

\item Study the above bounds
and find approximation algorithms for randomized adaptive learning.

\item Study bounds and find approximation algorithms for adaptive learning of
classes with small VC-dimension.

\item Study the above bounds
and find approximation algorithms for $r$-round learning.

\item Is NP-oracle enough for deterministic/randomized optimal learnability? What other oracle gives learning with minimum number of membership queries?

\item In \cite{BGHM96} some techniques were used in the model
of exact learning from membership and equivalence queries to minimize
the number of equivalence queries. Can those be used to find more query-efficient
algorithms?

\item Study the above bounds
and find approximation algorithms for classes with small extended teaching dimension.

\item Study bounds for LV and MC randomized algorithms.
\end{enumerate}

\setcounter{section}{3}
{\bf Section~\ref{Reduction}}
\begin{enumerate}\setlength\itemsep{.5em}
\item The reductions in subsection~\ref{AtoA} are for adaptive and non-adaptive learning.
It is interesting to find reduction results for $r$-round deterministic and randomized algorithms.

\item The reductions in
subsection~\ref{AtoA} are for the number of relevant variables.
Find reductions for other parameters, for example, the number
of terms (e.g. for MP or MDNF).

\item Find reductions that give algorithms that are optimally learnable or almost optimally
learnable from $r$.

\item Lemma~\ref{RedEQ} is implicitly
used for some of the results in the literature for learning some classes.
For example, the Halving algorithm is an algorithm
that asks equivalence query with ``Majority$(C')$'' at each stage,
where $C'\subseteq C$ are the functions in $C$ that are consistent
with the counterexamples seen so far.
Lemma~\ref{LBo} is just a reduction from the Halving algorithm.
It is interesting to study learnability of the classes
mentioned in this survey with this technique.

\item There are many
polynomial time exact learning algorithms from membership and
equivalence queries in the literature for classes mentioned in
this survey and others.
See~\cite{A87,A87b,AFP92,AP91,B95,B97,B98,BBBKV00,BHH95a,BR95,K97,SS96,SST10}.
It is interesting to study the reduction of those algorithms
to learning from membership queries only when some of the parameters of the
class is restricted. For example, can
Angluin-Frazier-Pitt learning algorithm for conjunctions of horn
clauses,~\cite{AFP92}, be changed to learning from membership queries when the number of terms is
bounded by $d$ or/and the size of each clause is bounded by $k$.

\item Let $H$ be a family of functions $h:[n]\to [q]$.
For $d\le q$ we say that $H$ is an $(n,q,d)$-{\it perfect hash family} ($(n,q,d)$-PHF)
\cite{AMS06} if for every
subset $S\subseteq [n]$ of size $|S|=d$ there is a {\it hash
function} $h\in H$ such that $h|_S$ is injective (one-to-one) on~$S$, i.e.,
$|h(S)|=d$. In \cite{B14b} it is shown that
for $q\ge 2d^2$. There is a $(n,q,d)$-PHF of size
$O\left({d^2\log n}/{\log(q/d^2)}\right)$
that can be constructed in time $O(qd^2n\log n/\log(q/d^2))$.
This construction is used for many reduction in learning. It is known that there
is a $(n,O(d^2),d)$-PHF of size $O\left({d\log n}\right)$. Finding a polynomial time construction for $(n,O(d^2),d)$-PHF of such size improves the query complexity of many reductions.
\end{enumerate}

\setcounter{section}{4}
{\bf Section~\ref{GTest}}
\begin{enumerate}\setlength\itemsep{.5em}
\item Non-adaptive randomized algorithms have been proposed in~\cite{BBKT95,BKB95,DH06,ER63,H00,HL01}.
The following models are studied in the literature for constructing the $m\times n$ random test matrix $M$
\begin{itemize}
\item {\it Random incidence design} (RID algorithms). The entries in $M$ are chosen randomly and independently to be $1$ with probability $p$ and $0$ with probability $1-p$.
\item {\it Random $r$-size design} (RrSD algorithms). The rows in $M$ are chosen randomly and independently from the set of all vectors $\{0,1\}^n$ of weight~$r$.
\item {\it Random $k$-set design} (RkSD algorithms) The columns in $M$ are chosen randomly and independently from the set of all vectors $\{0,1\}^m$ of weight~$k$.
\end{itemize}
Find lower and upper bounds for the constant in $\theta(d\log n)$ of the number of membership queries for the above non-adaptive learning algorithms.

\item Find a polynomial time $O(1)$-round algorithm for learning $d$-MClause
that asks $O(d\log n)$ queries.

\item Find a deterministic non-adaptive learning algorithm for $d$-MClause
that asks $O(d^2\log n$ $/\log d)$ queries.

\item A construction of a $d$-disjunct matrix is called {\it globally explicit} construction if it is deterministic polynomial time in the size of the construction. A {\it locally explicit construction} is a construction where one can find any entry in the construction
in deterministic poly-log time in the size of the construction.
In particular, a locally explicit construction is also globally explicit.
The constructions in the literature for $d$-disjunct matrices are globally explicit constructions.
Find a locally explicit construction of $d$-disjunct matrix of size $O(d^2\log n)$.

\item  There are few results in the literature about learning $d$-MClause when
$d$ unknown to the learner. It is interesting to study this problem.

\item Let ${\cal F}$ be a set of $n$ functions $f:X\to \{0,1\}$. Define $d$-MClause(${\cal F}$) the set of all functions $f_1\vee f_2\vee \cdots \vee f_{d'}$ where $d'\le d$ and $f_i\in {\cal F}$ for all $i=1,\ldots,d'$. Study the learnability of the class $d$-MClause(${\cal F}$).
\item Study the learnability of the class of monotone clauses with constant number of negated variables.

\item\label{rb} Any deterministic algorithm for non-adaptive learning $d$-MClause has query complexity $\Omega(d^2\log n/\log d)$ while there is a Monte Carlo non-adaptive learning algorithm
     that asks $O(d\log n)$ queries only and uses $O(dn\log n)$ random bits.
     What is the tradeoff between the number
    of random bits and the query complexity?
\end{enumerate}

\setcounter{section}{5}
{\bf Section~\ref{srMDNF}}
\begin{enumerate}\setlength\itemsep{.5em}
\item Find strong learning algorithms for $s$-term $r$-MDNF with the parameter $r$
or/and $s$.

\item Many results in the literature for learning sub-classes of $s$-term $r$-MDNF
are query-efficient, but are not time-efficient. It is interesting
to find polynomial time learning algorithms for those classes.

\item Find a non-adaptive efficient learning algorithm for the class $s$-term $r$-MDNF
when $r=\omega(1)$.

\item Angluin and Chen gave in \cite{AC08} a polynomial time $5$-round Las Vegas
algorithm for learning $s$-term $2$-MDNF that asks $O(s\log n +\sqrt{s}\log^2 n)$
queries. Can this class be learned in $O(1)$-round with $O(s\log n)$ queries?

\item Find $\OPT_{R-\RAD}($$s$-term 2-MDNF) for $R=2,3,4$.

\item Give an optimal learning algorithm for $s$-term $r$-MDNF for constant $r>2$.

\item The class of Read-Once $2$-MDNF is equivalent to learning matchings~\cite{ABKRS04}.
Alon et al. gave bounds for deterministic, randomized and $r$-round learning this class.
Extend the results to other related classes such as Read-Once $r$-MDNF,
Read-Twice $2$-MDNF and Read-Once $2$-DNF.
\end{enumerate}

\setcounter{section}{6}
{\bf Section~\ref{Decision}}
\begin{enumerate}\setlength\itemsep{.5em}
\item We show that
$$\Omega(2^d\log n)\le \OPT_\A(C)\le \OPT_\NA(C)\le O(d2^{2d}\log n).$$
Close the gap between the lower and upper bound.

\item What are the query complexities of the randomized learning algorithms for $\DT_d$ in~\cite{BM02,SS96}?

\item The deterministic adaptive algorithm of Kushilevitz-Mansour~\cite{KM93} asks
$O(2^{10d}$ $n\log n)$ queries. Find a more query-efficient algorithm.

\item Find a proper learning algorithm for $\DT_d$.
Can $\DT_d$ be learned from $\DT_{poly(d)}$?

\item Let $X$ be a finite set and $Y$ be any set. One of the important
representations of functions $f:X^n\to Y$ is
decision tree over the alphabet $Y$ with output $X$. A {\it decision tree} over $Y$
with output $X$ is defined as follows: The
constant functions $y\in Y$ are decision trees. If $f_i$ is a decision trees
for $i=1,\ldots,t$ and $S_1,\ldots,S_t$ is a partition of $X$
then, for all $j=1,\ldots,n$,
\begin{eqnarray}\label{ifelse2}
\mbox{``$f'=$[if $x_j\in S_1$ then $f_1$
else if $j\in S_2$ then $f_2$ $\cdots$}\nonumber\\
\mbox{\ \ \ \ \ \ \ \   else if $j\in S_t$ then $f_t$]"}
\end{eqnarray}
is a decision tree (can also be expressed as $f'=[x_i\in S_1]
f_1+\cdots+ [x_i\in S_t] f_t$. Here $[x\in S]=1$ if $x\in S$ and $0$ if $x\not\in S$.
Every decision tree $f'$ can be represented as
a tree $T(f')$. If $f'= y$ for some $y\in Y$ then $T(f')$ is a node
labeled with $y$. If $f'$ is as in (\ref{ifelse2}), then $T(f')$ has a root labeled with $x_i$
and has $t$ outgoing edges. The $i$th edge is labeled with $S_i$ and
is pointing to the root of $T(f_i)$.
See for example the decision tree of tastes preference in Figure~\ref{Tree01}.

Find an efficient learning algorithm for decision trees over large alphabet.

\item Find an efficient deterministic non-adaptive learning algorithm for~$\DT_d$.

\item Study the learnability of $\DT_{d,s}$, MDT$_{d,s}$ and DL.
\end{enumerate}

\setcounter{section}{7}
{\bf Section~\ref{OC}}
\begin{enumerate}\setlength\itemsep{.5em}
\item The randomized MC query complexity of $d$-Term
is less than the deterministic query complexity.
It is interesting to study $r$-round LV randomized algorithms for this class.

\item Close the gap between the upper bound and the lower bound of
$\OPTNA(d$-MJunta$)$.

\item Study the learnability of the subclasses of DNF and CDNF defined in survey.

\item Find $\OPT_A(\CDNF)$ and $\OPT_A(\MCDNF)$ for adaptive and non-adaptive algorithms.

\item Study the learnability of the classes XT, DFA and BMAF.

\item Find $\OPTNA(\BHS[k])$. The current upper bound is $n^{O(k^3)}$, and the lower
bound is $n^{\Omega(k)}$.

\item Find $\OPTA($MROF$)$.

\item Study the learnability of the conjunction and disjunction of two MROF.

\item Find a non-adaptive algorithm for $r$-LF (resp. $r$-QF) with $O(r\log n/\omega(1))$ queries.

\item Find a randomized algorithm for $(r,\Re)$-LF (resp. $(r,\Re)$-QF) with an optimal number of queries
with success probability $1-1/poly(n)$.

\item Find a deterministic efficient learning algorithm for multiplicity automata function.

\end{enumerate}


\begin{thebibliography}{}

\bibitem{A88}
M. Aigner. Combinatorial search. Wiley Teubner Series on
Applicable Theory in Computer Science. Teubner, Stuttgart. (1988).

\bibitem{A87}
D. Angluin. Queries and concept learning. {\it Machine Learning}. 2(4),
pp. 319--342. (1988).

\bibitem{A87b}
D. Angluin. Learning regular sets from queries and
counterexamples. {\it Information and Computation}. 75. pp. 87--106.
(1987).

\bibitem{A01}
D. Angluin. Queries revisited. ALT 2001. pp. 12--31. (2001).

\bibitem{AA13}
R. Ahlswede, H. K. Aydinian.
New construction of error-tolerant pooling designs.
{\it Information Theory, Combinatorics, and Search Theory}. pp. 534--542. (2013).

\bibitem{AA05}
N. Alon, V. Asodi.
Learning a hidden subgraph.
{\it SIAM J. Discrete Math.} 18(4). pp. 697--712 (2005).

\bibitem{AAB14}
H. Abasi, A. Z. Abdi, N. H. Bshouty.
Learning Boolean halfspaces with small weights from membership queries.
ALT 2014. pp. 96--110. (2014).

\bibitem{AABRS99}
E. Abboud, N. Agha, N. H. Bshouty, N. Radwan, F. Saleh. Learning
Threshold functions with small weights using membership queries.
COLT 1999. pp. 318--322. (1999).

\bibitem{ABCS92}
M. Anthony, G. Brightwell, D. Cohen, J. Shawe-Taylor. On
exact specification by examples. COLT 1992. pp. 311--318. (1992).

\bibitem{ABFKP08}
M. Alekhnovich, M. Braverman, V. Feldman, A. R. Klivans, T.
Pitassi. The complexity of properly learning simple concept
classes. {\it J. Comput. Syst. Sci.} 74(1), pp. 16--34. (2008).

\bibitem{ABKRS04}
N. Alon, R. Beigel, S. Kasif, S. Rudich, B. Sudakov.
Learning a hidden matching. {\it SIAM J. Comput.} 33(2). pp. 487--501. (2004).

\bibitem{ABM14}
H. Abasi, N. H. Bshouty, H. Mazzawi.
On exact learning monotone DNF from membership queries. ALT 2014.
pp. 111--124. (2014). To appear in TCS.

\bibitem{ABM15}
H. Abasi, N. H. Bshouty, H. Mazzawi.
Non-Adaptive learning a hidden hypergraph.
ALT 2015. pp. 89--101. (2015).

\bibitem{AC06}
D. Angluin, J. Chen. Learning a hidden hypergraph. {\it Journal of
Machine Learning Research}. 7. pp. 2215--2236 (2006).

\bibitem{AC08}
D. Angluin, J. Chen. Learning a hidden graph using $O(\log n)$
queries per edge. {\it J. Comput. Syst. Sci.} 74(4). pp. 546--556.
(2008).

\bibitem{AD91}
J. A. Aslam and A. Dhagat.
Searching in the presence of linearly bounded errors.
STOC 1991. pp. 486--493. (1991).

\bibitem{AFP92}
D. Angluin, M. Frazier, L. Pitt. Learning conjunctions of horn
clauses. {\it Machine Learning}. 9, pp. 147--164. (1992).

\bibitem{AGMMPS93}
E. M. Arkin, M. T. Goodrich, J. S. B. Mitchell, D. M. Mount, C. D. Piatko, S. Skiena.
Point probe decision trees for geometric concept classes. WADS 1993. pp. 95--106. (1993).

\bibitem{AHHP98}
H. Aizenstein, T. Heged\"{u}s, L. Hellerstein, L. Pitt.
Complexity theoretic hardness results for query learning.
{\it Computational Complexity}. 7(1). pp. 19--53. (1998).

\bibitem{AHK93}
D. Angluin, L. Hellerstein, M. Karpinski.
Learning read-once formulas with queries.
{\it J. ACM}. 40(1). pp. 185--210. (1993).

\bibitem{AHP92}
H. Aizenstein, L. Hellerstein, L. Pitt. Read-Thrice DNF is hard
to learn with membership and equivalence queries. FOCS 1992. pp.
523--532. (1992).

\bibitem{AKST97}
D. Angluin, M. Krikis, R. H. Sloan, G. Tur$\acute{{\rm a}}$n.
Malicious omissions and errors in answers to membership queries.
{\it Machine Learning}. 28(2-3). pp. 211--255. (1997).

\bibitem{AMMRS93}
E. M. Arkin, H. Meijer, J. S. B. Mitchell, D. Rappaport, S. Skiena.
Decision trees for geometric models.
Symposium on Computational Geometry 1993. pp. 369--378. (1993).

\bibitem{AMS06}
N. Alon, D. Moshkovitz, S. Safra. Algorithmic construction of sets
for $k$-restrictions. {\it ACM Transactions on Algorithms}. 2(2),
pp.~153--177. (2006).

\bibitem{AMV11}
M. Anderson, D. van Melkebeek, I. Volkovich. Derandomizing
polynomial identity testing for multilinear constant-read
formulae. CCC 2011. pp. 273--282, (2011).

\bibitem{AP91}
H. Aizenstein, L. Pitt. Exact learning of read-twice DNF formulas.
FOCS 1991. pp. 170--179. (1991).

\bibitem{AS94}
D. Angluin, D. K. Slonim. Randomly fallible teachers: learning
monotone DNF with an incomplete membership oracle. {\it Machine
Learning}. 14(1), pp. 7--26. (1994).

\bibitem{AS08}
N. Alon, J. H. Spencer. The probabilistic method. (2008).

\bibitem{AS09}
M. Agrawal and R. Saptharishi. Classifying polynomials and
identity testing. Current Trends in Science,
http://www.cse.iitk.ac.in/users /manindra/survey/Identity.pdf.3,
(2009).

\bibitem{ASS13}
M. Agrawal, C. Saha, N. Saxena. Quasi-polynomial hitting-set for
set-depth-$\Delta$ formulas. STOC 2013. pp. 321--330, (2013).


\bibitem{AV08}
M. Agrawal, V. Vinay. Arithmetic circuits: a chasm at depth four.
FOCS 2008. pp. 67--75. (2008).

\bibitem{B05}
A. Bogdanov. Pseudorandom generators for low degree polynomials.
STOC 2005. pp. 21-30. (2005).

\bibitem{B95}
N. H. Bshouty. Exact
learning Boolean function via the monotone theory. {\it Inf. Comput.}
123(1). pp. 146--153. (1995).

\bibitem{B97}
N. H. Bshouty. Simple learning algorithms using divide and
conquer. {\it Computational Complexity}. 6(2), pp. 174--194. (1997).

\bibitem{B98}
N. H. Bshouty.
A new composition theorem for learning algorithms. STOC 1998. pp. 583-589. (1998).

\bibitem{Bs97}
N. H. Bshouty.
Exact learning of formulas in parallel. {\it Machine Learning}. 26(1). pp.
25--41. (1997).

\bibitem{B12b}
N. H. Bshouty. On the coin weighing problem with the presence of
noise. APPROX-RANDOM 2012. pp. 471--482. (2012).

\bibitem{B12}
N. H. Bshouty. Testers and their applications. Electronic
Colloquium on Computational Complexity (ECCC) 19:11, (2012).
ITCS 2014. pp. 327-352. (2014).

\bibitem{B13}
N. H. Bshouty. Multilinear complexity is equivalent to optimal
tester size. Electronic Colloquium on Computational Complexity
(ECCC) 20:11. (2013).

\bibitem{B13z}
N. H. Bshouty.
Exact learning from membership queries: some techniques, results and new directions.
ALT 2013. pp. 33--52. (2013).

\bibitem{B14b}
N. H. Bshouty.
Linear time constructions of some $d$-restriction problems.
CIAC. pp. 74--88. (2015).

\bibitem{BAK01}
R. Beigel, N. Alon, S. Kasif, M. S. Apaydin, L. Fortnow.
An optimal procedure for gap closing in whole genome shotgun sequencing.
RECOMB 2001. pp. 22-30. (2001).

\bibitem{BB98}
D. Bshouty, N. H. Bshouty. On interpolating arithmetic
read-once formulas with exponentiation. {\it J. Comput. Syst. Sci.}
56(1), pp. 112--124. (1998).

\bibitem{BBBKV00}
A. Beimel, F. Bergadano, N. H. Bshouty, E. Kushilevitz, S.
Varricchio. Learning functions represented as multiplicity
automata. {\it J. ACM.} 47(3), pp. 506--530. (2000).

\bibitem{BBK08}
L. Bisht, N. H. Bshouty, L. Khoury. Learning with errors in
answering to membership queries. {\it J. Comput. Syst. Sci.} 74(1), pp.
2--15. (2008).

\bibitem{BBKT95}
D. J. Balding, W. J. Bruno, E. Knill, D. C. Torney.
A comparative survey of non-adaptive pooling designs.
Genetic Mapping and DNA Sequencing. IMA volumes in mathematics and its applications. Springer-Verlag. pp. 133-155. (1995).

\bibitem{BCGKT96}
N. H. Bshouty, R. Cleve, R. Gavald\`{a}, S. Kannan, C. Tamon.
Oracles and queries that are sufficient for exact learning.
{\it J. Comput. Syst. Sci.} 52(3), pp. 421--433. (1996).

\bibitem{BC98}
N. H. Bshouty, R. Cleve. Interpolating arithmetic read-once
formulas in parallel. {\it SIAM J. Comput.} 27(2), pp. 401--413. (1998).

\bibitem{BC16}
N. H. Bshouty, A. Costa.
Exact Learning of Juntas from Membership Queries. ALT 2016,
pp. 115--129. (2016).

\bibitem{BG07}
E. Biglieri, L. Gyorfi. Multiple access channels: theory and
practice. IOS Press. (2007).

\bibitem{BG95}
H. Br\"{o}nnimann, M. T. Goodrich.
Almost optimal set covers in finite VC-dimension.
{\it Discrete and Computational Geometry}. 14(4). pp. 463--479. (1995).

\bibitem{BG15}
N. H. Bshouty, A. Gabizon.
Almost optimal cover-free family.  CoRR abs/1507. 07368. (2015). CIAC 2017.

\bibitem{BGHM96}
N. H. Bshouty, S. A. Goldman, T. R. Hancock, S. Matar. Asking
questions to minimize errors. {\it J. Comput. Syst. Sci.} 52(2), pp.
268--286. (1996).

\bibitem{BGIKS}
R. Berinde, A. C. Gilbert, P. Indyk, H. J. Karloff, M. J. Strauss.
Combining geometry and combinatorics: A unified approach to sparse signal recovery.
CoRR abs/0804.4666. (2008).

\bibitem{BGK05}
M. Bouvel, V. Grebinski, G. Kucherov.
Combinatorial search on graphs motivated by bioinformatics applications: a brief survey.
WG 2005.
pp. 16--27. (2005)

\bibitem{BGV05}
A. De Bonis, L. Gasieniec, U. Vaccaro.
Optimal two-stage algorithms for group testing problems.
{\it SIAM J. Comput.} 34(5). pp. 1253--1270. (2005).

\bibitem{BH98}
N. H. Bshouty, L. Hellerstein.
Attribute-efficient learning in query and mistake-bound models.
{\it J. Comput. Syst. Sci.} 56(3). pp. 310--319. (1998).

\bibitem{BHH95a}
N. H. Bshouty, T. R. Hancock, L. Hellerstein. Learning Boolean
read-once formulas over generalized bases. {\it J. Comput. Syst. Sci.}
50(3), pp. 521--542. (1995).

\bibitem{BHH95}
N. H. Bshouty, T. R. Hancock, L.
Hellerstein. Learning arithmetic read-once formulas. {\it SIAM J.
Comput.} 24(4), pp. 706--735. (1995).

\bibitem{BHHK94}
N. H. Bshouty, T. R. Hancock, L. Hellerstein, M. Karpinski. An
Algorithm to learn read-once threshold formulas, and
transformations between learning models. {\it Computational Complexity.}
4, pp. 37--61. (1994).

\bibitem{BHKK94}
A. Bar-Noy, F. K. Hwang, H. Kessler, S. Kutten.
A new competitive algorithm for group testing.
{\it Discr. Appl. Math.} 52. pp. 29-–38. (1994).

\bibitem{BHL95}
A. Blum, L. Hellerstein, N. Littlestone.
Learning in the presence of finitely or infinitely many irrelevant attributes.
{\it J. Comput. Syst. Sci.} 50(1). pp. 32--40. (1995).

\bibitem{BHLV09}
M. Bl\"aser, M. Hardt, R. J. Lipton, N. K. Vishnoi.
Deterministically testing sparse polynomial identities of
unbounded degree. {\it Inf. Process. Lett.} 109(3). pp. 187--192.
(2009).

\bibitem{BHS08}
M. Bl\"aser, M. Hardt, D. Steurer. Asymptotically optimal hitting
sets against polynomials. ICALP (1) 2008, pp. 345--356. (2008).

\bibitem{BJT04}
N. H. Bshouty, J. C. Jackson, C. Tamon.
More efficient PAC-learning of DNF with membership queries under the uniform distribution.
{\it J. Comput. Syst. Sci.} 68(1). pp. 205--234. (2004).

\bibitem{BKB95}
D. J. Bruno, E. Knill, D. J. Balding, D. C. Bruce, N. A. Doggett, W. W. Sawhill, R. L. Stalling, C. C. Whitaker, D. C. Torney.
Efficient pooling designs for library screening.
{\it Genomics}, 26. pp. 21-30. (1995).

\bibitem{BM17}
N. H. Bshouty, W. Makhoul.
On polynomial time construction of minimum height decision tree. In prperation.

\bibitem{BM02}
N. H. Bshouty, Y. Mansour. Simple learning algorithms for decision
trees and multivariate polynomials.
{\it SIAM J. Comput.} 31(6). pp. 1909--1925. (2002).

\bibitem{BM10}
N. H. Bshouty, H. Mazzawi.
Optimal query complexity for reconstructing hypergraphs.
STACS 2010. pp. 143--154. (2010).

\bibitem{BM11}
N. H. Bshouty, H. Mazzawi.
On parity check $(0,1)$-matrix over $Z_p$.
SODA 2011, pp. 1383--1394. (2011).

\bibitem{BM11b}
N. H. Bshouty, H. Mazzawi.
Reconstructing weighted graphs with minimal query complexity.
{\it Theor. Comput. Sci.} 412(19). pp. 1782--1790. (2011).

\bibitem{BM12}
N. H. Bshouty, H. Mazzawi.
Toward a deterministic polynomial time algorithm with optimal additive query complexity.
{\it Theor. Comput. Sci.} 417. pp. 23--35. (2012).

\bibitem{BM95}
W. J. Bultman, W. Maass.
Fast identification of geometric objects with membership queries.
{\it Inf. Comput.} 118(1). pp. 48--64. (1995).

\bibitem{BR95}
A. Blum, S. Rudich. Fast learning of $k$-term DNF formulas with
queries. {\it J. Comput. Syst. Sci.} 51(3). pp. 367--373. (1995).

\bibitem{BT88}
M. Ben-Or, P. Tiwari. A deterministic algorithm for sparse
multivariate polynomial interpolation. STOC 1988. pp. 301–-309.
(1988).

\bibitem{BV98}
A. De Bonis, U. Vaccaro.
Improved algorithms for group testing with inhibitors.
{\it Inform. Process. Lett.} 67, pp. 57--64. (1998)

\bibitem{C64}
D. G. Cantor.
Determining a set from the cardinalities of its intersections with other sets.
{\it Canadian Journal of Mathematics.} 16. pp. 94--97. (1964).

\bibitem{Ca79a}
J. Capetanakis. Tree algorithms for packet broadcast channels.
{\it IEEE Transactions on Information Theory.} 25(5), pp. 505--515. (1979).

\bibitem{Ca79b}
J. Capetanakis. Generalized TDMA: The multi-accessing tree protocol.
{\it IEEE Transactions
on Communications.} 27(10). pp. 1476--1484. (1979).

\bibitem{C10}
M. Cheraghchi.
Derandomization and group testing. CoRR abs/1010.0433. (2010).

\bibitem{C13}
M. Cheraghchi.
Noise-resilient group testing: Limitations and constructions.
{\it Discrete Applied Mathematics}. 161(1-2), pp. 81--95. (2013).

\bibitem{C11}
Y. Cheng.
An efficient randomized group testing procedure to determine the number of defectives.
{\it Oper. Res. Lett.} 39(5). pp. 352--354. (2011).

\bibitem{Ch13}
S.-S. Choi.
Polynomial Time optimal query algorithms for finding graphs with arbitrary real weights.
COLT 2013. pp. 797--818. (2013).

\bibitem{C79}
V. Chvatal.
A greedy heuristic for the set-covering problem.
{\it Mathematics of Operations Research.} 4(3). pp. 233--235. (1979).

\bibitem{Ci13}
F. Ciccalese. Group testing.
Monographs in Theoretical Computer Science. An EATCS Series 2013.
pp. 139-173. (2013).

\bibitem{Cic13}
F. Cicalese.
Fault-tolerant search algorithms - Reliable computation with unreliable information.
Monographs in Theoretical Computer Science. An EATCS Series, Springer. (2013).

\bibitem{CCFS11}
H. Chang, H.-B. Chen, H.-L. Fu, C.-H. Shi.
Reconstruction of hidden graphs and threshold group testing.
{\it J. Comb. Optim.} 22(2), pp. 270--281. (2011).

\bibitem{CD08}
Y. Cheng, D. Z. Du.
New constructions of one and two stage pooling designs.
{\it J. Comput. Biol.} 15, pp. 195-–205. (2008).

\bibitem{CDGK91}
M. Clausen, A. W. M. Dress, J. Grabmeier, M. Karpinski. On
zero-testing and interpolation of $k$-sparse multivariate
polynomials over finite fields. {\it Theor. Comput. Sci.} 84(2), pp.
151--164. (1991).

\bibitem{CDX14}
Y. Cheng, D.-Z. Du, Y. Xu.
A zig-zag approach for competitive group testing.
{\it INFORMS Journal on Computing}. 26(4), pp. 677--689. (2014).

\bibitem{CFS14}
H. Chang, H-L. Fu, C-H. Shih.
Learning a hidden graph. {\it Optim. Lett.}. pp. 2341--2348. (2014).

\bibitem{CGHT11}
A. Chattopadhyay, R. Gavald$\grave{{\rm a}}$, K. A. Hansen, D.
Th$\acute{{\rm e}}$rien. Learning read-constant polynomials of
constant degree modulo composites.
{\it Theory Comput. Syst.} 55(2). pp. 404--420. (2014).

\bibitem{CGL97}
J. Castro, D. Guijarro, V. Lav\'{i}n.
Learning nearly monotone k-term DNF. EuroCOLT 1997. pp. 162--170. (1997).

\bibitem{CH08}
H-B. Chen, F. K. Hwang.
A survey on non-adaptive group testing algorithms through the angle of decoding.
{\it J. Comb. Optim.} 15(1), pp. 49--59. (2008).

\bibitem{CJBJ13}
S. Cai, M. Jahangoshahi, M. Bakshi, S. Jaggi.
GROTESQUE: noisy group testing (quick and efficient). CoRR abs/1307.2811. (2013).

\bibitem{CK08}
S. S. Choi, J. H. Kim. Optimal query complexity bounds for finding graphs.
STOC 2008. pp. 749–758. (2008).

\bibitem{CK10}
S. S. Choi, J. H. Kim.
Optimal query complexity bounds for finding graphs.
{\it Artificial
Intelligence}. 174(9–10). pp. 551--569. (2010).

\bibitem{CKW06}
J. Cheng, K. Kamoi, Y. Watanabe. User identification by signature
code for noisy multiple-access adder channel. IEEE
International Symposium on Information Theory. pp. 1974--1977.
(2006).

\bibitem{CKW11}
X. Chen, N. Kayal, A. Wigderson. Partial derivatives in
arithmetic complexity (and beyond). Foundation and Trends in Theoretical
Computer Science 6, 1-2. pp. 1--138. (2011).

\bibitem{CLY13}
F. Y. L. Chin, H. C. M. Leung, S.-M. Yiu.
Non-adaptive complex group testing with multiple positive sets.
{\it Theor. Comput. Sci.} 505, pp. 11--18. (2013).

\bibitem{CM66}
D. G. Cantor, W. H. Mills.
Determination of a subset from certain combinatorial
properties. {\it Canadian Journal of Mathematics}. 18. pp. 42--48. (1966).

\bibitem{CM99}
F. Cicalese, D. Mundici.
Optimal binary search with two unreliable tests and minimum adaptiveness.
ESA 1999. pp. 257--266. (1999).

\bibitem{CM00}
F. Cicalese, D. Mundici.
Perfect two-fault tolerant search with minimum adaptiveness.
{\it Adv. Appl. Math.} 25. pp. 65--101. (2000).

\bibitem{CM03}
G. Cormode, S. Muthukrishnan. What's hot and what's not: Tracking most frequent
items dynamically. Proceedings of the 22nd ACM Symposium on Principles of Database
Systems, ACM, New York, 2003. pp. 296--306. (2003).

\bibitem{CMV00}
F. Cicalese, D. Mundici, U. Vaccaro.
Least adaptive optimal search with unreliable tests.
Proc. Scand. Workshop on Algorithm Theory(SWAT 2000), Lecture Notes in Computer Science. 1851.
Springer. pp. 547--562. (2000).

\bibitem{CZ13}
A. Yu. Chirkov, N. Yu. Zolotykh.
On the number of irreducible points in polyhedra. arXiv:1306.4289. (2013).

\bibitem{D98}
P. Damaschke.
Computational Aspects of parallel attribute-efficient learning.
ALT 1998. pp. 103--111, (1998).

\bibitem{D00}
P. Damaschke. Adaptive versus non-adaptive attribute-efficient
learning. {\it Machine Learning}. 41(2). pp. 197--215. (2000).

\bibitem{D03}
P. Damaschke. On parallel attribute-efficient learning. {\it J. Comput.
Syst. Sci.} 67(1). pp. 46--62. (2003).

\bibitem{D96}
C. Domingo. Exact learning of subclasses of CDNF
Formulas with Membership Queries. COCOON 1997. pp. 516--520.
(1997).

\bibitem{D43}
R. Dorfman.
The detection of defective members of large populations. {\it Ann. Math. Statist.}
14. pp. 436--440. (1943).

\bibitem{DGV05}
A. De Bonis, L. Gasieniec, U. Vaccaro.
Optimal two-stage algorithms for group testing problems.
{\it SIAM J. Comput.} 34(5). pp. 1253--1270. (2005).

\bibitem{DH00}
D. Du, F. K Hwang. Combinatorial group testing and its
applications. World Scientific Pub Co Inc. (2000).

\bibitem{DH06}
D. Du, F. K Hwang. Pooling design and nonadaptive group testing:
important tools for DNA sequencing. World Scientific Publishing
Company. (2006).

\bibitem{DM10}
P. Damaschke, A. S. Muhammad. Competitive group testing and learning
hidden vertex covers with minimum adaptivity. {\it Discrete Mathematics,
Algorithms and Applications}. 2. pp. 291--311. (2010).

\bibitem{DM10b}
P. Damaschke, A. S. Muhammad. Bounds for nonadaptive group tests to
estimate the amount of defectives. Proc. 4th International Conference on
Combinatorial Optimization and Applications. Lecture Notes in Computer
Science, 6509. pp. 117-–130. (2010).

\bibitem{DM12}
P. Damaschke, A. S. Muhammad.
Randomized Group Testing Both Query-Optimal and Minimal Adaptive. SOFSEM 2012, pp. 214--225. (2012).

\bibitem{DMP99}
C. Domingo, N. Mishra, L. Pitt. Efficient read-restricted monotone
cnf/dnf dualization by learning with membership queries. {\it Machine
Learning}. 37(1). pp. 89--110. (1999).

\bibitem{DP94}
D. Z. Du, H. Park.
On competitive group testing,
{\it SIAM J. Comput.} 23. pp. 1019–-1025. (1994).

\bibitem{DR82}
A. G. D'yachkov, V. V. Rykov.
Bounds on the length of disjunctive codes.
{\it Problemy Peredachi Informatsii}. 18, pp. 7--13. (1982).

\bibitem{DS14}
I. Dinur, D. Steurer.
Analytical approach to parallel repetition. STOC 2014. pp. 624-633. (2014).

\bibitem{DVMT02}
A. G. D'yachkov, P. Vilenkin , A. Macula, D. Torney.
Families of finite sets in which no intersection of $\ell$ sets is covered by the union of $s$ others.
{\it J. Comb Theory Ser A}. 99. pp. 195--218. (2002).

\bibitem{DVPS14}
A. G. D'yachkov, I. V. Vorob'ev, N. A. Polyansky, V. Yu. Shchukin.
Bounds on the rate of disjunctive codes.
Problems of Information Transmission, 50(1). pp. 27--56. (2014).

\bibitem{DVPS16}
A. G. D'yachkov, I. V. Vorob'ev, N. A. Polyansky, V. Yu. Shchukin.
Erratum to: "Bounds on the rate of disjunctive codes".
Problems of Information Transmission.
52(2). pp. 200--200. (2016).

\bibitem{DXSC94}
D. Z. Du, G. Xue, S. Z. Sun, S. W. Cheng.
Modifications of competitive group
testing. {\it SIAM J. Comput.} 23. pp. 82–-96. (1994).

\bibitem{EGH07}
D. Eppstein, M. T. Goodrich, D. S. Hirschberg.
Improved combinatorial group
testing algorithms for real-world problem sizes. {\it SIAM J. Comput.} 36.
pp. 1360-–1375. (2007).

\bibitem{ER63}
P. Erd\"{o}s, A. R\'{e}nyi.
On two problems of information theory.
{\it Publications of the Mathematical Institute of the Hungarian Academy of Sciences}.
8, pp. 241--254. (1963).

\bibitem{ERS05}
G. Even, D. Rawitz, S. Shahar.
Hitting sets when the VC-dimension is small.
{\it Inf. Process. Lett.} 95(2), pp. 358--362. (2005).

\bibitem{F98}
U. Feige.
A threshold of $\ln n$ for approximating set cover.
{\it Journal of the ACM}, 45 (4). pp. 634–-652. (1998).

\bibitem{F07}
V. Feldman. Attribute-efficient and non-adaptive learning of
parities and DNF expressions. {\it JMLR}. 8, pp. 1431--1460. (2007).

\bibitem{F96}
Z. F\"{u}redi.
On $r$-cover free families.
{\it Journal of Combinatorial Theory A}. 73, pp. 172--173. (1996).

\bibitem{FGMP96}
M. Frazier, S. A. Goldman, N. Mishra, L. Pitt. Learning from a
consistently ignorant teacher. {\it J. Comput. Syst. Sci.} 52(3). pp.
471--492. (1996).

\bibitem{FKKM97}
M. Farach, S. Kannan, E. H. Knill, S. Muthukrishnan.
Group testing problems
with sequences in experimental molecular biology. in Proceedings of Compression and
Complexity of Sequences, B. Carpentieri, A. De Santis, U. Vaccaro, J. Storer, eds.,
IEEE, Piscataway, NJ, 1997, pp. 357--367. (1997).

\bibitem{FLS14}
F. V. Fomin, D. Lokshtanov, S. Saurabh.
Efficient computation of representative sets with applications in parameterized and exact algorithms.
SODA 2014, pp. 142--151. (2014).

\bibitem{G72}
M. Garey.
Optimal binary identification procedures.
{\it SIAM J. Appl. Math.}, 23, pp. 173--186. (1972).

\bibitem{G98}
V. Grebinski. On the power of additive combinatorial search model. In Proceedings of
the 4th Annual International Conference on Computing and Combinatorics, COCOON
1998. pp. 194--203.  (1998).

\bibitem{G90}
W. Guzicki.
Ulam's searching game with two lies.
{\it J. Comb. Theory, Ser. A.} 54(1). pp. 1--19. (1990).

\bibitem{GDPK06}
E. E. Gasanov, O. A. Dolotova, G. R. Pogosyan V. B. Kudryavtsev.
Theory testing logic devices (Teoriya testirovaniya logicheskikh ustroystv).
(in Russian), (2006).

\bibitem{GHTW06}
H. Gao, F. K. Hwang, M. T. Thai, W. Wu, T. Znati.
Construction of d(H)-disjunct matrix for group testing in hypergraphs.
{\it J. Comb. Optim.} 12(3), pp. 297--301. (2006).

\bibitem{GI10}
A. Gilbert, P. Indyk.
Sparse recovery using sparse matrices.
Proceedings of the IEEE.
98(6), pp. 937--947, (2010).

\bibitem{GK91}
D. Grigoriev, M. Karpinski. Algorithms for sparse rational
interpolation. ISSAC 1991. pp. 7--13, (1991).

\bibitem{GK95}
S. A. Goldman, M. J. Kearns. On the complexity of teaching.
{\it J. of Comput. Syst. Sci.} 50, pp. 20--31. (1995).

\bibitem{GK97}
V. Grebinski, G. Kucherov.
Optimal query bounds for reconstructing a hamiltonian cycle in complete graphs.
ISTCS 1997, pp. 166--173. (1997).

\bibitem{GK98}
V. Grebinski, G. Kucherov.
Reconstructing a hamiltonian cycle by querying the graph: application to dna physical mapping.
{\it Discrete Applied Mathematics.} 88(1-3), pp. 147--165, (1998).


\bibitem{GK00}
V. Grebinski, G. Kucherov.
Optimal reconstruction of graphs under the additive model.
{\it Algorithmica.} 28, pp. 104--124, (2000).

\bibitem{GKS93}
S. A. Goldman, M. J. Kearns, R. E. Schapire. Exact identification
of read-once formulas using fixed points of amplification
functions. {\it SIAM J. Comput.} 22(4). pp. 705--726, (1993).

\bibitem{GKS90}
D. Grigoriev, M. Karpinski, M. F. Singer. Interpolation of sparse
rational functions without knowing bounds on exponents. FOCS 1990.
pp. 840--846, (1990).

\bibitem{GKS90b}
D. Grigoriev, M. Karpinski, M. F. Singer. Fast parallel algorithms
for sparse multivariate polynomial interpolation over finite
fields. {\it SIAM J. Comput.} 19(6), pp. 1059--1063, (1990).

\bibitem{GL89}
O. Goldreich and L. A. Levin, A Hard-core Predicate for all
One-way Functions. STOC 1989. pp. 25--32, (1989).

\bibitem{GLR97}
D. Guijarro, V. Lav\'{i}n, V. Raghavan.
Learning monotone term decision lists. EuroCOLT 1997. pp. 16--26. (1997).

\bibitem{GRS89}
S. A. Goldman, R. L. Rivest, R. E. Schapire.
Learning binary relations and total orders (Extended Abstract).
FOCS 1989. pp. 46--51. (1989).

\bibitem{GS92}
W. I. Gasarch, C. H. Smith. Learning via queries. {\it J. ACM.} 39(3),
pp. 649--674. (1992).

\bibitem{GT09}
R. Gavald$\grave{{\rm a}}$, D. Th$\acute{{\rm e}}$rien. An
algebraic perspective on Boolean function learning. ALT 2009. pp.
201--215, (2009).

\bibitem{H66}
G. Hansel. Sur le nombre des fonctions Boolenes monotones den variables.
Comptes Rendus Hebdomadaires des S\'{e}ances de l'Acad\'{e}mie des Sciences, S\'{e}rie B~5.

\bibitem{H07}
M. Hardt.
Testing polynomial identities with fewer random bits.
Master's Thesis. (2007).

\bibitem{H94}
T. Heged\"{u}s.
Combinatorial results on the complexity of teaching and learning.
MFCS 1994, pp. 393--402. (1994).

\bibitem{H94b}
T. Heged\"{u}s.
Geometrical Concept Learning and Convex Polytopes. COLT 1994. pp. 228--236.

\bibitem{H95}
T. Heged\"{u}s.
Generalized teaching dimensions and the query complexity of learning.
In Proceedings of the Eighth Annual Conference on Computational Learning Theory (COLT). pp. 108--117. (1995).

\bibitem{H99}
T. Hofmeister.
An application of codes to attribute-efficient learning.
EuroCOLT. pp. 101--110. (1999).

\bibitem{H72}
F.K. Hwang. A method for detecting all defective members in a population
by group testing. {\it Journal of the American Statistical Association}. 67,
pp 605–-608. (1972).

\bibitem{H00}
F. K. Hwang. Random $k$-set pool designs with distinct columns.
{\it Probability in Engneering and Informational Sciences}, 14. pp 49-56. (2000).

\bibitem{HC07}
F. K. Hwang, F. H. Chang.
The identification of positive clones in a general inhibitor model.
{\it J. Comput. Syst. Sci.} 73(7), pp. 1090--1094. (2007).

\bibitem{HI97}
T. Heged\"{u}s, P. Indyk.
On learning disjunctions of zero-one threshold functions with queries.
ALT 1997. pp. 446--460. (1997).

\bibitem{HK89}
L. Hellerstein, M. Karpinski. Learning read-once formulas using
membership queries. COLT 1989. pp. 146--161. (1989).

\bibitem{HL01}
F. K. Hwang, Y. C. Liu. The expected number of unresolved positive clones in various random pool designs.
{\it Probability in Engineering and Informational Sciences}. 15, pp. 57-68. (2001).


\bibitem{HL02}
E. H. Hong, R. E. Ladner.
Group testing for image compression. {\it IEEE Trans. Image
Process.} 11 (2002), pp. 901--911. (2002).

\bibitem{HL03}
F. K. Hwang , Y. C. Liu.
Error-tolerant pooling designs with inhibitors. {\it J. Comput. Biol.} 10. pp. 231--236, (2003).

\bibitem{HR76}
L. Hyafil, R.L. Rivest.
Constructing optimal binary decision trees is NP-complete.
{\it Inform. Process. Lett.} 5. pp. 15--17. (1976).

\bibitem{HW87}
D. Haussler, E. Welzl. $\epsilon$-Nets and simplex range queries.
{\it Discrete and Computational Geometry}, 2, pp. 127--151. (1987).

\bibitem{I02}
P. Indyk.
Explicit constructions of selectors and related combinatorial structures, with applications.
SODA 2002. pp. 697--704. (2002).

\bibitem{IR08}
P. Indyk, M. Ruzic.
Near-optimal sparse recovery in the L1 norm. FOCS 2008.
pp. 199--207, (2008).

\bibitem{J97}
J. C. Jackson. An Efficient membership-query algorithm for
learning DNF with respect to the uniform distribution. {\it J. Comput.
Syst. Sci.} 55(3), pp. 414--440. (1997).

\bibitem{JLSW11}
J. C. Jackson, H. K. Lee, R. A. Servedio, A. Wan.
Learning random monotone DNF. {\it Discrete Applied Mathematics}.
159(5). pp. 259--271. (2011).

\bibitem{JS06}
J. C. Jackson, R. A. Servedio.
On learning random dnf formulas under the uniform distribution.
{\it Theory of Computing}. 2(1). pp. 147--172. (2006).

\bibitem{JSS99}
J. Jackson, E. Shamir, C. Shwartzman. Learning with queries
corrupted by classification noise. {\it Discrete Applied Mathematics}.
92,2--3. pp. 157--175, (1999)

\bibitem{K72}
R. M. Karp. Reducibility among combinatorial problems.
{\it Complexity and Computer Computations}. pp. 85--103. (1972).

\bibitem{K73}
G. O. H. Katona. Combinatorial search problems.
A survey of combinatorial theory, North Holland. pp. 285--308. (1973).

\bibitem{K03}
M. J. Kearns. Efficient noise-tolerant learning from statistical
queries. {\it J. ACM.} 45(6), pp. 983--1006. (1998).

\bibitem{Ki12}
J. H. Kim. Finding weighted graphs by combinatorial search.
arXiv:1201.3793, (2012).

\bibitem{K65}
V. Korobkov. On monotone functions of algebra of logic. Prob.Cyb. 13. (1965).

\bibitem{K99}
C. Kuhlmann. On teaching and learning intersection-closed concept classes.
EuroCOLT 1999. pp. 168--182. (1999).

\bibitem{K97}
E. Kushilevitz. A simple algorithm for learning $O (\log n)$-term
DNF. {\it Inf. Process. Lett.} 61(6), pp. 289--292. (1997).

\bibitem{K12}
G. Kutyniok. Compressed Sensing. Theory and applications. CoRR
abs/1203.3815, (2012).

\bibitem{KI03}
V. Kabanets, R. Impagliazzo. Derandomizing polynomial identity
tests means proving circuit lower bounds. STOC 2003. pp. 355--364.
(2003).

\bibitem{KL88}
E. Kaltofen, Y. N. Lakshman. Improved sparse multivariate
polynomial interpolation algorithms. ISSAC 1988. pp. 467--474.
(1988).

\bibitem{KL93}
M. J. Kearns, M. Li. Learning in the presence of malicious errors.
{\it SIAM J. Comput.} 22(4), pp. 807--837, (1993).

\bibitem{KM93}
E. Kushilevitz, Y. Mansour. Learning decision trees using the
fourier spectrum. {\it SIAM J. Comput.} 22(6). pp. 1331--1348, (1993).

\bibitem{KMSV10}
Z. S. Karnin, P. Mukhopadhyay, A. Shpilka, I. Volkovich.
Deterministic identity testing of depth-4 multilinear circuits
with bounded top fan-in. STOC 2010. pp. 649--658. (2010).

\bibitem{KP98}
S. Kwek, L. Pitt.
PAC learning intersections of halfspaces with membership queries.
{\it Algorithmica}. 22(1/2). pp. 53--75. (1998).

\bibitem{KS64}
W. H. Kautz, R. C. Singleton. Nonrandom binary superimposed codes.
{\it IEEE Trans. Inform. Theory.} 10(4), pp. 363--377. (1964).

\bibitem{KS72}
D. J. Kleitman, J. Spencer. Families of $k$-independent sets.
{\it Discrete Mathematics.} 6(3), pp.~255--262. (1972).

\bibitem{KS96}
M. Karpinski, I. Shparlinski.
On some approximation problems concerning sparse polynomials over finite fields.
{\it Theor. Comput. Sci.} 157(2), pp. 259--266. (1996).

\bibitem{KS01}
A. Klivans, D. A. Spielman. Randomness efficient identity testing
of multivariate polynomials. STOC 2001, pp. 216--223, (2001).

\bibitem{KS08}
Z. S. Karnin, A. Shpilka. Black box polynomial identity testing of
generalized depth-3 arithmetic circuits with bounded top fan-in.
CCC 2008. pp. 280--291, (2008).

\bibitem{KS09b}
Z. S. Karnin, A. Shpilka.
Reconstruction of generalized depth-3 arithmetic circuits with bounded top fan-in.
IEEE Conference on Computational Complexity 2009. pp. 274--285. (2009).

\bibitem{KS09}
N. Kayal, S. Saraf. Blackbox polynomial identity testing for depth
3 circuits. Electronic Colloquium on Computational Complexity
(ECCC) 16: 32. (2009).

\bibitem{L62}
C. H. Li. A sequential method for screening experimental variables.
{\it J. Amer. Statist. Assoc.} 57, pp. 455--477. (1962).

\bibitem{L64}
B. Lindstr\"{o}m. On a combinatory detection problem I. Publications of the Mathematical
Institute of the Hungarian Academy of Sciences, 9, pp. 195--207. (1964).

\bibitem{L65}
B. Lindstr\"{o}m. On a combinatorial problem in number theory.
{\it Canadian Mathematical
Bulletin}. 8(4), pp. 477--490, (1965).

\bibitem{L75}
B. Lindstr\"{o}m.
Determining subsets by unramified experiments.
In J. N. Srivastava, editor,
A Survey of Statistical Designs and Linear Models, pp. 407--418.
(1975).

\bibitem{LMN93}
N. Linial, Y. Mansour, N. Nisan.
Constant depth circuits, Fourier.
transform and learnability. {\it Journal of the ACM}. 40(3). pp. 607--620, (1993).

\bibitem{LN04}
E. S. Laber, L. T. Nogueira.
On the hardness of the minimum height decision tree problem.
{\it Discrete Applied Mathematics}. 144(1-2), pp. 209--212. (2004).

\bibitem{LWYGW09}
W. Lang, Y. Wang, J. Yu, S. Gao, W. Wu.
Error-tolerant trivial two-stage group testing for complexes using almost separable and almost disjunct matrices. {\it Discrete Math., Alg. and Appl.} 1(2). pp. 235--252. (2009).

\bibitem{M81}
J. L. Massey.
Collision-resolution algorithms and random-access communications. In
G. Longo, editor, Multi-user communications systems, CISM Courses and Lecture Notes.
No. 265, pp. 73--137. (1981).

\bibitem{M10}
H. Mazzawi.
Optimally reconstructing weighted graphs using queries.
SODA 2010. pp. 608--615. (2010).

\bibitem{M82}
B. Moret. Decision trees and diagrams. ACM
Computing Surveys. 14(4), pp. 593–623. (1982).

\bibitem{M70}
L. Moser. The second moment method in combinatorial analysis.
In Combinatorial Structures and Their Applications.
Proceedings of the Calgary International Conference on
Combinatorial Structures and Their Applications held at the University of Calgary. June
1969, pp. 283--384. (1970).

\bibitem{M83}
M. Y. Moshkov.
On conditional tests. {\it Problemy Kibernetiki}, 40, pp. 131-170. (1983).
{\it Sov. Phys. Dokl.} 27(7), pp. 528--530. (1982).

\bibitem{M04}
M. J. Moshkov.
Greedy Algorithm of Decision Tree Construction for Real Data Tables.
Transactions on Rough Sets I. pp. 161-168. (2004).

\bibitem{MP43}
W. S. McCulloch, E. Pitts.
A logical calculus of the ideas imminent in nervous activity.
{\it Bulletin of Mathematical Biophysics}. pp. 541--544. (1943).

\bibitem{MP04}
A. J. Macula , L. J. Popyack.
A group testing method for finding patterns in data.
{\it Discret Appl Math.} 144. pp. 149--157. (2004).

\bibitem{MRY04}
A. J. Macula, V. V. Rykov, S. Yekhanin.
Trivial two-stage group testing for complexes using almost disjunct matrices.
{\it Discrete Applied Mathematics}. 137(1), pp. 97-107. (2004).

\bibitem{MT92}
W. Maass, G. Tur\'{a}n.
Lower bound methods and separation results for on-line learning models.
{\it Machine Learning}. 9. pp. 107--145. (1992).

\bibitem{MV16}
D. Minahan, I. Volkovich.
Complete Derandomization of Identity Testing and Reconstruction of Read-Once Formulas.
Electronic Colloquium on Computational Complexity (ECCC) 23: 171. (2016)

\bibitem{MW04}
X. Ma, R. Wei.
On bounds of cover-free families.
Designs, Codes and Cryptography. 32, pp. 303--321, (2004).

\bibitem{N87}
B. K. Natarajan.
On learning Boolean functions. STOC 1987. pp. 296--304. (1987).

\bibitem{ND00}
H. Q. Ngo, D-Z. Du.
A survey on combinatorial group testing algorithms with applications to DNA library screening. DIMACS Series in Discrete Mathematics and Theoretical Computer Science.
(2000).

\bibitem{NSS95}
M. Naor,  L. J. Schulman,  A. Srinivasan. Splitters and
near-optimal derandomization. FOCS 95. pp.~182--191, (1995).


\bibitem{NZ88}
N. A. Quang, T. Zeisel. Bounds on constant weight binary superimposed codes.
{\it Problems of Control and Information Theory}. 17(4). pp. 223--230. (1988).


\bibitem{O10}
V. V. Osokin.
On learning monotone Boolean functions with
irrelevant variables.
{\it Discrete Math. Appl.}, 20(3), pp. 307--320. (2010).

\bibitem{P87}
A. Pelc. Solution of Ulam's problem on searching with a lie.
{\it J. Comb. Theory, Ser. A}. 44(1). pp. 129--140. (1987).

\bibitem{P87b}
A. Pelc.
Coding with bounded error fraction.
{\it Ars Combin.} 24. pp. 17--22. (1987)

\bibitem{P94}
A. Pelc.
Searching with permanently faulty tests,
{\it Ars Combin.} 38, pp. 65--76. (1994).

\bibitem{P02}
A. Pelc. Searching games with errors-fifty years of
coping with liars. {\it Theoretical Computer Science}. 270. pp. 71--109.
(2002)

\bibitem{P81}
N. Pippenger.
Bounds on the performance of protocols for a multiple-access broadcast
channel.
{\it IEEE Trans. on Information Theory}. 27(2), pp. 145--151. (1981).

\bibitem{PR11}
E. Porat, A. Rothschild.
Explicit nonadaptive combinatorial group testing schemes.
{\it IEEE Transactions on Information Theory}. 57(12), pp. 7982--7989. (2011).

\bibitem{R61}
A. R\'{e}nyi.
On a problem of information theory.
MTA Mat. Kut. Int. Kozl. 6B. pp. 505--516. (1961).

\bibitem{R87}
R. L. Rivest.
Learning decision lists.
{\it Machine Learning}. 2(3). pp. 229--246. (1987)

\bibitem{R07}
R. Roth.
Introduction to coding theory. Cambridge University Press. (2007).

\bibitem{R94}
M. Ruszink\'{o}.
On the upper bound of the size of $r$-cover-free families.
{\it Journal of Combinatorial Theory A}. 66, pp. 302--310. (1994).

\bibitem{RMKWS}
R. L. Rivest, A. R. Meyer, D. J. Kleitman, K. Winklmann, J. Spencer.
Coping with errors in binary search procedures.
{\it J. Comput. Syst. Sci.} 20(3). pp. 396--404. (1980).

\bibitem{RS05}
R. Raz, A. Shpilka. Deterministic polynomial identity testing in
non-commutative models. {\it Computational Complexity}. 14(1), pp. 1--19,
(2005).

\bibitem{RS94}
K. Romanik, C. Smith.
Testing geometric objects.
{\it Comput. Geom.} 4. pp. 157--176 (1994).

\bibitem{RS97}
R. Raz, S. Safra
A sub-constant error-probability low-degree test,
and a sub-constant error-probability PCP characterization of NP. STOC '97. pp. 475--484.(1997).

\bibitem{RS07}
L. Reyzin, N. Srivastava.
Learning and verifying graphs using queries with a focus on edge counting. ALT 2007.
pp. 285--297, (2007).

\bibitem{RV97}
M. Ruszink\'{o}, P. Vanroose.
How an Erd\"{o}s-R\'{e}nyi-type search approach gives an explicit
code construction of rate 1 for random access with multiplicity feedback.
{\it IEEE Trans.
on Information Theory}. 43(1), pp. 368--373, (1997).

\bibitem{Sa91}
Y. Sakakibara. On learning from queries and counterexamples in the
presence of noise. {\it Inf. Process. Lett.} 37(5), pp. 279--284.
(1991).

\bibitem{S13}
R. Saptharishi. Unified Approaches to Polynomial Identity
Testing and lower bounds, Ph.D. thesis, Department of CSE, IIT Kanpur,
India, (2013).

\bibitem{S09}
N. Saxena. Progress on polynomial identity testing. {\it Bulletin of
the EATCS}. 99, pp. 49--79. (2009).

\bibitem{S14}
N. Saxena.
Progress on polynomial identity testing - II.
CoRR abs/1401.0976. (2014).

\bibitem{S80}
J. T. Schwartz. Fast probabilistic algorithms for verification of
polynomial identities. {\it Journal of the ACM}. 27(4), pp. 701--717.
(1980).

\bibitem{S01}
R. A. Servedio. On the limits of efficient teachability.
{\it Information Processing Letters}. 79, pp. 267--272. (2001).

\bibitem{S10}
B. Settles. Active learning literature survey. Computer sciences
Technical Report 1648, University of Wisconsin–Madison. (2010).

\bibitem{S61}
M. P. Sh\"{u}tzenberger. On the definition of a family of automata.
{\it Inf. Control}. 4, pp. 245--270. (1961).

\bibitem{S92}
J. Spencer.
Ulam's searching game with a fixed number of lies.
{\it Theor. Comput. Sci.} 95(2). pp. 307--321. (1992).

\bibitem{SB}
G. Seroussi, N. H. Bshouty.
Vector sets for exhaustive testing of logic circuits.
{\it IEEE Transactions on Information Theory}. 34(3). pp. 513--522. (1988).

\bibitem{SDHK91}
S. Salzberg, A. L. Delcher, D. G. Heath, S. Kasif.
Learning with a helpful teacher. IJCAI 1991. pp. 705--711. (1991).

\bibitem{SG59}
M. Sobel, P. A. Groll.
Group testing to eliminate efficiently all defectives in a binomial sample.
{\it Bell System Tech. J.} 38 (1959), pp. 1179--1252. (1959).

\bibitem{SM91}
A. Shinohara, S. Miyano. Teachability in computational learning.
{\it New Generation Computing}. 8(4), pp. 337--348. (1991).

\bibitem{SS63}
S. S\"{o}derberg, H. S. Shapiro.
A combinatory detection problem. {\it American Mathematical
Monthly}. 70, pp. 1066--1070. (1963).

\bibitem{SS96}
R. E. Schapire, L. M. Sellie. Learning sparse multivariate
polynomials over a field with queries and counterexamples.
COLT 1996. pp. 17--26, (1996).

\bibitem{SS11}
N. Saxena, C. Seshadhr. Blackbox identity testing for bounded top
fanin depth-3 circuits: the Field doesn't matter. STOC 2011. pp.
431--440. (2011).

\bibitem{SST10}
R. H. Sloan, B. Sz\"{o}r\'{e}nyi, G. Tur\'{a}n.
Learning Boolean functions with queries.
Encyclopedia of Mathematics and its Applications (No. 134). pp. 221--256. (2010).

\bibitem{ST05}
J. Schlaghoff, E. Triesch.
Improved results for competitive group testing.
{\it Combin. Probab. Comput.} 14. pp. 191--202. (2005)

\bibitem{SY10}
A. Shpilka, A. Yehudayoff. Arithmetic circuits: A survey of recent
results and open questions. Foundations and Trends in Theoretical
Computer Science. 5(3-4), pp. 207--388. (2010).

\bibitem{SV09}
A. Shpilka, I. Volkovich. Improved polynomial identity testing for
read-once formulas. APPROX-RANDOM 09. pp. 700--713, (2009).

\bibitem{SV10}
A. Shpilka, I. Volkovich. Read-once polynomial identity testing.
Electronic Colloquium on Computational Complexity (ECCC). 17: 11.
(2010).



\bibitem{SV11}
S. Saraf, I. Volkovich. Black-box identity testing of depth-4
multilinear circuits. STOC 2011. pp. 421--430. (2011).

\bibitem{SW92}
J. Spencer, P. Winkler. Three thresholds for a liar.
{\it Combin. Probab. Comput.} 1, pp. 81--93. (1992).

\bibitem{SWZ00}
D. R. Stinson, R. Wei, L. Zhu. Some new bounds for cover-free
families. {\it Journal of Combinatorial Theory, Series A.} 90(1), pp.
224--234. (2000).

\bibitem{SZ98}
V. N. Shevchenko, N. Yu. Zolotykh.
Lower bounds for the complexity
of learning half-spaces with membership queries. ALT 1998. 1501. pp.
61--71. (1998).

\bibitem{T99}
D. C. Torney. Sets pooling designs.
{\it Ann. Comb.} 3, pp. 95--101. (1999).

\bibitem{T96}
E. Triesch. A group testing problem for hypergraphs of bounded rank.
{\it Discrete Applied Mathematics}. 66(2), pp. 185-188. (1996).

\bibitem{TM78}
B. Tsybakov, V. Mikhailov. Free synchronous packet access in a broadcast channel with
feedback. Problemy Peredachi Informassi. 14(4), pp. 259--280. (1978).

\bibitem{U76}
S. M. Ulam. Adventures of a mathematician. Scribner's, New York. (1976).

\bibitem{UTW97}
R. Uehara, K. Tsuchida, I. Wegener. Optimal attribute-efficient
learning of disjunction, parity and threshold functions. EuroCOLT
1997. pp. 171--184. (1997).

\bibitem{V79}
L. G. Valiant. Completeness classes in algebra. STOC 1979. pp. 249--261, (1979).

\bibitem{V84}
L. Valiant.
A theory of the learnable.
{\it Communications of the ACM}. 27, pp. 1134--1142. (1984).

\bibitem{V85}
L. G. Valiant. Learning disjunction of conjunctions. IJCAI 1985.
pp. 560--566. (1985).

\bibitem{V16}
I. Volkovich.
A Guide to Learning Arithmetic Circuits. COLT 2016.
pp. 1540--1561. (2016).

\bibitem{W06}
R. Wei. On cover-free families. Manuscript.

\bibitem{W94}
K. Werther. The complexity of sparse polynomial interpolation over
finite fields. {\it Appl. Algebra Eng. Commun. Comput.} 5, pp. 91--103.
(1994).

\bibitem{W1} Guessing games.
Wikipedia, http://en.wikipedia.org/wiki/Guessing$\_$game.

\bibitem{W85}
J. Wolf, Born again group testing: Multiaccess communications.
{\it IEEE Trans. Inform. Theory}. 31, pp. 185--191. (1985).

\bibitem{Z79}
R. Zippel. Probabilistic algorithms for sparse polynomials. In
EUROSAM '79: Proc. Int'l. Symp. on Symbolic and Algebraic
Computation, Lecture Notes in Computer Science, Vol. 72.
Springer-Verlag. pp. 216--226. (1979).

\bibitem{ZS97}
N.Yu. Zolotykh and V. N. Shevchenko. Deciphering threshold functions of $k$-valued
logic. Discrete Analysis and Operations Research. Novosibirsk 2(3), pp. 18--23. (1995).
English translation: Korshunov, A. D. (ed.): Operations Research and Discrete Analysis.
Kluwer Ac. Publ. Netherlands. pp. 321--326. (1997).

\bibitem{GT}
Wikipedia. Group Testing. https:$//$en.wikipedia.org$/$wiki$/$Group$\_$testing.

\end{thebibliography}
\end{document}